\documentclass{article}

\usepackage[preprint]{neurips_2026}

\usepackage[utf8]{inputenc} 
\usepackage[T1]{fontenc}    
\usepackage{hyperref}       
\usepackage{url}            
\usepackage{booktabs}       
\usepackage{amsfonts}       
\usepackage{nicefrac}       
\usepackage{microtype}      
\usepackage{xcolor}         
\usepackage{amsmath, amssymb, amsthm}
\usepackage{mathtools}
\usepackage{bm}
\usepackage{graphicx}
\usepackage{hyperref}
\usepackage{subcaption}
\usepackage{enumitem}

\usepackage{cleveref}

\usepackage{amsmath,amssymb}
\usepackage{algorithm}
\usepackage{algpseudocode}

\theoremstyle{plain}
\newtheorem{theorem}{Theorem}
\newtheorem{lemma}{Lemma}

\newtheorem{corollary}{Corollary}
\theoremstyle{definition}
\newtheorem{definition}{Definition}

\theoremstyle{remark}

\newcommand{\method}{$\gamma$-BIFR }

\title{Beyond Square Roots: Explicit Memory-Efficient Factorization for Multi-Epoch Private Learning}

\author{%
  Nikita P. Kalinin \\
     Institute of Science and Technology Austria \\
   \texttt{nikita.kalinin@ist.ac.at}
   \And
   Aki Rehn\\
   University of Helsinki \\
   \texttt{aki.rehn@helsinki.fi} \\
   \And
   Joel Daniel Andersson \\
   Institute of Science and Technology Austria \\
   \texttt{joel.andersson@ist.ac.at}    
   \AND
   Antti Honkela\\
   University of Helsinki \\
   \texttt{antti.honkela@helsinki.fi} \\
   \And
   Christoph H. Lampert \\
   Institute of Science and Technology Austria \\
   \texttt{chl@ist.ac.at}
}

\begin{document}

\maketitle

\begin{abstract}
Correlated-noise mechanisms are among the most promising approaches for improving the utility of differentially private model training, but rigorous guarantees require explicit, analyzable factorizations, and practical deployment requires memory efficiency. Recent works have developed banded inverse factorizations, which address both requirements by exploiting a banded structure in the correlation matrix. The bandwidth controls the size of the noise buffer used to correlate noise across iterations, and thus governs the tradeoff between utility and memory cost. Existing factorizations highlight this tradeoff: DP-$\lambda$CGD achieves high memory efficiency by using only a one-step noise buffer, but this limits its utility gains, while the banded inverse square root (BISR) factorization exploits larger correlation windows and is asymptotically optimal for large bandwidths but performs poorly at low bandwidths. We propose $\gamma$-BIFR, a unified generalization of both factorizations. In the low-memory, low-bandwidth regime, $\gamma$-BIFR significantly improves RMSE, amplified RMSE, and private training performance, while yielding tighter theoretical guarantees for multi-participation error in multi-epoch training.
\end{abstract}

\section{Introduction}
In the modern world of machine learning, the search for new sources of data has become increasingly important. As much of the publicly available data has already been exhausted, attention is shifting toward private and sensitive data, such as medical records, financial information, and personal communications. However, the use of such data must be accompanied by strong safeguards to protect the privacy of the individuals involved. For this reason, privacy-preserving methods are essential for the responsible development of AI systems.

Differential Privacy \citep{dwork2006calibrating} has become the gold standard as a formal notion of privacy. By varying the privacy parameters, one can choose a trade-off between privacy and utility that is appropriate for the problem at hand. In machine learning, differential privacy is typically achieved through variants of differentially private stochastic gradient descent (DP-SGD) \citep{rajkumar2012differentially, Abadi}, which clip per-sample gradients to bound their influence on the model and add \emph{independent} Gaussian noise to ensure privacy.

One promising way to improve the utility of DP-SGD is to introduce correlation in the noise across iterations. By partially canceling noise added in previous steps, such methods can improve the utility of the resulting model. This linear correlation structure is typically described by a \emph{correlation matrix}, and the resulting approach is known as the matrix factorization mechanism \citep{Li_MF}. Originally developed for linear queries \citep{Li_MF} and continual counting \citep{dwork2010differential, fichtenberger2023constant, andersson2023smooth, andersson2024count}, this method has since been extended to machine learning \citep{Denisov}, including settings with multi-epoch training \citep{choquette2022multi, choquette2023amplified, kalinin2024}.

To introduce correlations in the noise across iterations, one must have access to previously generated noise, for example by storing it. This creates an additional memory and computational burden for correlated-noise methods. Recent work addresses this issue by imposing structural restrictions on the inverse of the correlation matrix, referred to as the strategy matrix, thereby reducing the memory requirement to storing only a small buffer of noise vectors \citep{dvijotham2024efficient, scalingmckenna2024, andersson2024streaming, henzinger2025binned}. One recent approach imposes a banded structure on the correlation matrix, so that only a number of noise vectors equal to the bandwidth minus one must be stored. The resulting factorizations are referred to as banded inverse methods and include the optimization-based BandInvMF \citep{kalinin2025back}, the explicit Banded Inverse Square Root (BISR) \citep{kalinin2025back}, and its application to training with learning-rate scheduling \citep{kalinin2025learningrate}. The main advantage of the banded inverse approach is that it cancels only recently added noise, precisely where the model is most likely to benefit from such cancellation. However, BISR performs well only for relatively large bandwidths. BandInvMF addresses this limitation, but it does not provide theoretical guarantees and performs poorly under amplification by subsampling.

A recent work by \citet{kalinin2026dplambdacgd} introduced DP-$\lambda$CGD, a memory-free and time-efficient factorization that correlates the noise only with one previous iteration. By exploiting the pseudorandom nature of computer-generated noise, the method avoids storing past noise vectors and instead regenerates them when needed. This idea can be applied efficiently to any banded inverse factorization, since such methods rely only on fresh noise. Its main limitation, however, is utility: the proposed method performs worse than alternative memory-efficient approaches that introduce correlations across a larger number of iterations, such as Buffered Linear Toeplitz (BLT) \citep{hasslefree2024}. 

In this paper, we strictly generalize DP-$\lambda$CGD to larger bandwidths by introducing generalized Banded Inverse Fractional Root factorization ($\gamma$-BIFR), which takes an arbitrary matrix power $\gamma$ and recovers BISR when $\gamma = 1/2$. Intuitively, $\gamma$ controls how aggressively previously added noise is canceled: when $\gamma = 0$, the method corresponds to DP-SGD with no noise cancellation, while when $\gamma = 1$, it fully cancels the noise from the previous iteration. Empirically, we observe that the optimal choice of $\gamma$ depends strongly on the available bandwidth. In the small-bandwidth regime, the best-performing value of $\gamma$ is far from $1/2$ and typically much closer to $1$, achieving substantially lower error than BISR under the same bandwidth constraint.

\textbf{Contributions.}

\begin{itemize}[nosep]
    \item We propose a new explicit $\gamma$-BIFR factorization for the low-bandwidth regime, generalizing DP-$\lambda$CGD to larger bandwidths while preserving memory-free noise regeneration and time efficiency; see Section~\ref{sec:gammabifr}.

    \item We prove RMSE guarantees in the multi-participation (multi-epoch) setting; see Theorem~\ref{thm:bifr-multi-participation}.

    \item We analyze amplified RMSE and show that tuning $\gamma$ achieves lower error than existing methods; see Section~\ref{sec:amplified_rmse}.

    \item We empirically validate the method on CIFAR-10 and IMDb, demonstrating accuracy improvements over existing factorization approaches; see Section~\ref{sec:experiments}.
\end{itemize}

\section{Background}

In this paper, we adopt differential privacy as our formal privacy definition. More specifically, we report our privacy guarantees in the form of approximate differential privacy\footnote{The matrix mechanism, as a specific instance of the Gaussian mechanism, naturally satisfies a stronger privacy guarantee, namely Gaussian differential privacy \citep{dong2022gaussian}.}.

\begin{definition}[Approximate Differential Privacy \citep{Dwork_DP_original}]
A randomized algorithm $M$ satisfies $(\varepsilon,\delta)$-differential privacy if, for every pair of datasets $D$ and $D'$ differing in at most one element, and for every measurable subset $S$ of the output space of $M$, the following holds:
\begin{equation}
\Pr[M(D)\in S] \leq e^{\varepsilon}\Pr[M(D')\in S] + \delta.
\end{equation}
\end{definition}

The standard approach to ensuring differential privacy during the training of a machine learning model is Differentially Private Stochastic Gradient Descent (DP-SGD) \citep{Abadi}. Formally, at each step, we sample a batch $B_i$, compute the per-example gradients $g_{i,j}$, and clip them using a predefined clipping norm $\zeta > 0$ to ensure that their $\ell_2$ norm is bounded. We then average the clipped gradients within the batch to obtain an aggregated gradient $\hat{g}_i$. To ensure privacy, we add appropriately scaled Gaussian noise $z_i \in \mathbb{R}^{d}$ with variance $\sigma_{\varepsilon,\delta}^2$, guaranteeing $(\varepsilon,\delta)$-differential privacy. The computation of the noise level $\sigma_{\varepsilon,\delta}$ can be based on the Gaussian mechanism \citep{dwork2006calibrating, balle2018improving}, or can be made more sophisticated by accounting for the randomness introduced by batch sampling (subsampling) \citep{Abadi, koskela2020computing}.

The performance of DP-SGD can be substantially improved, both in the centralized setting (with subsampling) and in the federated learning setting (without subsampling), by using the matrix factorization mechanism \citep{Li_MF, Denisov}. Rather than adding independent Gaussian noise $z_i$ at each iteration, this method introduces noise that is linearly correlated across iterations. The correlation coefficients are collected in a lower triangular matrix, commonly denoted by $C^{-1}$ and referred to as the ``correlation matrix''. Its inverse, $C$, is referred to as the ``strategy'' matrix and is used to determine the noise level required to maintain differential privacy. Given the correlation matrix $C^{-1}$ and an appropriately chosen noise level $\sigma_{\varepsilon,\delta}$, one can train a differentially private model in much the same way as with DP-SGD, but instead by adding correlated noise of the form $\sum_{j=1}^{i} (C^{-1})_{i,j} z_j$. The intuition behind this method is that, when the goal is to continually estimate partial sums of vectors rather than the vectors themselves, one can add more noise to the individual vectors while subtracting parts of the previously added noise in such a way that the total variance is reduced, and can in fact be asymptotically smaller \citep{dwork2010differential}.

The design of the strategy matrix $C$ turns out to be subtle, with multiple works proposing different structures, or ``factorizations'' (see, e.g., the survey by \citealp{pillutla2025correlated}). The most common design is based on the Root Mean Squared Error (RMSE) \citep{Denisov, choquette2022multi, kalinin2024, scalingmckenna2024}. Specifically, let us concatenate all aggregated clipped gradients into a matrix $\hat{G}$ and all noise vectors into a matrix $Z$. Privacy is then ensured by adding correlated noise of the form $\hat{G} + C^{-1}Z$. However, to compute the intermediate models, the gradients must be summed over time. For this purpose, consider the lower triangular matrix $E \in \mathbb{R}^{n \times n}$, referred to as the prefix-sum matrix:
\begin{equation}
    E = \begin{pmatrix}
        1 & 0 & \cdots & 0 \\
        1 & 1 & \cdots & 0 \\
        \vdots & \vdots & \ddots & \vdots \\
        1 & 1 & \cdots & 1
    \end{pmatrix}.
\end{equation}
The partial sums of the gradients are then given by $E(\hat{G} + C^{-1}Z) = E\hat{G} + EC^{-1}Z,$
which is equivalent to adding correlated noise to the intermediate models with correlation matrix $EC^{-1}$. The RMSE objective then suggests minimizing
\begin{equation}
\label{eq:rmse_def_full}
    \sqrt{\mathbb{E}_Z \|EC^{-1}Z\|_F^2 / n} = \sqrt{d} \|EC^{-1}\|_F \cdot \sigma_{\varepsilon,\delta}(C) / \sqrt{n},
\end{equation}
which corresponds to the average variance introduced into the partial sums of the gradients. In the non-amplified setting, the noise level $\sigma_{\varepsilon,\delta}(C)$ is given by the product of the Gaussian mechanism noise multiplier for sensitivity $1$, denoted by $\sigma_{\varepsilon,\delta}$ (can be computed tightly numerically \citep{balle2018improving}), and the global sensitivity of the product $C\hat{G}$, namely
\begin{equation}
    \mathrm{sens}(C) = \sup_{G \sim G'} \|CG - CG'\|_F,
\end{equation}
where the streams $G$ and $G'$ differ in the participation of a single data point or user. To compute this sensitivity, it is sufficient to consider non-adaptive streams, meaning that one only needs to account for the direct participation of data points in the gradient computation and may ignore their indirect influence through the model weights in future iterations. For a formal proof of this statement, we refer to Theorem~2.1 in \citet{Denisov}. To compute the sensitivity, one must first specify the participation pattern, namely the iterations in which a data point is allowed to participate. In the non-amplified setting, we adopt the notion of so-called \emph{$b$-min separation} \citep{choquette2023amplified}, whereby the time gap between consecutive participations is at least $b$. One may further restrict the maximum number of participations to be at most $k$, although it is typically assumed that $k$ is as large as the separation parameter allows. It has been shown that computing the sensitivity for a general lower triangular matrix $C$ and multi-dimensional gradients is NP-hard~\citet{choquette2022multi}. However, for component-wise positive matrices $C$, the sensitivity can be computed efficiently via dynamic programming; see Theorem~3 of \citet{choquette2023correlated}. To further simplify the computation of the sensitivity, we consider the common class of lower triangular Toeplitz matrices with positive, decreasing coefficients along the sub-diagonals, and use the following theorem.

\begin{theorem}[Sensitivity for lower triangular Toeplitz matrices under $b$-min-separation; Theorem~2 from \citep{kalinin2024}]
\label{thm:b-sensitivity}
Let $C$ be a lower triangular Toeplitz matrix
with decreasing non-negative entries
$c_0 \ge c_1 \ge c_2 \ge \dots \ge c_{n-1} \ge 0$ on the diagonals.
Then the \emph{sensitivity} of $C$ in the setting of \emph{$b$-min-separation} is
\begin{align}
\mathrm{sens}_{k,b}(C)
  &= \left\|\sum_{j=0}^{k-1} C_{[\cdot,\, jb]}\right\|_2 = \sqrt{\sum_{i = 0}^{n - 1}\left[\sum_{j = 0}^{\min(k - 1, i/b)}c_{i - jb}\right]^2}.
\label{eq:sens_toeplitz}
\end{align}
\end{theorem}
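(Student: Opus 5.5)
The plan is to reduce the sensitivity to a maximization of a quadratic form over participation vectors, and then show that under the monotonicity hypothesis the maximum is attained by the "earliest, densest" pattern: participations at $0, b, 2b, \dots, (k-1)b$.

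\textbf{Reduction to participation vectors.} A participation pattern is a set $\pi=\{t_1<\dots<t_m\}\subseteq\{0,\dots,n-1\}$ with $m\le k$ and $t_{l+1}-t_l\ge b$. Neighbouring streams differ only in the rows indexed by $\pi$, each by a vector of norm at most $1$ after clipping, so $CG-CG' = \sum_{t\in\pi} C_{[\cdot,t]}u_t^\top$ with $\|u_t\|\le 1$. Then
\[
\|CG-CG'\|_F^2=\sum_{s,t\in\pi}\langle C_{[\cdot,s]},C_{[\cdot,t]}\rangle\langle u_s,u_t\rangle .
\]
Since $C$ has non-negative entries, every Gram entry $\langle C_{[\cdot,s]},C_{[\cdot,t]}\rangle$ is non-negative. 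Hence the expression is maximized by taking all $u_t$ equal to one fixed unit vector. This is the standard argument for non-negative $C$, cf.\ \citet{choquette2023correlated}. The sensitivity is therefore $\max_\pi\|\sum_{t\in\pi}C_{[\cdot,t]}\|_2$. Adding further participations only adds non-negative vectors, so we may take $m$ as large as allowed.

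\textbf{Shifting participations earlier.} By the Toeplitz structure, column $t$ is column $0$ shifted down by $t$, i.e.\ $(C_{[\cdot,t]})_i=c_{i-t}$, with $c_j:=0$ for $j<0$. Define $v_\pi(i)=\sum_{t\in\pi}c_{i-t}$. I will show that replacing $\pi$ by $\pi^*=\{0,b,\dots,(k'-1)b\}$, with $k'=\min(k,\lfloor (n-1)/b\rfloor+1)$, does not decrease $\|v_\pi\|_2$.

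This is done one participation at a time, moving each $t_l$ to $(l-1)b$ in increasing order of $l$; each move is valid since $(l-1)b\le t_l$. Moving a single index $t$ to an earlier position $t'$ changes the column vector from the shifted sequence $(c_{i-t})_i$ to $(c_{i-t'})_i$. Because the sequence $(c_j)$ is non-increasing, $c_{i-t'}\ge c_{i-t}$ holds entrywise whenever $i\ge t$. For $t'\le i<t$, the new entry is non-negative while the old one was $0$. So the column vector increases entrywise. All other summands are unchanged and non-negative, so $v_\pi$ increases entrywise and its norm cannot decrease. Performing this for each participation transforms $\pi$ into $\pi^*$.

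Reading off the entries of $v_{\pi^*}$ gives $v_{\pi^*}(i)=\sum_{j=0}^{\min(k-1,\lfloor i/b\rfloor)}c_{i-jb}$, which is the stated formula. The first expression in the statement is the same vector written as $\sum_j C_{[\cdot,jb]}$, with columns beyond $n-1$ absent.

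\textbf{Main obstacle.} The delicate step is the shifting argument. I need to check that the intermediate patterns stay admissible, or else observe that admissibility of the intermediates is irrelevant since only the final pattern $\pi^*$ matters. I also need to handle the truncation at the bottom of the matrix: shifting a column earlier reveals more entries within the $n$ rows, but all of them are non-negative, so this only helps. The reduction from vector-valued to scalar participations also relies essentially on the non-negativity of $C$, which must be stated explicitly.
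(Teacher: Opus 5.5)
The paper does not prove this statement itself: it is quoted from \citet{kalinin2024}, so there is no internal proof to compare against. Your argument is correct and complete. Non-negativity of the Gram entries $\langle C_{[\cdot,s]},C_{[\cdot,t]}\rangle$ reduces the vector-valued problem to $\max_\pi\|\sum_{t\in\pi}C_{[\cdot,t]}\|_2$, and the Toeplitz shift structure with non-increasing $c_j$ then gives entrywise domination by the pattern $\{0,b,\dots,(k'-1)b\}$.

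You can drop the sequential ``moving'', and with it your admissibility worry. Since $t_l\ge t_1+(l-1)b\ge (l-1)b$, you have $C_{[\cdot,t_l]}\le C_{[\cdot,(l-1)b]}$ entrywise for each $l$, and you can simply sum these inequalities. Also note that $t_m\le n-1$ forces $m\le k'$, so every comparison column actually exists.
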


Since the term $\sqrt{d}\sigma_{\varepsilon,\delta}$ in the error expression \eqref{eq:rmse_def_full} does not depend on the correlation matrix $C^{-1}$, we remove it and define the following objective, which we will refer to as the RMSE, assuming that $E = BC$:
\begin{equation}
\label{eq:rmse_def}
    \mathcal{E}(B, C) := \|B\|_F \cdot \mathrm{sens}(C)/\sqrt{n}.
\end{equation}

Given an explicit way of computing the sensitivity and the RMSE objective \eqref{eq:rmse_def}, one can obtain a matrix $C$ via gradient-based optimization \citep{choquette2023amplified}. There are, however, notable limitations to this approach.  First, the factorization obtained through gradient-based methods is essentially a black box and provides limited theoretical insight and guarantees. Second, in order to correlate the noise across iterations, one must store the noise in a buffer, which requires a significant memory overhead compared with DP-SGD. Several works have addressed both limitations in multi-participation setting, including Banded Inverse Square Root (BISR) \citep{kalinin2025back} and DP-$\lambda$CGD \citep{kalinin2026dplambdacgd}. We now state the formal guarantees that these methods provide for the multi-participation RMSE and memory requirements.

\begin{theorem}[BISR Root Mean Squared Error \citep{kalinin2025back}]
\label{thm:multi_epoch_error_upper_bound}
For any bandwidth $2 \leq p \leq n$ and number of participations (epochs) $1 \leq k \leq \frac{n}{b}$, there exists an explicitly constructed matrix $C^{(p)}$ such that correlating the noise with $(C^{(p)})^{-1}$ requires storing a buffer of size $p-1$ and achieves an RMSE of
\begin{equation}
\mathcal{E}(E(C^{(p)})^{-1}, C^{(p)}) =
O\left(\sqrt{k}\log p + \sqrt{\frac{nk}{b}} + \sqrt{\frac{nk\log p}{p}} + \sqrt{\frac{kp\log p}{b}}\right).
\label{eq:multi_epoch_error_upper_bound}
\end{equation}
\end{theorem}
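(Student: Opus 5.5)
The construction is the banded inverse square root. Let $E^{1/2}$ be the lower triangular Toeplitz matrix with entries $r_j = \binom{2j}{j}4^{-j}$, where $r_j \asymp 1/\sqrt{j+1}$ and $r_j$ is decreasing; it satisfies $(E^{1/2})^2 = E$. Its inverse $E^{-1/2}$ is also Toeplitz, with coefficients $r^{-}_0 = 1$ and $r^{-}_j = -r_j/(2j-1)$, so $|r^-_j| = O(j^{-3/2})$.

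We set $(C^{(p)})^{-1}$ to be the $p$-banded truncation of $E^{1/2}$, keeping coefficients $r_0,\dots,r_{p-1}$. Correlating noise with this banded lower triangular Toeplitz matrix only requires the last $p-1$ noise vectors, which gives the memory claim.

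Then $C^{(p)}$ is the inverse of this banded Toeplitz matrix. Its generating function is $1/\bigl(\sum_{j<p} r_j x^j\bigr)$. We need to show that its coefficients $c_j$ are nonnegative and decreasing, so that Theorem~\ref{thm:b-sensitivity} applies. For this I would use a Kaluza-type argument: the sequence $r_j$ is log-convex, and truncation preserves the relevant sign conditions. We also need quantitative bounds. For $j<p$ the coefficients agree with $r^-_j$, so $|c_j| = O(j^{-3/2})$. For $j \ge p$ we want $c_j \lesssim \frac{1}{\sqrt{p}}$-level decay, or more precisely a tail $\sum_{j\ge p} c_j^2$ that is small. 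Roughly, the truncated series behaves like $E^{1/2}$ plus a perturbation of size $O(p^{-1/2})$, and its inverse has a tail behaving like a geometric decay at scale $p$.

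For $\|B\|_F$ with $B = E(C^{(p)})^{-1}$, each row of $B$ is a prefix sum of the banded coefficients. Row $i$ has entries $\sum_{j\le \min(i,p-1)} r_j \approx \sqrt{\min(i,p)}$ plus the appropriate shifts, so entry $(i,l)$ is $\sum_{j=0}^{\min(i-l,p-1)} r_j \lesssim \sqrt{\min(i-l+1,p)}$. Summing squares gives $\|B\|_F^2 \lesssim \sum_i \sum_{m\le i} \min(m,p)$. Because $B = E^{1/2}\cdot(\text{something near }E^{1/2})$, this has to be done more carefully. A sharper form writes $B = E\,\mathrm{band}_p(E^{1/2})$ and estimates $\|B\|_F^2 \lesssim n\log p + n^2/p$-type terms. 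After dividing by $n$ and multiplying by the sensitivity, these produce the $\sqrt{nk\log p/p}$ and $\log p$ contributions.

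For the sensitivity, we use Theorem~\ref{thm:b-sensitivity}: $\mathrm{sens}^2 = \sum_i \bigl(\sum_{j\le \min(k-1,i/b)} c_{i-jb}\bigr)^2$. Expanding the square gives a diagonal part $k\sum_j c_j^2$ and cross terms $\sum_{j\ne j'} c_{i-jb} c_{i-j'b}$. The diagonal part contributes $O(k\log p)$ or $O(k)$ after pairing with $\|B\|_F$. The cross terms are controlled by the decay of $c$: the $j^{-3/2}$ part sums to $O(k^2/b)$-type terms and the tail part to $O(k^2 p/b)$ terms. Multiplying the pieces $\sqrt{k}$, $\sqrt{k/b}\cdot\sqrt{n}$, and so on by the $\|B\|_F/\sqrt n$ bound and using $\sqrt{x+y}\le\sqrt x+\sqrt y$ yields the four terms of \eqref{eq:multi_epoch_error_upper_bound}.

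The main obstacle is controlling $C^{(p)}$, the inverse of the truncated square root. We need both its nonnegativity and monotonicity, which are required to invoke Theorem~\ref{thm:b-sensitivity}, and its tail decay for $j\ge p$. I would attempt this via the log-convexity/Kaluza lemma for the sign pattern. For the tail, I would use a renewal-type bound: since $\sum_j c_j \cdot(\text{banded series})$ telescopes, the total mass satisfies $\sum_j c_j \le 1/\sum_{j<p} r_j = O(p^{-1/2})$. This yields $\sum_{j\ge p} c_j = O(p^{-1/2})$ with monotone decreasing terms, which is enough for the cross-term estimates.
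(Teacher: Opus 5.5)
\textbf{The construction is inverted, and this breaks the argument rather than just the bookkeeping.} BISR takes $(C^{(p)})^{-1}$ to be the $p$-banded truncation of $E^{-1/2}$, i.e.\ coefficients $(-1)^j\binom{1/2}{j}=1,-\tfrac12,-\tfrac18,\dots$, so that $C^{(p)}$ approximates $E^{1/2}$. This is also how the paper recovers the bound, as the $\gamma=\tfrac12$ case of Corollary~\ref{cor:rmse}. You banded $E^{1/2}$ instead, so your $C^{(p)}$ approximates $E^{-1/2}$.

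\emph{Signs.} As you note yourself, $c_j=r^-_j<0$ for $1\le j<p$ (e.g.\ $c_1=-\tfrac12$). So Theorem~\ref{thm:b-sensitivity} cannot be invoked. Moreover, the Kaluza sign pattern for the reciprocal of a log-convex series is $c_j\le 0$ for $j\ge 1$, which is the opposite of what you need.

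\emph{The bound itself fails.} Your $B=E\,\mathrm{band}_p(E^{1/2})\approx E^{3/2}$ has on its $m$-th subdiagonal the entry $\sum_{j\le\min(m,p-1)}r_j\ge\tfrac12\sqrt{\min(m+1,p)}$. Hence $\|B\|_F^2=\Omega(n^2p)$, and your ``sharper'' estimate $n\log p+n^2/p$ does not hold. Already for $k=1$ the sensitivity is at least $c_0=1$, so the RMSE is $\Omega(\sqrt{np})$. For $p=n=b$, $k=1$ this is $\Omega(n)$, against the claimed $O(\log n)$.

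\emph{Renewal step.} It is internally inconsistent: $\sum_j c_j=O(p^{-1/2})$ with nonnegative terms is impossible when $c_0=1$.

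\textbf{The repair is to swap the roles:} set $(C^{(p)})^{-1}=\mathrm{band}_p(E^{-1/2})$, with $\tilde c_j=(-1)^j\binom{1/2}{j}\le 0$ for $j\ge 1$.

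\emph{Coefficients of $C^{(p)}$.} Nonnegativity of $c_j$ is immediate from the recursion $c_j=-\sum_{i=1}^{\min(p-1,j)}\tilde c_i c_{j-i}$; no Kaluza argument is needed. You get $c_j=r_j$ for $j<p$. Monotonicity still needs its own induction, as in Lemma~\ref{lem:calpha-p-monotone}.

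\emph{Frobenius norm.} The entries of $B$ are prefix sums of $\tilde c$, namely $r_m$ for $m<p$ and the constant $r_{p-1}=O(p^{-1/2})$ beyond. This gives $\|B\|_F^2=O(n\log p+n^2/p)$, as in Lemma~\ref{lem:frob-Bap-bound}.

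\emph{Your renewal idea, corrected.} It now works, with the right magnitude. Since $1-\sum_{1\le j<p}|\tilde c_j|x^j\ge r_{p-1}>0$ on $[0,1]$ and $c_j\ge 0$, you get $\sum_j c_j\le 1/\sum_{j<p}\tilde c_j=1/r_{p-1}\le\sqrt{\pi p}$, not $O(p^{-1/2})$. So $\|C\|_{1\to1}=O(\sqrt p)$. With monotonicity, $\sum_{j\ge p}c_j^2\le c_p\sum_j c_j\le r_{p-1}/r_{p-1}=1$, hence $\|C\|_{1\to2}^2=O(\log p)$. This is a genuinely shorter route than the paper's geometric-tail Lemma~\ref{lem:calpha-p-geometric}.

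\emph{Cross terms.} Your count as stated is too loose: a term $O(k^2p/b)$ in $\mathrm{sens}^2$ produces $k\sqrt{n/b}$ in the error instead of $\sqrt{nk/b}$. What you need is the averaging argument of Lemma~\ref{lem:sens-toeplitz-bound}. Because $q\mapsto\langle C_0,C_q\rangle$ is decreasing,
$\sum_{m\ge1}\langle C_0,C_{mb}\rangle\le\frac1b\sum_{h\ge1}\langle C_0,C_h\rangle\le\frac1{2b}\|C\|_{1\to1}^2$, uniformly in $k$. This gives $\mathrm{sens}_{k,b}^2\le k\|C\|_{1\to2}^2+\frac kb\|C\|_{1\to1}^2=O(k\log p+kp/b)$. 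Multiplying by $\|B\|_F/\sqrt n=O(\sqrt{\log p}+\sqrt{n/p})$ yields exactly the four terms of the stated bound.
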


The same work also establishes a lower bound for $k=\lceil n/b\rceil$, showing that the optimal RMSE over all factorizations is at least $\Omega(k + \sqrt{k}\log n)$. By optimizing the choice of $p$, the BISR factorization achieves this optimal error, but requires a relatively large value of $p$. On the other hand, DP-$\lambda$CGD has a very small memory overhead, requiring the storage of only a single noise vector, but it provides a comparatively weaker error bound.
\begin{lemma}[DP-$\lambda$CGD Root Mean Squared Error; Lemma~2 from \citet{kalinin2026dplambdacgd}]
\label{lem:dplambdacgd}
DP-$\lambda$CGD requires a noise buffer of size $1$ in order to correlate the noise using the explicit matrix $C_{\lambda} = \mathrm{Toep}(1, \lambda, \dots, \lambda^{n - 1})$,
and for $k=\lceil n/b\rceil$ achieves an error of
\begin{equation}
 \inf_{\lambda \in [0, 1)}\mathcal{E}(E C_{\lambda}^{-1}, C_{\lambda}) = O\!\left(k + \sqrt{k}\, n^{1/4}\right).
\end{equation}
\end{lemma}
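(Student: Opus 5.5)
We bound the two factors of $\mathcal{E}(EC_\lambda^{-1},C_\lambda)=\|EC_\lambda^{-1}\|_F\,\mathrm{sens}_{k,b}(C_\lambda)/\sqrt n$ separately, as functions of $\lambda$, and then choose $\lambda$. The matrix $C_\lambda=\mathrm{Toep}(1,\lambda,\dots,\lambda^{n-1})$ is lower triangular Toeplitz with generating function $1/(1-\lambda x)$. Hence $C_\lambda^{-1}=\mathrm{Toep}(1,-\lambda,0,\dots,0)$ is bidiagonal, which is exactly the one-step correlation with a noise buffer of size $1$. The product $B=EC_\lambda^{-1}$ has generating function $(1-\lambda x)/(1-x)$, so its coefficients are $b_0=1$ and $b_j=1-\lambda$ for $j\ge1$. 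Row $i$ of $B$ has squared norm $1+i(1-\lambda)^2$. Summing over rows gives
\[
\|B\|_F^2=n+(1-\lambda)^2\tfrac{n(n-1)}{2},\qquad \|B\|_F=O\bigl(\sqrt n+(1-\lambda)n\bigr).
\]

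For the sensitivity, the entries $\lambda^j$ are non-negative and decreasing, so Theorem~\ref{thm:b-sensitivity} applies. For row $i$, the inner sum is $\sum_{j\le \min(k-1,i/b)}\lambda^{i-jb}\le \sum_{m\ge0}\lambda^{(i\bmod b)+mb}=\lambda^{i\bmod b}/(1-\lambda^b)$. Squaring and summing over $i$, each residue class modulo $b$ occurs about $k=\lceil n/b\rceil$ times, so
\[
\mathrm{sens}^2\le \frac{k}{(1-\lambda^b)^2}\sum_{r<b}\lambda^{2r}\le \frac{k}{(1-\lambda^b)^2(1-\lambda^2)}.
\]
This gives $\mathrm{sens}=O\bigl(\sqrt{k}\,(1-\lambda^b)^{-1}(1-\lambda)^{-1/2}\bigr)$. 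We will also use the trivial bound $\mathrm{sens}\le \sqrt{k}\cdot\|c\|_2\cdot$(crude factor), but the main regime is controlled by the estimate above.

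Now set $1-\lambda=\alpha$ and take $\alpha\ge 1/b$. Then $\lambda^b\le e^{-\alpha b}\le e^{-1}$, so $(1-\lambda^b)^{-1}=O(1)$ and $\mathrm{sens}=O(\sqrt{k/\alpha})$. The error is therefore
\[
O\!\left(\frac{(\sqrt n+\alpha n)\sqrt{k}}{\sqrt{n\alpha}}\right)=O\!\left(\sqrt{k/\alpha}+\sqrt{k\alpha n}\right).
\]
Balancing the two terms suggests $\alpha=n^{-1/2}$, which gives $O(\sqrt k\, n^{1/4})$. This choice is admissible only when $n^{-1/2}\ge 1/b$, i.e.\ $b\ge\sqrt n$.

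In the other case, $b<\sqrt n$, we take $\alpha=1/b$. The error becomes $O(\sqrt{kb}+\sqrt{kn/b})=O(\sqrt{kb}+k)$, using $n/b\approx k$. Since $b<\sqrt n$, we have $\sqrt{kb}\le \sqrt k\,n^{1/4}$. Combining the two cases yields $O(k+\sqrt k\,n^{1/4})$, and the infimum over $\lambda$ is at most this value.

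The main obstacle is the sensitivity estimate: the geometric-series bound $\lambda^{i\bmod b}/(1-\lambda^b)$ must be made rigorous with the exact truncation $\min(k-1,i/b)$ and the boundary rows. I would handle it by bounding the row sums from above as shown, which only overcounts and so preserves the upper bound. The remaining steps are routine algebra.
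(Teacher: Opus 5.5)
Your proof is correct. The paper does not prove this lemma itself; it quotes it from \citet{kalinin2026dplambdacgd}. The closest in-paper argument is the second branch of \Cref{thm:bifr-multi-participation} at $p=2$. There $(C_\gamma^{(2)})^{-1}=\mathrm{Toep}(1,-\gamma,0,\dots,0)$, so $C_\gamma^{(2)}=C_\lambda$ with $\lambda=\gamma=1-2/\sqrt n$. That branch gives $O(\sqrt k\,n^{1/4}+\sqrt{nk/b})$, which is $O(\sqrt k\,n^{1/4}+k)$ for $k=\lceil n/b\rceil$ (and $n\ge 64$).

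The Frobenius part is essentially yours; you compute it exactly. The real difference is the sensitivity. The paper uses \Cref{lem:sens-toeplitz-bound}, which for $C_\lambda$ gives $\mathrm{sens}_{k,b}(C_\lambda)^2\le \frac{k}{1-\lambda^2}+\frac{k}{b(1-\lambda)^2}$. There the cross-participation cost enters additively, as $\sqrt{k/b}/\alpha$ with $\alpha=1-\lambda$, so the single choice $\alpha=\Theta(n^{-1/2})$ works for every $b$ and every $k\le n/b$.

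Your residue-class geometric series is more elementary and self-contained: it needs no averaging argument and no geometric-tail lemma. As written, though, it places the cross-participation cost in the factor $(1-\lambda^b)^{-1}\approx(\alpha b)^{-1}$. When $\alpha b<1$ this is looser than the paper's term by $(\alpha b)^{-1/2}$, and that is the only reason you need the case split with the fallback $\alpha=1/b$. The looseness comes from $\sum_{r<b}\lambda^{2r}\le 1/(1-\lambda^2)$. If you also use $\sum_{r<b}\lambda^{2r}\le b$ together with $1-\lambda^b\ge(1-e^{-1})\min\{1,\alpha b\}$, your bound becomes $\mathrm{sens}^2=O\bigl(k/\alpha+k/(\alpha^2 b)\bigr)$. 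That is exactly the operator-norm bound, and the split disappears.

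Your count of each residue class by $\lceil n/b\rceil$ uses $k=\lceil n/b\rceil$ essentially, since it ignores the truncation at $k-1$. The paper's route keeps that truncation and so gives the $\sqrt{nk/b}$ dependence for smaller $k$ too; for this lemma the difference is harmless.

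Minor cleanups:
\begin{itemize}
\item Each residue $r$ occurs exactly $\lceil (n-r)/b\rceil\le k$ times, so write ``at most'' rather than ``about''.
\item Drop the unused remark about a ``trivial bound \ldots (crude factor)''.
\item The obstacle you flag is already resolved. The exponents $i-jb$, for $0\le j\le\min(k-1,\lfloor i/b\rfloor)$, are distinct elements of $\{(i\bmod b)+mb: m\ge 0\}$. Hence the row bound $\lambda^{i\bmod b}/(1-\lambda^b)$ is rigorous as stated, including at the boundary rows.
\end{itemize}
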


In this work, we aim to generalize both methods and combine their respective advantages. From BISR, we inherit asymptotic optimality in the large-bandwidth regime, while from DP-$\lambda$CGD, we obtain low error in the low-bandwidth regime. In this way, we combine the strengths of both methods.
    
For the setting with amplification by subsampling, we adopt the efficient Balls-in-Bins subsampling scheme together with the Monte Carlo sampling analysis of \citet{choquette2024near}. Each datapoint is assigned uniformly at random to one of the iterations in an epoch, and the same assignment pattern is repeated across epochs. The accountant then returns the noise level $\sigma_{\varepsilon,\delta}(C)$, which can be used to define the amplified Root Mean Squared Error, a more suitable metric when amplification is employed. In this work, we study the amplified RMSE and show that it yields an improvement over existing methods.

\section{$\gamma$-Banded Inverse Fractional Root (\method)}
\label{sec:gammabifr}

\begin{figure}[t]
\centering
\begin{subfigure}[t]{0.32\textwidth}
    \centering
    \includegraphics[width=\textwidth]{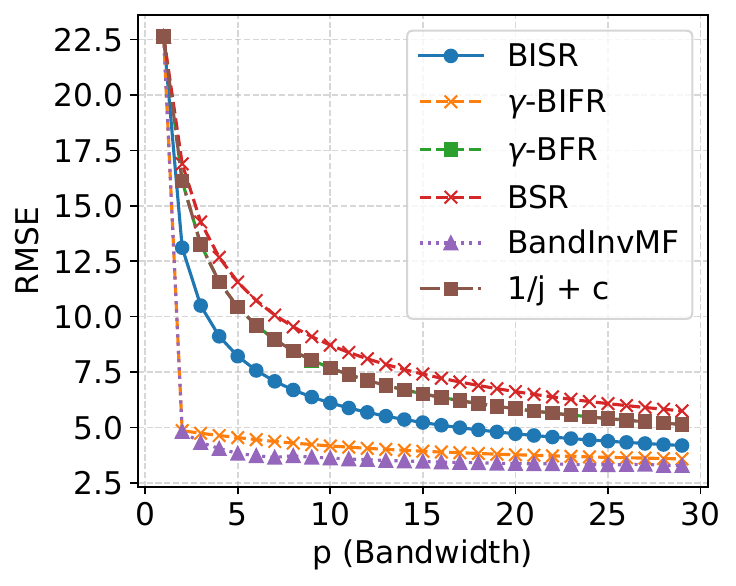}
    \caption{ $k=1$}
\end{subfigure}
\hfill
\begin{subfigure}[t]{0.32\textwidth}
    \centering
    \includegraphics[width=\textwidth]{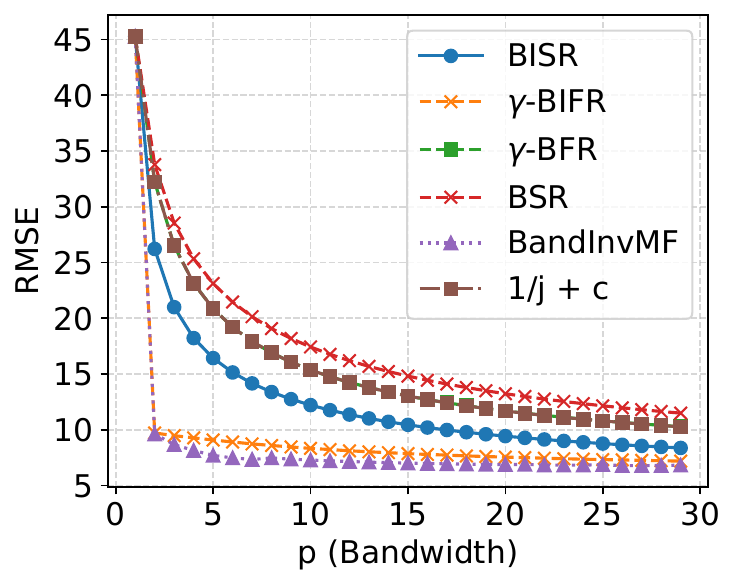}
    \caption{$k=4$}
\end{subfigure}
\hfill
\begin{subfigure}[t]{0.32\textwidth}
    \centering
    \includegraphics[width=\textwidth]{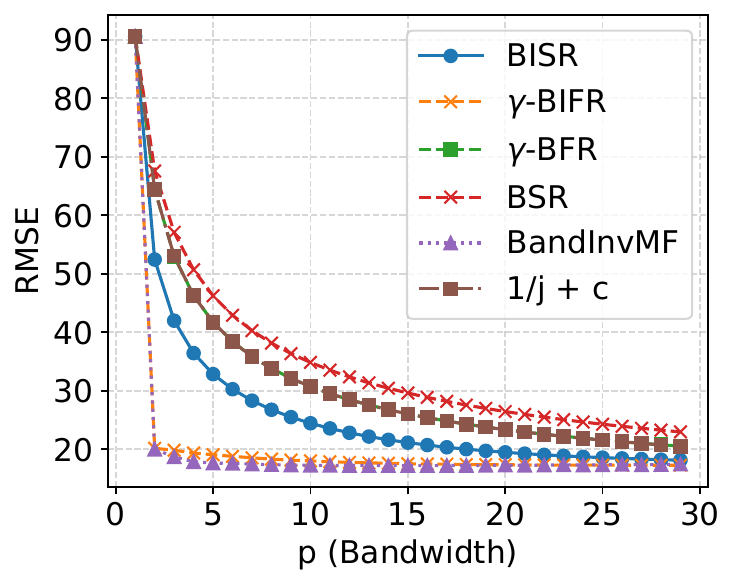}
    \caption{$k=16$}
\end{subfigure}
\caption{Comparison of the proposed $\gamma$-BIFR and $\gamma$-BFR factorizations with the explicit BSR and BISR factorizations, as well as the zeroth-order banded factorization $1/j + c$, where $c$ is optimized to minimize the RMSE. All plots use $n=1024$ iterations for different number of participations $k$ and no amplification by subsampling. We observe that $\gamma$-BIFR is consistently much closer to the optimal banded inverse factorization BandInvMF, substantially improving the error compared with BISR. In contrast, for small bandwidths, $\gamma$-BFR provides little improvement over the $1/j + c$ factorization.}
\label{fig:rmse_comparison_non_amplified}
\end{figure}

In the low-memory regime, for instance when $p = 2$, the RMSE of BISR scales as $O(k + \sqrt{nk})$, whereas the optimal error for the banded inverse methods is $O(k + \sqrt{k}n^{1/4})$ (Lemma~\ref{lem:dplambdacgd}), creating a large discrepancy, especially when the number of participations $k$ is small. In this work, we propose a generalization of the BISR and DP-$\lambda$CGD factorizations that aims to achieve low RMSE at moderate values of $p > 2$. We observe that taking the matrix square root of the prefix-sum matrix leads to low error only for large values of $p$, whereas allowing an arbitrary fractional power $\gamma \in [0, 1)$ yields much lower error. We now formalize our proposed factorization, which we call \method.

\begin{definition}[{\method: \(\gamma\)-Banded Inverse Fractional Root}]
\label{def:gamma_bifr}
Fix \(0<\gamma<1\) and an integer bandwidth parameter \(p\ge 1\).
Given a prefix sum matrix \(E \in \mathbb{R}^{n\times n}\), let $C_\gamma := E^\gamma$. We define the strategy \method matrix \(C_\gamma^{(p)}\) through its inverse by
\begin{equation}  
\bigl(C_\gamma^{(p)}\bigr)^{-1} := 
\begin{pmatrix} 
1 & 0 & \cdots & 0 & \cdots &0\\ 
(\tilde c_{\gamma})_1 & 1 & \cdots & 0 & \cdots &0 \\ 
\vdots & \vdots & \ddots & \vdots & \ddots & \vdots\\ 
(\tilde c_{\gamma})_{p - 1} & (\tilde c_{\gamma})_{p-2} & \cdots & 1 & \cdots &0\\
0 & (\tilde c_{\gamma})_{p - 1} & \cdots & (\tilde c_{\gamma})_1 &\cdots & 0 \\ 
\vdots & \vdots & \ddots & \vdots & \ddots & \vdots \\ 
0 & 0 & \cdots & (\tilde c_{\gamma})_{p - 1}&\cdots & 1 
\end{pmatrix}, 
\qquad \text{for} \quad (\tilde c_{\gamma})_i := (-1)^i \binom{\gamma}{i}, 
\end{equation}
where $\binom{\gamma}{i} = \prod_{j = 1}^{i} \frac{\gamma + 1 - j}{j}$ is a generalized binomial coefficient. That is, \(\bigl(C_\gamma^{(p)}\bigr)^{-1}\) is the lower triangular Toeplitz matrix
with  first \(p\) subdiagonals given by
\(1, (\tilde c_{\gamma})_1,\dots,(\tilde c_{\gamma})_{p - 1}\), and all remaining entries equal to \(0\).
\end{definition}

The correlation coefficients $\tilde{c}_{\gamma}$ can be computed efficiently using the simple linear recurrence
\[
(\tilde{c}_{\gamma})_0 = 1,
\qquad
(\tilde{c}_{\gamma})_j = (\tilde{c}_{\gamma})_{j-1} \cdot \frac{j - 1 - \gamma}{j}
\quad \text{for } j \geq 1.
\]
We refer to Lemma~\ref{lem:bounds-ctilde} in the appendix for a proof of the form of the coefficients $\tilde{c}_{\gamma}$, as well as for tight bounds on their values. In Figure~\ref{fig:rmse_comparison_non_amplified}, we show the behavior of various factorizations in the low-memory (small-$p$) regime, demonstrating that the proposed \method achieves almost optimal RMSE values while remaining an explicit factorization that is easy to construct and analyze.
In the plot, we also compare our method with $\gamma$-BFR, a banded-matrix alternative in which the matrix $C_{\gamma} = E^{\gamma}$ is truncated to be banded. However, we do not observe any improvement over another explicit banded factorization, proposed by \citet{scalingmckenna2024}, whose coefficients take the form $1/j + c$ for some constant $c$ that is not necessarily positive (see Appendix~J therein).

\begin{algorithm}[t!]
\caption{Differentially Private Model Training with \method}
\label{alg:method}
\begin{algorithmic}[1]
\Require Model initialization $\theta_0 \in \mathbb{R}^d$, dataset $\mathcal{D}$, batch size $B$, clip norm $\zeta$, learning rate $\eta>0$, loss $\ell(\theta,d)$, parameter $\gamma \in [0,1)$,  $b$ separation, bandwidth $p \ge 2$, target privacy level $\varepsilon,\delta$, number of iterations $n$, pseudorandom noise generator $\mathrm{Gen}$, random seed $\mathrm{GSt}_0$.
\State Let $(\tilde{c}_{\gamma})_0 =(c_{\gamma}^{(p)})_0 =1$
\State $(\tilde{c}_{\gamma})_j  \gets (\tilde{c}_{\gamma})_{j - 1} \cdot \frac{j - 1 - \gamma}{j} \qquad \text{for} \quad j = 1, \dots p - 1$
\State $(c_{\gamma}^{(p)})_j \gets -\sum\limits_{i = 1}^{\min(j, p - 1)}(\tilde{c}_{\gamma})_{i}(c_{\gamma}^{(p)})_{j - i}\qquad \text{for} \quad j = 1, \dots n - 1$
\State $C^{(p)}_{\gamma} \gets \mathrm{LTT}(c_{\gamma}^{(p)})$ \Comment{Construct the strategy matrix of \method}
\State  $\sigma \gets \mathrm{Subsampling\_Accountant}(C^{(p)}_{\gamma}, n, b,  B, |\mathcal{D}|, \varepsilon, \delta)$ \Comment{Compute noise multiplier}

\State $Z_0 \gets 0$
\For{$i=1$ to $n$}
\State Sample batch $S_i$ in accordance with the sampling scheme
\State $g_j \gets \nabla_\theta \ell(\theta_{i-1}, d_j) \qquad \text{for} \quad j=1, \dots, |S_i|$
    \State $\tilde g_j \gets \min\!\bigl(1, \frac{\zeta}{\|g_j\|}\bigr)\, g_j \qquad \text{for} \quad j=1, \dots, |S_i|$ \Comment{per-example clipping}
    
    \State $x_i \gets \sum_{j=1}^{|S_i|} \tilde g_j$ \Comment{aggregate clipped gradients}
    \State $\mathrm{Gen} \gets \mathrm{Gen.set\_state}(\mathrm{GSt}_{\max(i-p + 1, 0)})$ \Comment{Initialize PRNG's state}
    \State $\hat x_i \gets x_i$
    \For{$j = \max(1, p - i + 1)$ to $p - 1$}
    \State $Z_{i-p + j} \sim \mathrm{Gen}()$ \Comment{regenerate noise}
    \State $\hat x_i \gets \hat{x}_i + \zeta\sigma (\tilde{c}_{\gamma})_{p - j} Z_{i-p + j}$
    \State $\mathrm{GSt}_{i - p + j + 1} \gets \mathrm{Gen.get\_state()}$
    \EndFor
    \State $Z_i \sim \mathrm{Gen}()$ \Comment{generate fresh noise}
    \State $\hat x_i \gets \hat x_i + \zeta\sigma Z_i$
    \State $\theta_i \gets \theta_{i-1} - \frac{\eta}{B}\hat x_i$ \Comment{model update}
\EndFor
\State \Return $\theta_n$
\end{algorithmic}
\end{algorithm}

We theoretically bound the RMSE of the proposed \method in the following theorem.
The theorem shows that \method performs at least as well as the better of
DP-$\lambda$CGD and BISR, thereby combining their best performance into a single
method and generalizing DP-$\lambda$CGD to larger bandwidths.
\begin{theorem}\label{thm:bifr-multi-participation}
    For any bandwidth $2 \leq p \leq n$ and number of participations (epochs) $1 \leq k \leq \frac{n}{b}$, there exists $\gamma \in (0, 1)$ such that \method attains the following error bound:
    \begin{align*}
        \mathcal{E}(B_{\gamma}^{(p)}, C_{\gamma}^{(p)})
            &= O\!\left(\sqrt{k}\log p + \sqrt{\frac{nk}{b}} + \sqrt{\frac{nk\log p}{p}} + \sqrt{\frac{kp\log p}{b}}\right)
            & \text{for }\ \Bigl\lvert\gamma - \frac{1}{2}\Bigr\rvert \leq \frac{1}{4\log p}\\
        \intertext{If additionally $p\leq \frac{\sqrt{n}}{4}$, then the following tighter bound also applies:}
        \mathcal{E}(B_{\gamma}^{(p)}, C_{\gamma}^{(p)})
            &= O\!\left(\sqrt{k}n^{1/4} + \sqrt{\frac{nk}{b}}\right)
            & \text{for }\ \gamma=1-\frac{p}{\sqrt{n}}
    \end{align*}
\end{theorem}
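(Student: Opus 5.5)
We write $B_\gamma^{(p)} = E\,(C_\gamma^{(p)})^{-1}$. The error $\mathcal{E}$ factors as $\|B_\gamma^{(p)}\|_F\cdot \mathrm{sens}_{k,b}(C_\gamma^{(p)})/\sqrt n$, and the plan is to bound the two factors separately using Toeplitz structure.

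\textbf{Setting up.} All matrices involved are lower triangular Toeplitz, so we work with generating functions. The generating function of $E$ is $(1-x)^{-1}$. The correlation matrix $(C_\gamma^{(p)})^{-1}$ has symbol $T_p(x)=\sum_{i<p}(\tilde c_\gamma)_i x^i$, the degree-$(p-1)$ truncation of $(1-x)^\gamma$.

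\textbf{Bounding $B$.} The symbol of $B$ is $T_p(x)/(1-x)$, so its coefficients are the partial sums $b_j=\sum_{i\le \min(j,p-1)}(\tilde c_\gamma)_i$. For $j<p$ these equal the coefficients of $(1-x)^{\gamma-1}$, namely $\binom{j+\gamma-1}{j}=\Theta(\frac{j^{-\gamma}}{\Gamma(1-\gamma)})$ up to the bounds of Lemma~\ref{lem:bounds-ctilde}. For $j\ge p$ they are constant, equal to $b_{p-1}=\Theta(p^{-\gamma}/\Gamma(1-\gamma))$. Hence $\|B\|_F^2\le n\sum_{j<n} b_j^2$, which gives
\begin{align*}
\|B\|_F^2 \lesssim n\Bigl(\sum_{j<p} j^{-2\gamma} + n\,p^{-2\gamma}\Bigr)/\Gamma(1-\gamma)^2.
\end{align*}

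\textbf{Bounding $C$.} The inverse $C_\gamma^{(p)}$ has symbol $1/T_p(x)$. Since $(\tilde c_\gamma)_i\le 0$ for $i\ge1$, its coefficients satisfy the renewal recursion $c_j=\sum_{i\ge1}|\tilde c_i|c_{j-i}$. Write $s=\sum_{i=1}^{p-1}|\tilde c_i|=1-b_{p-1}<1$. Standard renewal arguments then give that $(c_j)$ is non-negative and non-increasing, which I would verify via log-convexity of $|\tilde c_i|$ (the Kaluza criterion). This verification is needed so that Theorem~\ref{thm:b-sensitivity} applies. The coefficients $c_j$ agree with those of $(1-x)^{-\gamma}$, i.e. $\Theta(j^{\gamma-1}/\Gamma(\gamma))$, for $j<p$, and in general they are dominated by them. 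For $j\gg p$ they decay geometrically, roughly like $c_j\approx \frac{1}{p\,b_{p-1}}\,e^{-j b_{p-1}/(\text{mean})}$, where the mean is $\sum i|\tilde c_i|\asymp p^{1-\gamma}$. In particular the sum $\sum_j c_j\le 1/(1-s)=1/b_{p-1}$ holds, and the plateau value is about $c_\infty$-like $\asymp p^{\gamma-1}$ before the geometric cutoff at scale $L\asymp p^{1-\gamma}/b_{p-1}\asymp p$. Using Theorem~\ref{thm:b-sensitivity}, the sensitivity is bounded by
\begin{align*}
\mathrm{sens}^2\le k\|c\|_2^2 + k^2\max_i\Bigl(\sum_{m\ge1}c_{i-mb}\Bigr)^2 \cdot \tfrac{n}{k},
\end{align*}
where the cross-term sums are controlled by $\frac1b\sum_j c_j$ plus $c_0$. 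This splits the sensitivity into a \emph{diagonal} part $\sqrt k\,\|c\|_2$ and a \emph{separation} part of order $\sqrt{n}\,k\,(\frac{1}{b}\sum_j c_j+\ldots)/\sqrt{k}$.

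\textbf{Case $\gamma\approx 1/2$.} The condition $|\gamma-\tfrac12|\le 1/(4\log p)$ makes $p^{\pm(2\gamma-1)}=\Theta(1)$, so every estimate reduces to the BISR computation up to constants. Concretely, $\sum_{j<p}j^{-2\gamma}=O(\log p)$, $\|c\|_2^2=O(\log p)$ on the head plus $O(1)$ from the tail, and $\sum_j c_j=O(\sqrt p)$. Multiplying $\|B\|_F$ by the sensitivity and dividing by $\sqrt n$ reproduces the four terms of Theorem~\ref{thm:multi_epoch_error_upper_bound}. So here I would simply re-run that proof, checking that every constant stays uniform over the allowed window of $\gamma$.

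\textbf{Case $\gamma=1-p/\sqrt n$.} Set $\epsilon=1-\gamma=p/\sqrt n\le 1/4$, so $\Gamma(1-\gamma)\asymp 1/\epsilon$. Then $b_j\asymp \epsilon\, j^{-\gamma}$ and $b_{p-1}\asymp \epsilon p^{-1+\epsilon}\asymp \epsilon/p$. This gives
\begin{align*}
\|B\|_F^2\lesssim n\epsilon^2\bigl(O(1)+n/p^{2}\bigr)\lesssim n\epsilon^2 + n^2\epsilon^2/p^2 = O(n),
\end{align*}
so $\|B\|_F/\sqrt n=O(1)\cdot(1+\sqrt n\,\epsilon/p)=O(1)$, up to factors I may need to absorb. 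On the strategy side, $c_j\lesssim j^{-\epsilon}$ is essentially constant until the geometric cutoff at $L\asymp p/\epsilon=\sqrt n$. Hence $\|c\|_2^2\lesssim \min(n,\sqrt n)=\sqrt n$ and $\sum_j c_j\lesssim \sqrt n$. The sensitivity is therefore $O(\sqrt{k}\,n^{1/4}+\ldots)$, with the separation term contributing $O(\sqrt{nk/b})$ after multiplication. This mirrors the DP-$\lambda$CGD analysis of Lemma~\ref{lem:dplambdacgd} with $\lambda=1-1/\sqrt n$, and I would structure the computation following it. The hypothesis $p\le\sqrt n/4$ guarantees $\epsilon\le1/4$, so $\gamma\in(0,1)$ and the cutoff exceeds $p$.

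\textbf{Main obstacle.} The hardest part is rigorous, uniform-in-$\gamma$ control of the inverse coefficients $c_j$ of the truncated series $T_p$. This requires monotonicity, a pointwise bound of the form $c_j\le \min\{\Theta(j^{\gamma-1}/\Gamma(\gamma)),\ O(p^{\gamma-1})\,e^{-\Omega(j/L)}\}$, and the resulting $\ell_1$ and $\ell_2$ bounds. I would first try comparison with the renewal sequence, using $c_j\le$ the coefficients of $(1-x)^{-\gamma}$ (since truncating non-negative renewal weights can only decrease $c_j$) for the head. For the tail I would bound $\sum_j c_j r^j=1/(1-\sum_i|\tilde c_i|r^i)$ at some $r=1+\Theta(1/L)$ and apply a Chernoff-type argument.
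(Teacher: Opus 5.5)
There is a genuine gap in the sensitivity step; your treatment of $\|B_\gamma^{(p)}\|_F$ itself matches the paper's.

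\textbf{Sensitivity.} Your displayed bound has a ``separation part'' of order $\sqrt{nk}\,\bigl(c_0+\tfrac1b\sum_j c_j\bigr)$. It is valid but far too lossy, because it bounds each row by a row-wise maximum and then pays for all $n$ rows. The $c_0$ piece alone puts $\sqrt{nk}\,\|B_\gamma^{(p)}\|_F/\sqrt n$ into the error. That is at least $\sqrt{nk}$, and at least of order $n\sqrt{k/p}$ near $\gamma=\tfrac12$ when $p\le n/2$. For $k=1$, $b=n$ this already exceeds both target rates. The $\tfrac1b\sum_j c_j$ piece is also too large, by a factor $\sqrt{n/b}$. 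So neither ``reproduces the four BISR terms'' nor ``the separation term contributes $O(\sqrt{nk/b})$'' follows from what you wrote.

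The missing ingredient is the paper's \Cref{lem:sens-toeplitz-bound}, $\mathrm{sens}_{k,b}(C)^2\le k\|c\|_2^2+\tfrac kb\|c\|_1^2$. Your local estimate $\sum_{\ell\ge1}c_{t+\ell b}\le\tfrac1b\|c\|_1$, from monotonicity, is the right tool. But the cross terms must be weighted by column mass rather than by the number of rows: $\sum_i\sum_{m<m'}c_{i-m'b}\,c_{i-mb}=\sum_{m'}\sum_{t\ge0}c_t\sum_{\ell=1}^{m'}c_{t+\ell b}\le\tfrac kb\|c\|_1^2$. This gives $\mathrm{sens}_{k,b}(C)^2\le k\|c\|_2^2+\tfrac{2k}{b}\|c\|_1^2$. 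The paper reaches the same bound via $\langle C_{ib},C_{jb}\rangle\le\langle C_0,C_{(j-i)b}\rangle$, averaging over windows of length $b$, and $\sum_{h\ge1}\langle C_0,C_h\rangle=\tfrac12(\|c\|_1^2-\|c\|_2^2)$.

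\textbf{The case $\gamma=1-p/\sqrt n$.} You treat $p^{\epsilon}=p^{p/\sqrt n}$ as a constant when you write $b_{p-1}\asymp\epsilon p^{-1+\epsilon}\asymp\epsilon/p$. At $p=\sqrt n/4$ it is about $n^{1/8}$. The $j\ge p$ part of $\|B_\gamma^{(p)}\|_F^2$ is about $\tfrac{n^2}{2}b_{p-1}^2\asymp n\,p^{2\epsilon}$, so $\|B_\gamma^{(p)}\|_F/\sqrt n\asymp p^{\epsilon}$, not $O(1)$. Combined with your bounds $\|c\|_2^2\lesssim\sqrt n$ and $\sum_jc_j\lesssim\sqrt n$, you only get the target rate times $p^{p/\sqrt n}$. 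This factor cannot be absorbed. It must cancel against the $p^{-\epsilon}$ that the strategy side actually carries, which is what \Cref{cor:rmse} tracks.

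\textbf{Strategy coefficients.} Your tools can deliver that $p^{-\epsilon}$ once two of them are repaired.
\begin{itemize}
\item Kaluza-type log-convexity does not apply to the \emph{truncated} weights $|\tilde c_1|,\dots,|\tilde c_{p-1}|,0,\dots$, which fail log-convexity at index $p-1$. Instead use, as in \Cref{lem:calpha-p-monotone}, that $c_j=(c_\gamma)_j$ for $j<p$ and that the differences obey a nonnegative recursion beyond $p$.
\item A one-shot Chernoff bound $c_j\le r^{-j}/(1-\sum_i|\tilde c_i|r^i)$ has prefactor exceeding $1/b_{p-1}$. That is at least of order $p\,c_p$, so it does not give your pointwise $O(p^{\gamma-1})e^{-\Omega(j/L)}$ and loses a logarithm in $\|c\|_2^2$.
\end{itemize}
You do not need pointwise decay at all. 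Your exact identity $\sum_{j\ge0}c_j=1/T_p(1)=1/b_{p-1}<\Gamma(1-\gamma)p^{\gamma}$, together with monotonicity, gives $\sum_{j\ge p}c_j^2\le c_p\sum_jc_j<\frac{\Gamma(1-\gamma)}{\Gamma(\gamma)}\,p^{2\gamma-1}$. This is $O(1)$ in the window around $\tfrac12$. For $\gamma=1-\epsilon$ it yields $\|c\|_1\lesssim p^{-\epsilon}\sqrt n$ and $\|c\|_2^2\lesssim p^{-2\epsilon}\sqrt n$, which is exactly the needed cancellation. With the corrected sensitivity lemma, this route even bypasses the paper's geometric-tail \Cref{lem:calpha-p-geometric}.
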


The first error bound in \Cref{thm:bifr-multi-participation} equals that of BISR (\Cref{thm:multi_epoch_error_upper_bound}), and demonstrates that the bound is attainable for a range of $\gamma$ around $\frac{1}{2}$, rather than precisely $\gamma = \frac{1}{2}$; the second bound proves that the error guarantee of $\lambda$CGD extends to $p>2$.
Proving the theorem requires novel analysis.
We show the following useful lemma that bounds the sensitivity of any Toeplitz matrix $C$ with positive nonincreasing diagonals, purely in terms of standard operator norms.
\begin{lemma}[Multi-participation sensitivity via operator norms]\label{lem:sens-toeplitz-bound}
Let $C$ be an $n\times n$ lower-triangular Toeplitz matrix with decreasing non-negative entries
$c_0 \ge c_1 \ge c_2 \ge \dots \ge c_{n-1} \ge 0$ on the diagonals.
Then the \emph{sensitivity} of $C$ in the setting of \emph{$b$-min-separation} satisfies the following bound:
\begin{equation*}\label{eq:sens-toeplitz-bound}
    \mathrm{sens}_{k, b}(C)^2 \leq k \|C\|_{1\to 2}^2 + \frac{k}{b} \|C\|_{1\to 1}^2.
\end{equation*}
\end{lemma}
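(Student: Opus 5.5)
Starting from the exact sensitivity formula of Theorem~\ref{thm:b-sensitivity}, write $v := \sum_{j=0}^{k-1} C_{[\cdot,\,jb]}$. Its $i$-th entry is $v_i = \sum_{j=0}^{\min(k-1,\lfloor i/b\rfloor)} c_{i-jb}$. Expanding the square gives
\begin{equation*}
\mathrm{sens}_{k,b}(C)^2 = \|v\|_2^2 = \sum_{j=0}^{k-1}\sum_{j'=0}^{k-1} \langle C_{[\cdot,jb]}, C_{[\cdot,j'b]}\rangle .
\end{equation*}
We split this into diagonal terms ($j=j'$) and off-diagonal terms ($j\neq j'$).

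\textbf{Diagonal terms.} Each column $C_{[\cdot,jb]}$ is a truncated shift of the first column $(c_0,\dots,c_{n-1})$. Hence $\|C_{[\cdot,jb]}\|_2 \le \|C_{[\cdot,0]}\|_2 = \|C\|_{1\to 2}$. Here we use that for a nonnegative lower-triangular Toeplitz matrix the maximal column norms are attained by the first column, and $\|C\|_{1\to2}$ is the maximal column $\ell_2$ norm. Summing over $j$, the diagonal contributes at most $k\|C\|_{1\to2}^2$.

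\textbf{Off-diagonal terms.} By symmetry it suffices to treat $j<j'$, with $m := (j'-j)b \ge b$. Then
\begin{equation*}
\langle C_{[\cdot,jb]}, C_{[\cdot,j'b]}\rangle \le \sum_{t\ge 0} c_{t+m}\,c_t .
\end{equation*}
Monotonicity now enters. Since the sequence is nonincreasing and nonnegative, $c_{t+m} \le \frac{1}{m}\sum_{s=t+1}^{t+m} c_s$, and therefore $c_{t+m}\le \frac{S}{m}$ with $S := \sum_s c_s = \|C\|_{1\to1}$. Using this crude bound gives
\begin{equation*}
\sum_t c_{t+m}c_t \le \frac{S}{m}\sum_t c_t = \frac{S^2}{m}.
\end{equation*}
Summing over pairs, the off-diagonal total is at most
\begin{equation*}
2\sum_{j<j'} \frac{S^2}{(j'-j)b} \le \frac{2S^2}{b}\sum_{j}\sum_{d\ge1}\frac{1}{d} \approx \frac{2kS^2\log k}{b}.
\end{equation*}
This carries an unwanted $\log k$ factor, so a sharper argument is needed.

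\textbf{Sharper bound (main obstacle).} The main obstacle is to replace the harmonic-sum estimate with something that loses no $\log k$. Rather than bounding pair by pair, I would bound, for fixed $j$, the entire tail $\sum_{j'>j} \langle C_{[\cdot,jb]}, C_{[\cdot,j'b]}\rangle$ at once. This tail is at most $\sum_t c_t \sum_{d\ge1} c_{t+db}$. By monotonicity, each $c_{t+db}$ is bounded by the average of the block $c_{t+(d-1)b+1},\dots,c_{t+db}$. Consequently
\begin{equation*}
\sum_{d\ge1} c_{t+db} \le \frac{1}{b}\sum_{s>t} c_s \le \frac{S}{b}.
\end{equation*}
The fixed-$j$ tail is therefore at most $\frac{S}{b}\sum_t c_t = \frac{S^2}{b}$. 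Summing over the $k$ choices of $j$ and doubling for symmetry yields off-diagonal terms of at most $\frac{2k}{b}\|C\|_{1\to1}^2$.

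\textbf{Removing the factor of $2$.} The statement has constant $1$, not $2$, so I would avoid doubling. Write $\|v\|^2 = \sum_i v_i^2$ and bound each $v_i^2$ by $\sum_j (\text{own term})^2$ plus $2\sum_{j<j'}$ cross terms. Then pair each cross term $c_{i-jb}c_{i-j'b}$ with its larger index, using $c_{i-jb}\le c_{i-j'b}$. Alternatively, I would write
\begin{equation*}
v_i^2 \le v_i\cdot\Big(c_{i-j^*b} + \sum_{j\neq j^*} c_{i-jb}\Big)
\end{equation*}
for a suitable index $j^*$, and organize the sum telescopically. If the exact constant proves elusive, the bound with constant $2$ still suffices for the asymptotic use in Theorem~\ref{thm:bifr-multi-participation}. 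I expect this constant bookkeeping, together with the careful use of monotonicity across block averages, to be the delicate part.
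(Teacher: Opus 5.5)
\textbf{Gap: the stated constant is not proved.} Your argument through the ``Sharper bound'' paragraph is correct, and it is essentially the paper's route. The paper also reduces each cross term to $\langle C_{[\cdot,0]}, C_{[\cdot,(j'-j)b]}\rangle$, uses that this inner product is nonincreasing in the shift, and averages over windows of length $b$ that tile the shifts. After exchanging sums, that is your block-averaging of $c_{t+db}$. As written, though, the proposal only proves
\begin{equation*}
\mathrm{sens}_{k,b}(C)^2\le k\|C\|_{1\to2}^2+\frac{2k}{b}\|C\|_{1\to1}^2 .
\end{equation*}
The ``Removing the factor of 2'' paragraph supplies no argument: $v_i^2\le v_i\cdot(\cdots)$ is just an identity, and the pairing idea is not carried out. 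You are right that constant $2$ would suffice for the asymptotic theorem, but the lemma as stated is not established.

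\textbf{Where the factor of 2 is lost.} It happens at one step, where you bound $\frac{1}{b}\sum_{s>t}c_s\le\frac{S}{b}$ and discard the constraint $s>t$. Keep that constraint. Each off-diagonal product $c_tc_s$ appears twice in $S^2$, so the fixed-$j$ tail is at most
\begin{equation*}
\frac{1}{b}\sum_{t}c_t\sum_{s>t}c_s=\frac{1}{b}\sum_{t<s}c_tc_s=\frac{1}{2b}\Bigl(S^2-\sum_t c_t^2\Bigr)=\frac{1}{2b}\bigl(\|C\|_{1\to1}^2-\|C\|_{1\to2}^2\bigr).
\end{equation*}
Sum over the $k$ values of $j$ and double for symmetry. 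The off-diagonal total is then at most $\frac{k}{b}\bigl(\|C\|_{1\to1}^2-\|C\|_{1\to2}^2\bigr)$. Adding your diagonal bound gives
\begin{equation*}
\mathrm{sens}_{k,b}(C)^2\le k\Bigl(1-\frac{1}{b}\Bigr)\|C\|_{1\to2}^2+\frac{k}{b}\|C\|_{1\to1}^2 ,
\end{equation*}
which implies the claim. This is exactly the identity $\sum_{h\ge1}\langle C_{[\cdot,0]},C_{[\cdot,h]}\rangle=\frac12\bigl(\|C_{[\cdot,0]}\|_1^2-\|C_{[\cdot,0]}\|_2^2\bigr)$ that the paper uses to finish its proof.
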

Both \Cref{thm:bifr-multi-participation} and \Cref{lem:sens-toeplitz-bound} are proved in Appendix~\ref{app:proofs}.

\textbf{Limitations.} We conjecture that the error is decreasing in $p$ until reaching the optimal bandwidth, but the $p=O(\sqrt{n})$ branch in \Cref{thm:bifr-multi-participation} does not reflect such a dependence.
It is obtained by carefully bounding $\| B_{\gamma}^{(p)}\|_F$ and $\mathrm{sens}_{k,b}(C_{\gamma}^{(p)})$ separately, and then choosing $\gamma = 1-p/\sqrt{n}$ to minimize their product; at this choice of $\gamma$, the dependence on $p$ perfectly cancels out in the product.
An improved rate would require tighter bounds on the factors, possibly together with a different choice of $\gamma$.

\paragraph{Noise Regeneration.}
As a factorization in the class of banded inverse methods, \method can efficiently correlate noise without storing it in a buffer by using pseudorandomness. Most noise generators, including cryptographically secure ones, produce noise sequentially and deterministically from a potentially physically random (system-random) seed. Thus, one can store the state of the generator, for example, a 128-bit state in the commonly used Philox4x32-10 generator in PyTorch, and use it to \emph{regenerate} the noise from the last $p - 1$ iterations, assuming bandwidth $p$. Therefore, the proposed method can be implemented with negligible memory overhead for noise storage compared with DP-SGD.

Noise regeneration, however, incurs additional computation time. Regenerating noise from all previous iterations would introduce substantial overhead, making non-banded-inverse methods such as BandMF, BSR, and BLT impractical in this setting. In the work of \citet{kalinin2026dplambdacgd}, it was shown that  $16$ noise vectors can be regenerated without significant time overhead relative to DP-SGD. Moreover, this limit can be pushed even further by increasing the batch size.

\subsection{Amplification by Subsampling}

\begin{figure}[t!]
    \centering

    \begin{subfigure}[t]{0.46\linewidth}
        \centering
        \includegraphics[width=\linewidth]{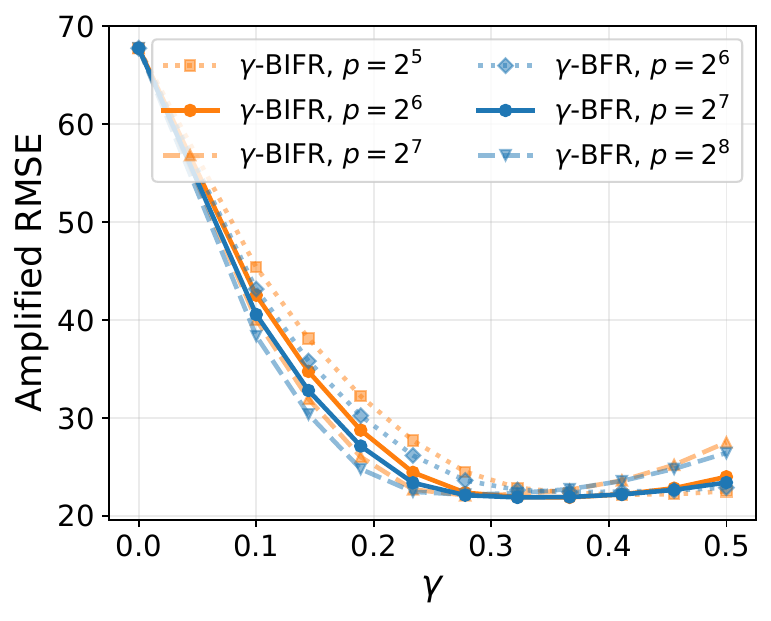}
        \caption{Amplified RMSE vs $\gamma$}
        \label{fig:bifr_amplified_rmse_gamma}
    \end{subfigure}
    \hfill
    \begin{subfigure}[t]{0.46\linewidth}
        \centering
        \includegraphics[width=\linewidth]{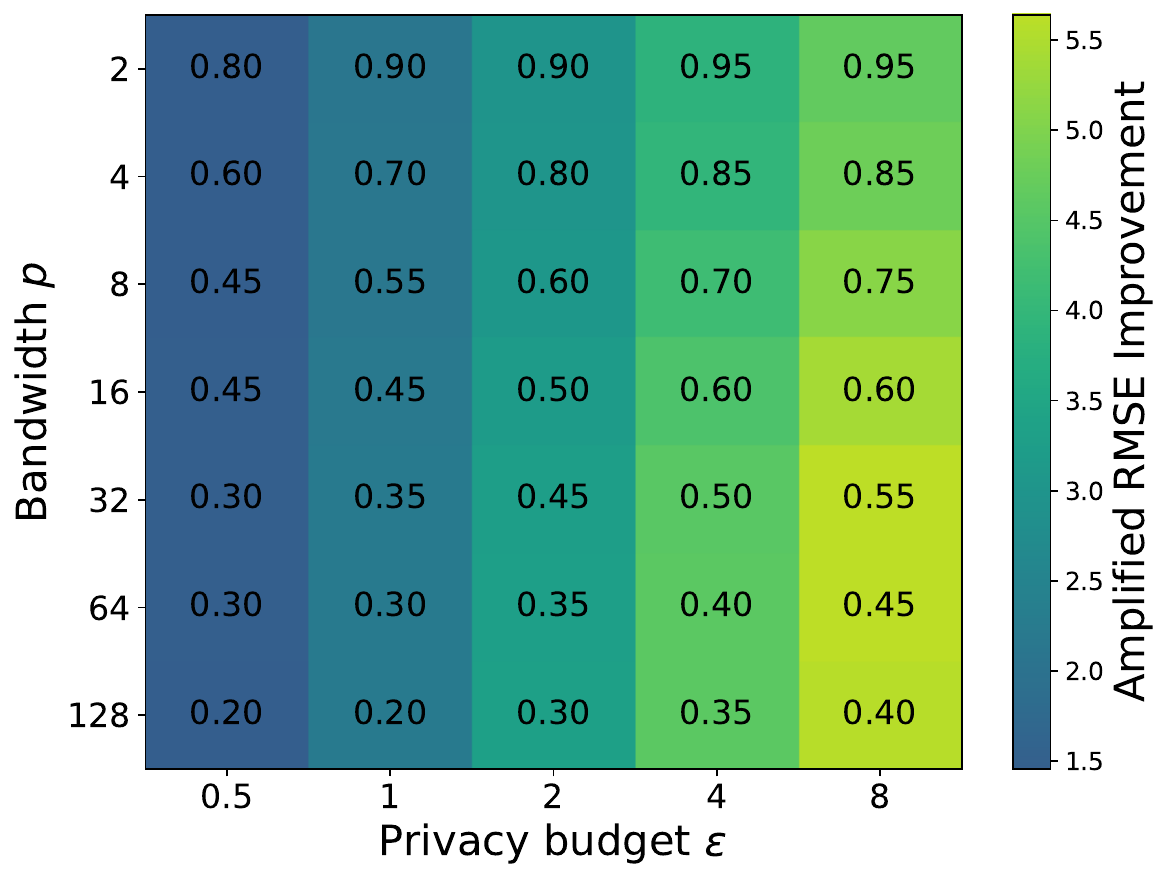}
        \caption{Optimal $\gamma$ (text) and Amplified RMSE (colors)}
        \label{fig:bifr_optimal_gamma}
    \end{subfigure}
\caption{%
(a) Balls-in-Bins amplified RMSE for the proposed $\gamma$-BIFR and $\gamma$-BFR methods with $n = 2048$, $k = 8$, $\varepsilon = 1$, and $\delta = 10^{-5}$.
(b) Optimal values of $\gamma$ for $\gamma$-BIFR are shown in the cells, together with the corresponding optimal amplified RMSE improvement over DP-SGD ($\gamma = 0$), indicated by color. The results are reported for CIFAR-10 training setting with batch size $512$, $k = 20$ epochs, and $\delta = 10^{-5}$.
}
    \label{fig:bifr_amplified_rmse}
\end{figure}

Matrix mechanisms can also be analyzed together with amplification by subsampling, meaning that the privacy guarantees can be improved by accounting for randomness in batch formation. Several sampling schemes are available for matrix mechanisms, including ``Cyclic Poisson'' \citep{choquette2023amplified}, ``Poisson'' subsampling \citep{choquette2023privacy}, Balls-in-Bins \citep{choquette2024near, schuchardt2026sampling}, and the novel $b$-min-separated Balls-in-Bins \citep{dong2026privacy}. Each sampling scheme has its own limitations. ``Cyclic Poisson'' provides limited amplification and is applicable only to banded methods. ``Poisson'' accounting based on conditional composition is extremely loose. Balls-in-Bins, which is the most practical method for banded-inverse matrices and is available in the JAX\_privacy library \citep{mckenna2026jax}, provides, however, limited amplification because batch formation involves less randomness than in Poisson subsampling. Finally, the analysis of $b$-min-separated Balls-in-Bins was designed for banded matrices and is less applicable to banded-inverse matrices.

In this subsection, we analyze the amplified RMSE under Balls-in-Bins amplification by subsampling, as it provides the strongest amplification for banded-inverse methods. Specifically, instead of computing the multi-participation sensitivity $\mathrm{sens}_{k,b}(C)$, we directly plug the matrix into the accountant and scale the expected error by the resulting noise multiplier. First, we observe that the optimal value of $\gamma$ for minimizing the amplified RMSE is smaller than $1/2$ when arbitrary bandwidth is allowed; see Figure~\ref{fig:bifr_amplified_rmse}. In the left subfigure (a), we also observe that, near the optimal value, the objective is not very sensitive to the choice of $\gamma$. Moreover, the alternative $\gamma$-BFR factorization, being banded, achieves almost identical performance, when there is no restriction on the bandwidth. In the right subfigure (b), we evaluate the optimal value of $\gamma$ 
(shown as text) and the improvement in amplified RMSE over DP-SGD 
($\gamma = 0$) for the corresponding privacy level (shown as colors). We show that for 
small bandwidth values, the optimal value of $\gamma$ is much larger than 
$1/2$, whereas for large bandwidth values, it is lower. The optimal RMSE 
is insensitive to the bandwidth when optimized over $\gamma$ for small 
$\varepsilon$. However, for large values of $\varepsilon$, much larger values of 
$\gamma$ are preferred, with $p = 64$ being optimal in the considered 
practical setting.

We compare the amplified RMSE of different factorizations in Table~\ref{tab:amplified_rmse} in the appendix. We show that, in the high-privacy regime, the proposed \method matches or improves upon the amplified RMSE of competing methods. We also note that zero-order methods, which require optimizing only a single scalar parameter, achieve the best performance, outperforming the gradient-based optimization of BandMF with respect to the Balls-in-Bins accountant described in \citep{choquette2024near}. We emphasize that optimizing a factorization through the Balls-in-Bins analysis is a challenging optimization problem and offers no guarantees of convergence to a globally optimal factorization. Together with the $1/j + c$ factorization, which was proposed but whose amplified RMSE was never evaluated, our factorizations are the first to demonstrate improved amplified RMSE performance. This metric also better reflects practical performance, as it accounts for the amount of noise added to the model. For the evaluation, we used Google DeepMind’s JAX\_privacy implementation \citep{mckenna2026jax} of various factorizations and amplification schemes.

\begin{figure}[t!]
    \centering
    \begin{subfigure}[t]{0.45\linewidth}
        \centering
        \includegraphics[width=\linewidth]{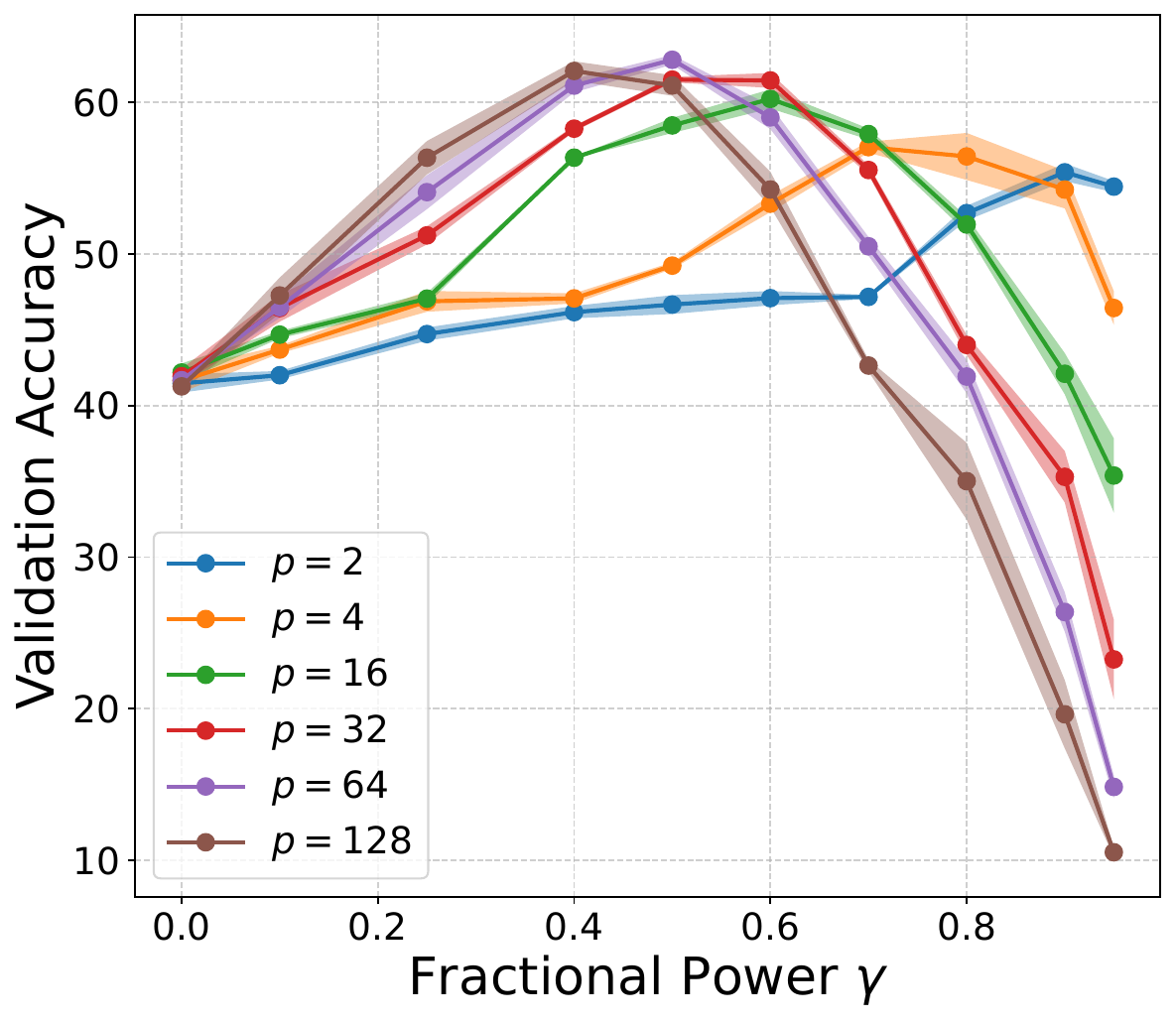}
        \caption{privacy budget $\varepsilon = 8$.}
    \end{subfigure}
    \hfill
    \begin{subfigure}[t]{0.45\linewidth}
        \centering
        \includegraphics[width=\linewidth]{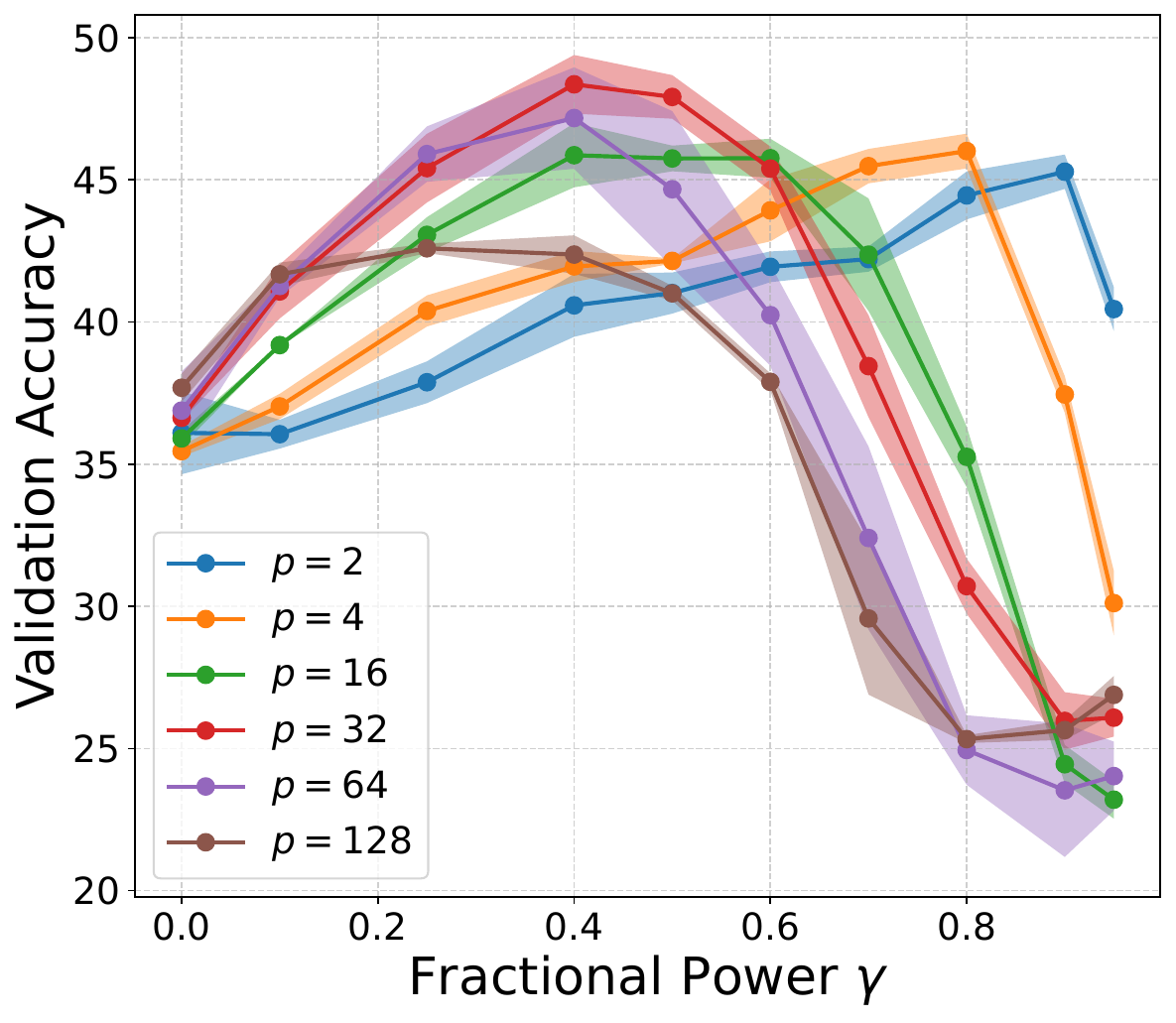}
        \caption{privacy budget  $\varepsilon = 1$.}
    \end{subfigure}
    \caption{Validation accuracy on CIFAR-10 for different values of $\gamma$ using Balls-in-Bins amplified $\gamma$-BIFR, shown for different bandwidths $p$, with $k = 10$ epochs and batch size $B = 128$ and $\delta=10^{-5}$. For small bandwidths, choosing $\gamma$ closer to $1$ rather than the BISR choice $\gamma = 1/2$ can significantly improve performance. However, for larger bandwidths, $\gamma$ lower than $1/2$ could be the best choice.} 
    \label{fig:cifar10_lambda_bifr_amplified}
\end{figure}

\section{Experiments}
\label{sec:experiments}

In this section, we evaluate the performance of the proposed \method on machine learning tasks. We begin by studying how performance depends on the value of $\gamma$. Specifically, we consider a simple setting of training a CNN model on CIFAR-10 dataset with batch size $128$ over $10$ epochs at various privacy levels. In Figure~\ref{fig:cifar10_lambda_bifr_amplified}, we plot the validation accuracy as a function of $\gamma$. The results show that, for small bandwidth $p$, the optimal value of $\gamma$ differs significantly from $1/2$, and thus outperforms the BISR factorization in the small-bandwidth regime. Moreover, the accuracy achieved with bandwidth $p > 2$ is substantially higher than that with bandwidth $p = 2$, thereby outperforming DP-$\lambda$CGD.

We compare the test accuracy of \method with prior methods under two memory
regimes: a high-memory setting with bandwidth $p=64$ for all banded methods and
a low-memory setting with $p=4$. We evaluate image classification on CIFAR-10
using the 3-block VGG model from prior work~\citet{Kairouz}, and sentiment
analysis on IMDb by fine-tuning BERT-base~\citep{devlin2019bert}; the results
are shown in Figures~\ref{fig:bifr_fixedp_lowmem}
and~\ref{fig:bifr_fixedp_highmem}, respectively, with the latter reported in the
appendix. The CIFAR-10 experiments are run for 20 epochs, whereas the IMDb
experiments are run for 10 epochs. In both cases, we use a batch size of $512$,
clipping norm $10$, privacy budgets
$\varepsilon \in \{0.5,1,2,4,8\}$, and $\delta=10^{-5}$. We report final test
accuracy with error bars computed over three repeats.

For the baselines, we compare against BLT~\citep{hasslefree2024}, BandInvMF and BISR~\citep{kalinin2025back}, BandMF~\citep{scalingmckenna2024}, and BSR~\citep{kalinin2024}.
We also compare against standard DP-SGD, for which we use the Opacus implementation~\citep{opacus} with Poisson subsampling and the PRV accountant~\citep{gopiNumericalCompositionDifferential2021}.
For the matrix-factorization baselines, we use Balls-in-Bins amplification with $500$k Monte Carlo samples, following the accountant of \citet{choquette2024near}.
For BLT, we use four buffers and we also include the DP-$\lambda$CGD comparison~\citep{kalinin2026dplambdacgd}. The results show that \method improves over the alternative methods in the
low-privacy regime and performs on par with the banded-inverse methods, with only
BLT achieving better performance in the high-privacy regime.

\begin{figure}[t!]
    \centering
    \begin{subfigure}[t]{0.46\linewidth}
        \centering
        \includegraphics[width=\linewidth]{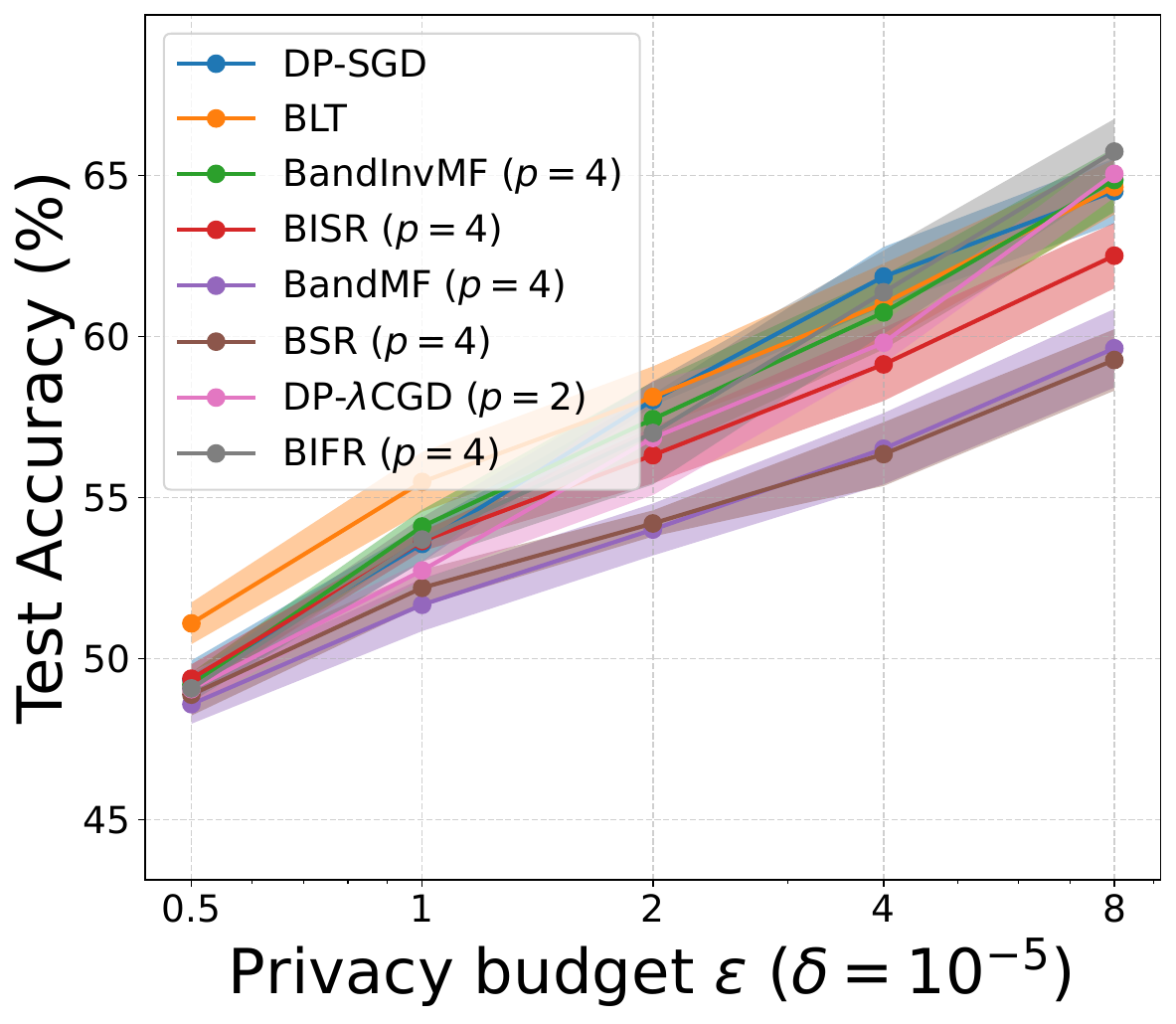}
        \caption{CIFAR-10/VGG. low memory}
        \label{fig:bifr_fixedp_lowmem_cifar10_vgg}
    \end{subfigure}
    \hfill
    \begin{subfigure}[t]{0.46\linewidth}
        \centering
        \includegraphics[width=\linewidth]{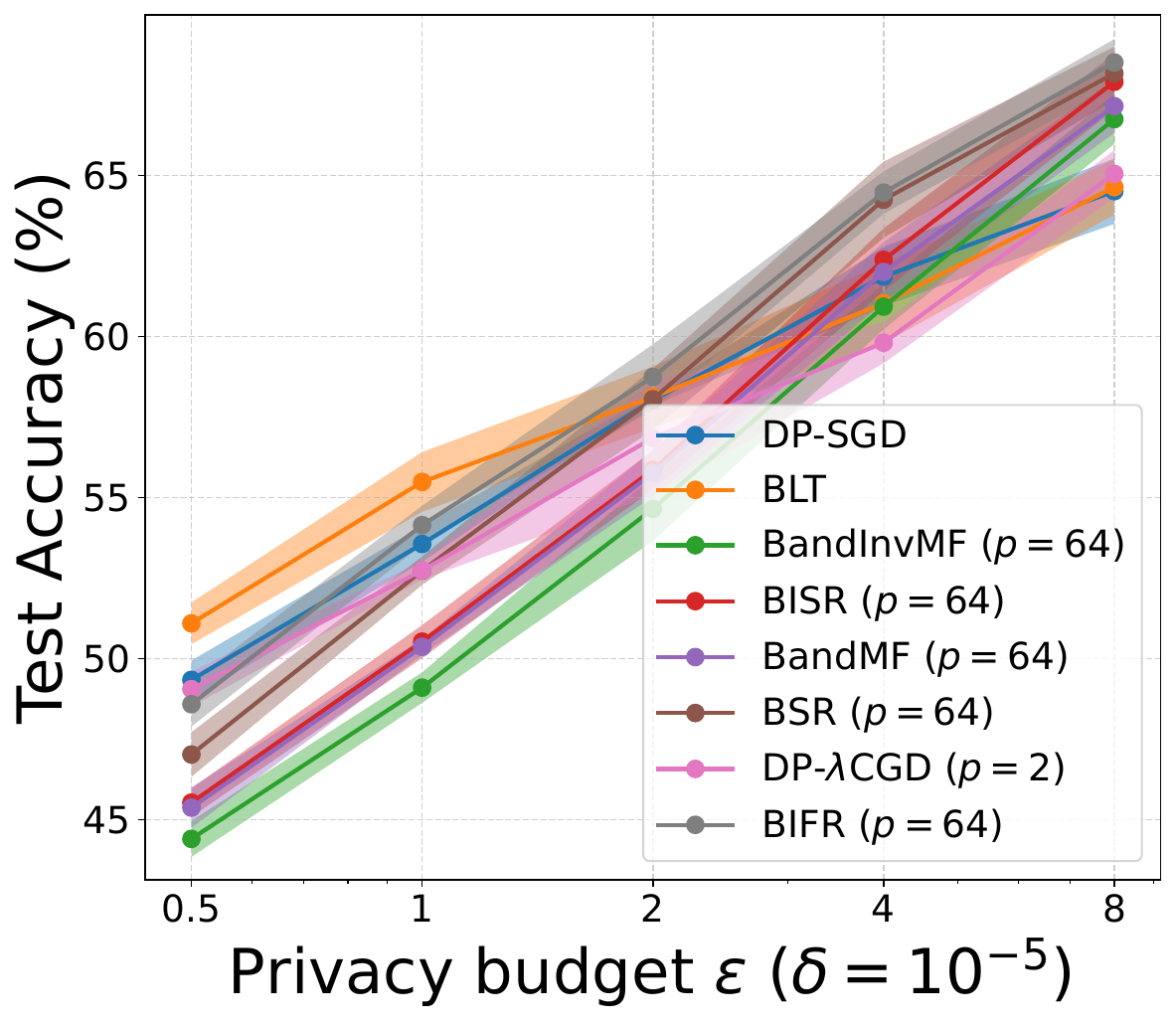}
        \caption{CIFAR-10/VGG. high memory}
        \label{fig:bifr_fixedp_lowmem_imdb_bert}
    \end{subfigure}
    \caption{Test accuracy on CIFAR-10 in the low- ($p=4$) and high-memory ($p=64$) regimes for different factorizations across privacy budgets $\varepsilon$. The full experimental setup, including hyperparameters and a table of accuracy values, is provided in Section~\ref{sec:experiment_details} of the appendix.
    }
    \label{fig:bifr_fixedp_lowmem}
\end{figure}

\section*{Conclusion and Future Directions}

We proposed a new factorization method that strictly generalizes both BISR and DP-$\lambda$CGD. We showed theoretically that it achieves RMSE at least as good as the better of the two methods, and we demonstrated numerically that it improves RMSE, amplified RMSE, and accuracy in CIFAR-10 training. The proposed factorization can also be combined efficiently with noise regeneration, allowing it to run with negligible overhead compared with DP-SGD.

Matrix factorization techniques would benefit greatly from more accurate and efficient amplification accounting. As future work, it is important to combine banded-inverse factorizations with Poisson subsampling in a computationally efficient manner.

\paragraph{Statement on LLM Usage.}
We acknowledge that the statement of Lemma~\ref{lem:sens-toeplitz-bound}, as well as the general idea underlying its proof, was suggested by Codex 5.5. However, the detailed proof presented here is our own. We further state that LLMs were not used to generate the proofs of technical statements. Apart from the aforementioned assistance, such tools were used only for editing and basic coding support.

\section*{Acknowledgement}
We thank Ryan McKenna, Arun Ganesh, and Brendan McMahan for fruitful discussions on an earlier version of the paper. We thank Jalaj Upadhyay for his valuable feedback on the paper’s proofs. 

Nikita Kalinin: This work is
supported in part by the Austrian Science Fund (FWF) [10.55776/COE12]. 

Aki Rehn and Antti Honkela: This work was supported by the Research Council of Finland (Flagship programme: Finnish Center for Artificial Intelligence, FCAI, Grant 356499 and Grant 359111), the Strategic Research Council at the Research Council of Finland (Grant 358247) as well as the European Union (Project 101070617).
Views and opinions expressed are however those of the author(s) only and do not necessarily reflect those of the European Union or the European Commission.
Neither the European Union nor the granting authority can be held responsible for them.
The authors wish to thank the CSC – IT Center for Science, Finland for supporting this project with computational and data storage resources.
We acknowledge CSC (Finland) for awarding this project access to the LUMI supercomputer, owned by the EuroHPC Joint Undertaking, hosted by CSC (Finland) and the LUMI consortium.
The authors acknowledge the research environment provided by ELLIS Institute Finland.

Joel Daniel Andersson: Funded by the European union.
Views and opinions expressed are however those of the author(s) only and do not necessarily reflect those of the European Union or the European Research Council Executive Agency.
Neither the European Union nor the granting authority can be held responsible for them.
This project has received funding from the European Research Council (ERC) under the European Union's Horizon 2020 research and innovation programme (MoDynStruct, No. 101019564) \includegraphics[width=0.9cm]{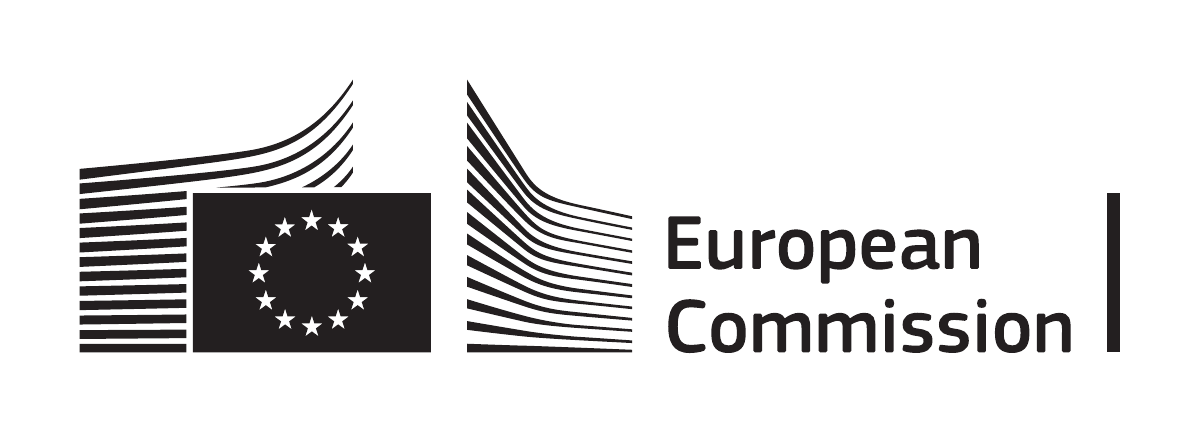}.


\bibliography{reference}
\bibliographystyle{abbrvnat}

\appendix
\clearpage

\section{Full Experimental Details}
\label{sec:experiment_details}

\begin{figure}[h]
    \centering
    \begin{subfigure}[t]{0.45\linewidth}
          \includegraphics[width=\linewidth]{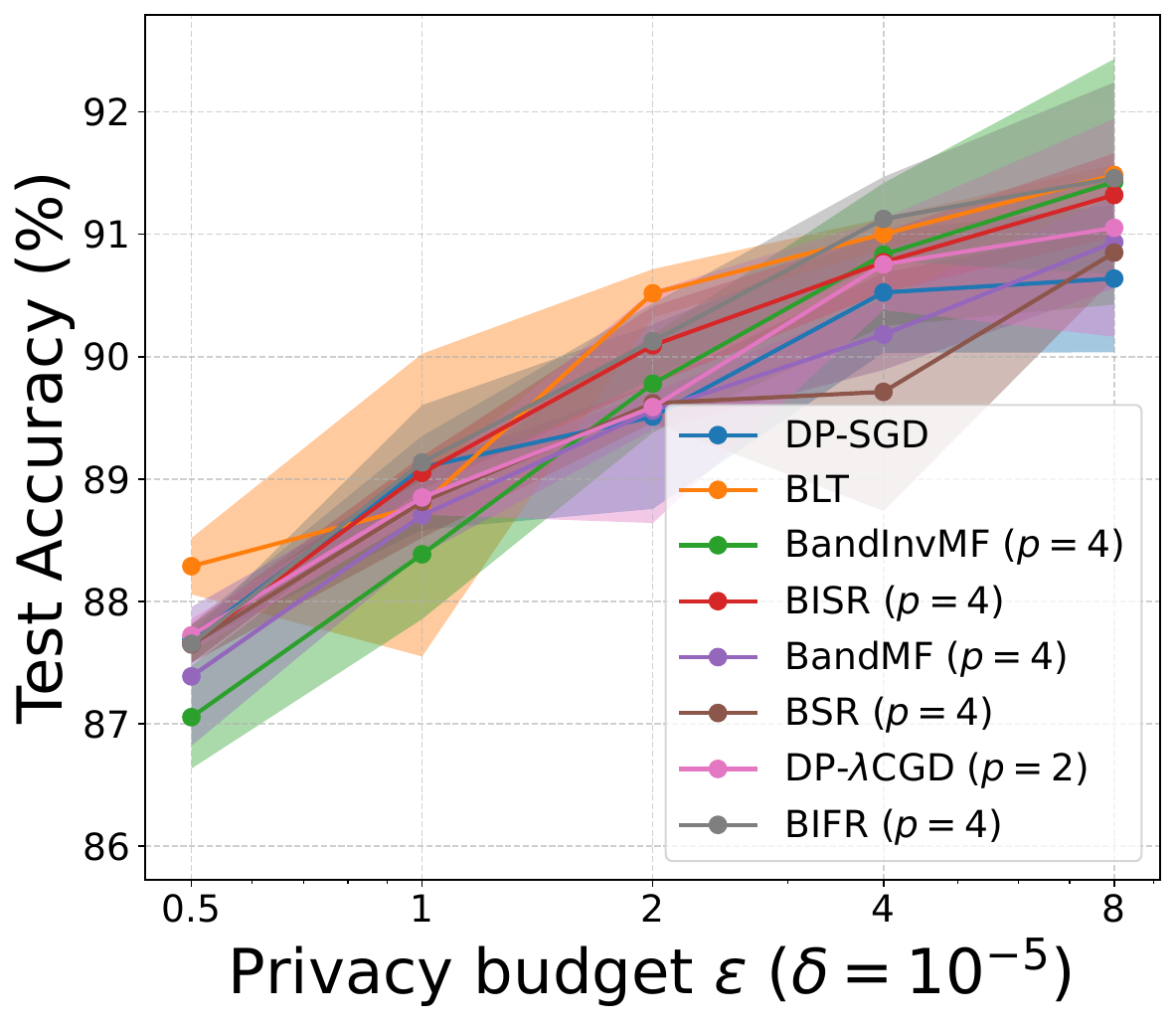}
        \caption{IMDb/BERT-base. low memory}
        \centering
        \label{fig:bifr_fixedp_highmem_cifar10_vgg}
    \end{subfigure}
    \hfill
    \begin{subfigure}[t]{0.45\linewidth}
        \centering
        \includegraphics[width=\linewidth]{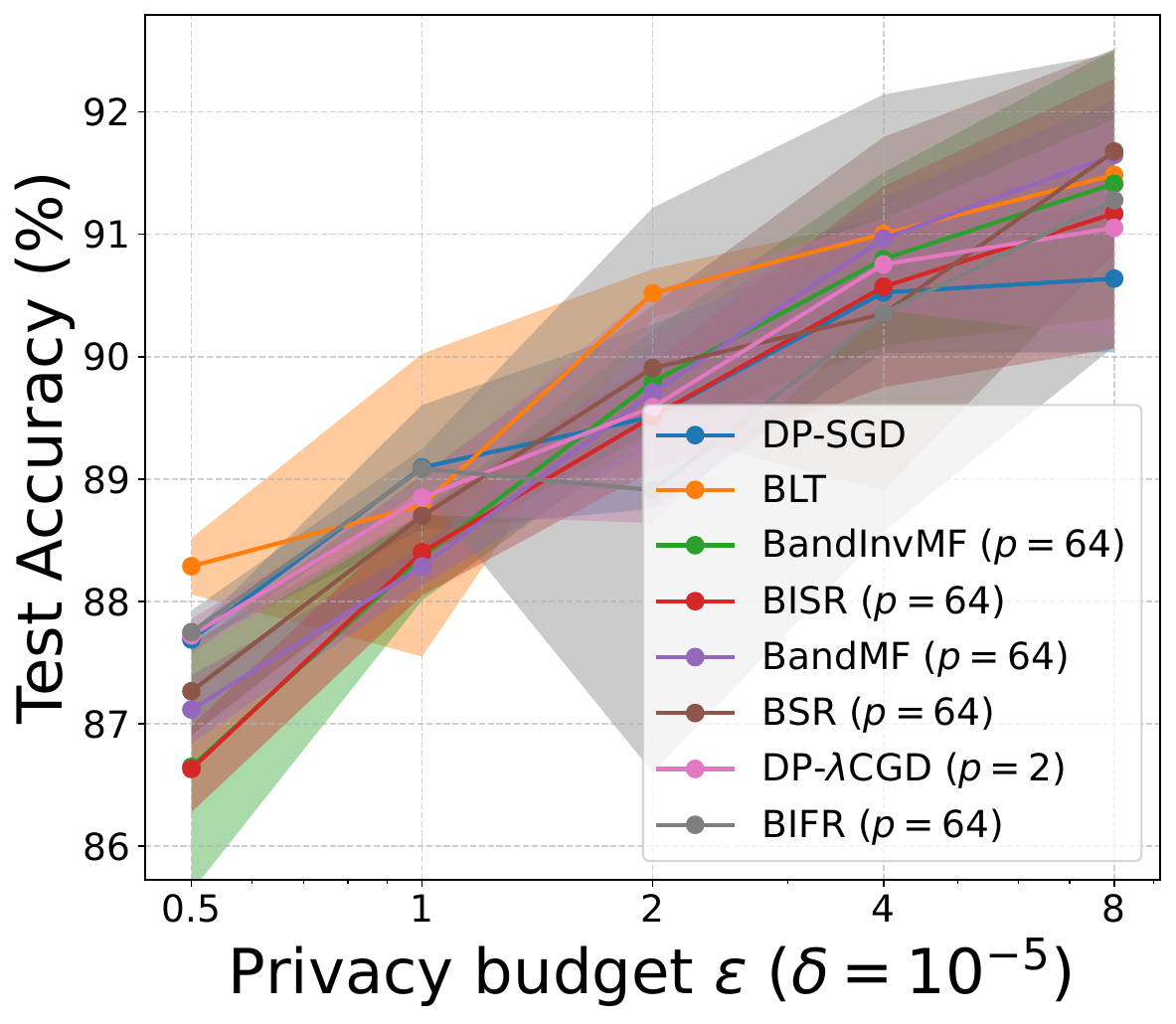}
        \caption{IMDb/BERT-base. high memory}
        \label{fig:bifr_fixedp_highmem_imdb_bert}
    \end{subfigure}
    \caption{Test accuracy of BERT-base fine-tuning on IMDb dataset in the low- ($p=4$) and high-memory ($p=64$) regimes for different factorizations across privacy budgets $\varepsilon$. We observed high variance in the resulting accuracies, which prevented us from drawing decisive conclusions from those runs.}
    \label{fig:bifr_fixedp_highmem}
\end{figure}

\begin{table}[h]
    \centering
    \small
    \caption{\method $\gamma$ values used in the fixed-$p$ experiments.
    The $p=2$ rows correspond to the DP-$\lambda$CGD comparison. These values are chosen based on the amplified RMSE.}
    \label{tab:bifr_exp_gamma}
    \begin{tabular}{llrrrrr}
        \toprule
        Dataset & $p$ & $\varepsilon=0.5$ & $\varepsilon=1$ &
        $\varepsilon=2$ & $\varepsilon=4$ & $\varepsilon=8$ \\
        \midrule
        CIFAR-10 & 2  & 0.80 & 0.90 & 0.90 & 0.95 & 0.95 \\
        CIFAR-10 & 4  & 0.60 & 0.70 & 0.80 & 0.85 & 0.85 \\
        CIFAR-10 & 64 & 0.30 & 0.30 & 0.35 & 0.40 & 0.45 \\
        IMDb     & 2  & 0.70 & 0.80 & 0.85 & 0.90 & 0.90 \\
        IMDb     & 4  & 0.50 & 0.55 & 0.70 & 0.75 & 0.80 \\
        IMDb     & 64 & 0.15 & 0.20 & 0.30 & 0.35 & 0.40 \\
        \bottomrule
    \end{tabular}
\end{table}

\paragraph{Hyperparameter selection}
For Bayesian optimization, we separate $10\%$ of the training split for validation.
This gives $45{,}000$ training and $5{,}000$ validation examples for CIFAR-10, and $22{,}500$ training and $2{,}500$ validation examples for IMDb.
For each method, dataset, privacy budget, and seed, we tune only the learning rate with Optuna~\citep{akiba2019optuna}, using 25 trials over the log-uniform range $[10^{-3},0.3]$.
After selecting the learning rate, we retrain on the full training set and evaluate the model on the test set.
We report test accuracy for both tasks.

We fix the batch size ($512$), clipping norm ($10$), bandwidth for all methods, and also the $\gamma$ for \method.
We select the optimal \method $\gamma$ from a pre-computed (using $200$k MC samples) amplified-RMSE grid.
For CIFAR-10, these are the optimal values shown in \Cref{fig:bifr_optimal_gamma}; for IMDb, we use the corresponding optimal values based on Amplified-RMSE.
\Cref{tab:bifr_exp_gamma} states the exact \method $\gamma$-values used in the experiments.

\paragraph{Compute resources}
We ran each training run as one Slurm job on one full GPU node.
Each node consists of four AMD MI250X GPU modules, one 64-core AMD CPU, 512 GiB CPU memory, and 512 GiB total GPU memory.
Individual CIFAR-10/VGG runs took around 1--2 node-hours, and individual IMDb/BERT-Base runs took 4.5--6 node hours including hyperparameter selection.
The reported 480 runs used approximately 5k GPU hours.
Additionally, for development, preliminary sweeps, failed jobs, and debugging, we estimate that we used an extra 5--10k GPU hours.

\subsection{Private training experiment values}
\label{app:train_exp_exact_values}

In this appendix, we report the final test accuracy values used in \Cref{fig:bifr_fixedp_lowmem,fig:bifr_fixedp_highmem}.
The CIs are $95\%$ $t$-intervals computed over three repeats.

\begin{table}[t!]
    \centering
    \scriptsize
    \caption{Final test accuracy for CIFAR-10/VGG, low memory.
    Entries are mean accuracy in percent with $95\%$ confidence intervals over three repeats.}
    \label{tab:bifr-accuracy-ci-lowmem-cifar10-vgg}
    \resizebox{\linewidth}{!}{%
    \begin{tabular}{lccccc}
        \toprule
        Method & $\varepsilon=0.5$ & $\varepsilon=1$ & $\varepsilon=2$ & $\varepsilon=4$ & $\varepsilon=8$ \\
        \midrule
        DP-SGD & $49.34 \pm 1.83$ & $53.56 \pm 1.69$ & $58.03 \pm 0.55$ & $\mathbf{61.86 \pm 2.80}$ & $64.52 \pm 3.09$ \\
        BLT & $\mathbf{51.09 \pm 1.97}$ & $\mathbf{55.49 \pm 2.82}$ & $\mathbf{58.12 \pm 2.89}$ & $61.03 \pm 3.78$ & $64.64 \pm 2.59$ \\
        BandInvMF ($p=4$) & $49.18 \pm 1.17$ & $54.10 \pm 1.58$ & $57.44 \pm 3.52$ & $60.75 \pm 3.48$ & $64.86 \pm 3.02$ \\
        BISR ($p=4$) & $49.38 \pm 1.35$ & $53.65 \pm 0.95$ & $56.32 \pm 2.66$ & $59.14 \pm 3.45$ & $62.51 \pm 3.09$ \\
        BandMF ($p=4$) & $48.58 \pm 1.85$ & $51.67 \pm 2.49$ & $54.01 \pm 2.45$ & $56.52 \pm 3.37$ & $59.64 \pm 3.71$ \\
        BSR ($p=4$) & $48.88 \pm 1.98$ & $52.20 \pm 1.71$ & $54.20 \pm 1.23$ & $56.35 \pm 3.01$ & $59.27 \pm 2.90$ \\
        DP-$\lambda$CGD ($p=2$) & $49.05 \pm 1.64$ & $52.74 \pm 0.86$ & $56.85 \pm 5.39$ & $59.81 \pm 1.94$ & $65.05 \pm 2.24$ \\
        \method ($p=4$) & $49.07 \pm 1.26$ & $53.70 \pm 2.10$ & $57.01 \pm 4.87$ & $61.38 \pm 3.94$ & $\mathbf{65.75 \pm 3.08}$ \\
        \bottomrule
    \end{tabular}%
    }
\end{table}

\begin{table}[t!]
    \centering
    \scriptsize
    \caption{Final test accuracy for IMDb/BERT-base, low memory.
    Entries are mean accuracy in percent with $95\%$ confidence intervals over three repeats.}
    \label{tab:bifr-accuracy-ci-lowmem-imdb-bert}
    \resizebox{\linewidth}{!}{%
    \begin{tabular}{lccccc}
        \toprule
        Method & $\varepsilon=0.5$ & $\varepsilon=1$ & $\varepsilon=2$ & $\varepsilon=4$ & $\varepsilon=8$ \\
        \midrule
        DP-SGD & $87.69 \pm 0.15$ & $\mathbf{89.09 \pm 1.55}$ & $89.51 \pm 2.29$ & $90.52 \pm 1.50$ & $90.64 \pm 1.83$ \\
        BLT & $\mathbf{88.29 \pm 0.70}$ & $88.79 \pm 3.76$ & $\mathbf{90.52 \pm 0.60}$ & $\mathbf{91.00 \pm 0.39}$ & $\mathbf{91.48 \pm 0.26}$ \\
        BandInvMF ($p=4$) & $87.05 \pm 1.28$ & $88.39 \pm 1.61$ & $89.78 \pm 1.21$ & $90.83 \pm 1.78$ & $\mathbf{91.43 \pm 3.05}$ \\
        BISR ($p=4$) & $87.65 \pm 0.48$ & $89.05 \pm 0.41$ & $90.09 \pm 0.97$ & $90.77 \pm 0.76$ & $\mathbf{91.32 \pm 1.04}$ \\
        BandMF ($p=4$) & $87.39 \pm 1.72$ & $88.70 \pm 0.97$ & $89.56 \pm 0.47$ & $90.18 \pm 0.88$ & $90.93 \pm 1.14$ \\
        BSR ($p=4$) & $87.65 \pm 0.46$ & $88.81 \pm 0.89$ & $89.62 \pm 0.48$ & $89.71 \pm 2.96$ & $90.85 \pm 0.65$ \\
        DP-$\lambda$CGD ($p=2$) & $87.72 \pm 0.41$ & $88.85 \pm 0.43$ & $89.59 \pm 2.88$ & $90.75 \pm 1.14$ & $91.05 \pm 2.71$ \\
        \method ($p=4$) & $87.65 \pm 0.48$ & $\mathbf{89.14 \pm 0.65}$ & $90.13 \pm 0.94$ & $\mathbf{91.12 \pm 1.05}$ & $\mathbf{91.46 \pm 2.38}$ \\
        \bottomrule
    \end{tabular}%
    }
\end{table}

\begin{table}[t!]
    \centering
    \scriptsize
    \caption{Final test accuracy for CIFAR-10/VGG, high memory.
    Entries are mean accuracy in percent with $95\%$ confidence intervals over three repeats.}
    \label{tab:bifr-accuracy-ci-highmem-cifar10-vgg}
    \resizebox{\linewidth}{!}{%
    \begin{tabular}{lccccc}
        \toprule
        Method & $\varepsilon=0.5$ & $\varepsilon=1$ & $\varepsilon=2$ & $\varepsilon=4$ & $\varepsilon=8$ \\
        \midrule
        DP-SGD & $49.34 \pm 1.83$ & $53.56 \pm 1.69$ & $58.03 \pm 0.55$ & $61.86 \pm 2.80$ & $64.52 \pm 3.09$ \\
        BLT & $\mathbf{51.09 \pm 1.97}$ & $\mathbf{55.49 \pm 2.82}$ & $58.12 \pm 2.89$ & $61.03 \pm 3.78$ & $64.64 \pm 2.59$ \\
        BandInvMF ($p=64$) & $44.40 \pm 1.70$ & $49.10 \pm 1.44$ & $54.65 \pm 3.03$ & $60.93 \pm 2.14$ & $66.75 \pm 2.35$ \\
        BISR ($p=64$) & $45.53 \pm 1.35$ & $50.53 \pm 1.55$ & $55.86 \pm 1.84$ & $62.38 \pm 2.97$ & $67.91 \pm 2.61$ \\
        BandMF ($p=64$) & $45.37 \pm 1.90$ & $50.36 \pm 0.75$ & $55.75 \pm 2.18$ & $62.00 \pm 2.82$ & $67.16 \pm 2.53$ \\
        BSR ($p=64$) & $47.02 \pm 2.11$ & $52.73 \pm 1.33$ & $58.05 \pm 2.88$ & $64.25 \pm 3.60$ & $68.18 \pm 2.50$ \\
        DP-$\lambda$CGD ($p=2$) & $49.05 \pm 1.64$ & $52.74 \pm 0.86$ & $56.85 \pm 5.39$ & $59.81 \pm 1.94$ & $65.05 \pm 2.24$ \\
        \method ($p=64$) & $48.58 \pm 2.11$ & $54.14 \pm 1.84$ & $\mathbf{58.75 \pm 3.08}$ & $\mathbf{64.47 \pm 1.99}$ & $\mathbf{68.51 \pm 2.21}$ \\
        \bottomrule
    \end{tabular}%
    }
\end{table}

\begin{table}[h]
    \centering
    \scriptsize
    \caption{Final test accuracy for IMDb/BERT-base, high memory.
    Entries are mean accuracy in percent with $95\%$ confidence intervals over three repeats.}
    \label{tab:bifr-accuracy-ci-highmem-imdb-bert}
    \resizebox{\linewidth}{!}{%
    \begin{tabular}{lccccc}
        \toprule
        Method & $\varepsilon=0.5$ & $\varepsilon=1$ & $\varepsilon=2$ & $\varepsilon=4$ & $\varepsilon=8$ \\
        \midrule
        DP-SGD & $87.69 \pm 0.15$ & $\mathbf{89.09 \pm 1.55}$ & $89.51 \pm 2.29$ & $90.52 \pm 1.50$ & $90.64 \pm 1.83$ \\
        BLT & $\mathbf{88.29 \pm 0.70}$ & $88.79 \pm 3.76$ & $\mathbf{90.52 \pm 0.60}$ & $\mathbf{91.00 \pm 0.39}$ & $91.48 \pm 0.26$ \\
        BandInvMF ($p=64$) & $86.65 \pm 3.09$ & $88.36 \pm 1.08$ & $89.79 \pm 1.29$ & $90.80 \pm 2.14$ & $91.41 \pm 3.33$ \\
        BISR ($p=64$) & $86.63 \pm 1.07$ & $88.40 \pm 1.07$ & $89.51 \pm 1.34$ & $90.57 \pm 2.50$ & $91.17 \pm 3.35$ \\
        BandMF ($p=64$) & $87.12 \pm 0.85$ & $88.29 \pm 0.47$ & $89.71 \pm 1.32$ & $90.96 \pm 1.02$ & $\mathbf{91.65 \pm 1.36}$ \\
        BSR ($p=64$) & $87.27 \pm 1.10$ & $88.70 \pm 1.01$ & $89.91 \pm 1.56$ & $90.35 \pm 4.39$ & $\mathbf{91.68 \pm 2.54}$ \\
        DP-$\lambda$CGD ($p=2$) & $87.72 \pm 0.41$ & $88.85 \pm 0.43$ & $89.59 \pm 2.88$ & $90.75 \pm 1.14$ & $91.05 \pm 2.71$ \\
        \method ($p=64$) & $87.75 \pm 0.54$ & $\mathbf{89.08 \pm 0.49}$ & $88.91 \pm 7.02$ & $90.35 \pm 5.44$ & $91.28 \pm 3.64$ \\
        \bottomrule
    \end{tabular}%
    }
\end{table}

\clearpage
\section{Amplified RMSE Analysis}
\label{sec:amplified_rmse}

\begin{table}[h!]
    \caption{Comparison of amplified RMSE under the \textbf{Balls-in-Bins accountant}, with non-amplified
RMSE shown in the last column, for different factorization methods with
$n = 2048$ and $k = 8$. For each value, we optimize the bandwidth $p$ over powers
of $2$, as well as $\gamma$, $c$, or $\nu$, where applicable. To get a tight
error estimate, we use $2 \times 10^6$ samples for the Balls-in-Bins accountant.
The proposed $\gamma$-banded and banded-inverse fraction-root factorizations
outperform the other factorizations in the high-privacy regime, including
BandMF w/ Opt., which is optimized for amplified RMSE. In the low-privacy regime,
BandMF w/ Opt. yields the lowest error, while BandMF achieves the best
non-amplified RMSE.}
    \centering

\begin{tabular}{lrrrrc}
\toprule
Method & $\varepsilon{=}1$ & $\varepsilon{=}2$ & $\varepsilon{=}4$ & $\varepsilon{=}8$ & $\varepsilon{=}8$ (Non-Amp)\\
\midrule
BISR & 22.99 & 13.23 & 7.60 & 4.74 & 6.75\\
$\gamma$-BIFR (ours) & \textbf{22.41} & 12.65  & 7.57 & 4.74 & 6.69 \\
BandInvMF   & 28.79 & 15.43 & 8.42 & 4.81 & 6.55 \\
DP-$\lambda$CGD & 25.15 & 16.35 & 10.90& 7.24& 9.68\\
\midrule
BandMF      & 25.35 & 14.59 & 8.42 & 4.83 & \textbf{6.35}\\
BandMF w/ Opt. &  22.80 & \textbf{12.60}  & \textbf{7.44} & \textbf{4.53} &   - \\

BandMF + Heuristic\quad \qquad & 24.71  & 13.34  & 7.57 & 4.82 &   - \\
BSR         &  23.24  &  13.16 & 7.82 & 4.55 & 6.57\\
$\gamma$-BFR (ours) & 22.65& \textbf{12.62} & \textbf{7.44} & \textbf{4.55} & 6.38 \\
$1/j + c$ & 22.54 & 12.71 & 7.50 & 4.62 & 6.54\\
\midrule
BLT & 29.26 & 15.50 & 8.48 & 4.87 & 6.65 \\
DP-$\nu$FTRL & 22.72 & 12.83 & 7.61& 4.82& 6.90\\
k-ary Tree & 48.18 & 25.88 & 14.16 & 8.15 & 12.53\\
\bottomrule
\end{tabular}    
    \label{tab:amplified_rmse}
\end{table}

\begin{table}[h]
\caption{Comparison of amplified RMSE under \textbf{Poisson and $b$-min-sep accountants} for different banded
factorization methods, with $n = 2048$ and $k = 8$. For each value, we optimize the
bandwidth $p$ over powers of $2$, as well as $\gamma$ or $c$, where applicable.
Poisson accounting is exact, whereas $b$-min-sep accounting requires Monte Carlo
sampling; for computational reasons, we use $10^5$ samples.}    
    \centering
    \begin{tabular}{lrrrrc}
    \toprule
    Method & $\varepsilon{=}1$ & $\varepsilon{=}2$ & $\varepsilon{=}4$ & $\varepsilon{=}8$ & $\varepsilon{=}8$ (Non-Amp)\\
    \midrule
    BandMF w/ b-min-sep & 23.59  & 13.48  & 8.25 & 4.71 &   - \\
    BandMF w/ Pois.& 24.45  & 14.61  & 9.19 & 5.95 &   - \\
    BSR   w/ b-min-sep  &  22.65  &  12.95 & 7.28 & 4.25 & -\\
    $\gamma$-BFR   w/ b-min-sep  &  \textbf{22.13}  & 12.21 & \textbf{6.89} & \textbf{4.11} & -\\
    $1/j + c$ w/ b-min-sep & 22.19 & \textbf{12.17} & \textbf{6.90} & 4.20 & -\\
\end{tabular}
    \label{tab:amplified_rmse_b_min}
\end{table}

We compare different factorizations in terms of amplified RMSE. For amplification, we primarily use the Balls-in-Bins scheme, which is the most efficient and general scheme~\citep{choquette2024near}; see Table~\ref{tab:amplified_rmse}. For banded methods, in this section, we also consider cyclic Poisson amplification~\citep{choquette2023amplified}, a strict generalization of amplified DP-SGD, and the new $b$-min-sep sampling scheme~\citep{dong2026privacy}; see Table~\ref{tab:amplified_rmse_b_min}.

We divide the factorizations in Table~\ref{tab:amplified_rmse} into three groups. The first group, banded inverse methods, consists of Banded Inverse Square Root (BISR), the proposed \method, Banded Inverse Matrix Factorization (BandInvMF) \citep{kalinin2025back}, and DP-$\lambda$CGD \citep{kalinin2026dplambdacgd}. The second group, banded methods, includes BandMF \citep{scalingmckenna2024}; its version optimized for Balls-in-Bins accounting (BandMF w/ Opt.) \citep{choquette2024near}; its version with a heuristic (see Appendix H of \citet{scalingmckenna2024}); Banded Square Root (BSR) \citep{kalinin2024}; \(\gamma\)-BFR, a banded version of the proposed factorization; and \(1/j + c\), a single-parameter factorization with \(c > 0\), proposed in Appendix J of \citep{scalingmckenna2024}. The third group consists of non-banded factorizations, including BLT \citep{hasslefree2024} and DP-\(\nu\)FTRL \citep{choquette2023correlated}; for the latter, we optimize over the choice of \(\nu\) to achieve the best amplified RMSE.

We also compare against a \(k\)-ary tree mechanism, which predates matrix-factorization mechanisms and generalizes the binary tree mechanism to an arbitrary branching factor \citep{qardaji_2013, cardoso2022differentially, andersson2024count}.
The main challenge is that, when written naively as an encoding matrix, it is not lower triangular.
One can make the encoding matrix lower triangular by a QL decomposition without affecting the error, since the column inner products are preserved, as suggested in \citet{Denisov}.
However, the resulting entries may become negative, preventing tight accounting in multi-participation settings, both with and without amplification.
Fortunately, standard $k$-ary tree mechanisms based on unweighted aggregation of nodes can be expressed as lower-triangular factorizations with entries in $\{0,1\}$.
By contrast, tree-based mechanisms based on more advanced aggregation techniques (such as leveraging \emph{subtraction} of nodes~\citep{honaker2015efficient, andersson2024count}) achieve lower RMSE in the single-participation setting, but are not known to admit factorizations with a positive and lower-triangular encoding matrix.
As a result, they are beyond the scope of current accounting methods.

Many of the methods in Table~\ref{tab:amplified_rmse} have never been considered in the multi-participation setting with amplification, making this the first thorough comparison in the amplified setting.

We also compare alternative amplification schemes for banded methods in Table~\ref{tab:amplified_rmse_b_min}, including cyclic Poisson amplification for BandMF (BandMF w/ Pois.). We note that the amplification effect depends only on the bandwidth and not on the coefficients, making BandMF automatically well suited to this amplification scheme. We also compare against the novel $b$-min-sep amplification scheme~\citep{dong2026privacy}, which leads to nontrivial differences depending on the coefficients. We evaluate it for BandMF (BandMF w/ $b$-min-sep), BSR, $\gamma$-BFR, a banded version of our proposed factorization, and $1/j + c$. We observe that the parametrized factorizations achieve the lowest amplified RMSE, significantly lower than the corresponding values under Balls-in-Bins subsampling.

\section{Proofs}\label{app:proofs}

The appendix is structured as follows.
First, we prove the intermediate result of \Cref{lem:sens-toeplitz-bound}, which relates the sensitivity $\mathrm{sens}_{k,b}(\cdot)$ to the standard operator norms $\|\cdot\|_{1\to 1}$ and $\|\cdot\|_{1\to 2}$.
This allows for bounding the error $\mathcal{E}(B_\gamma^{(p)}, C_\gamma^{(p)})$ purely in terms of $\| B_{\gamma}^{(p)} \|_F$, $\|C_{\gamma}^{(p)} \|_{1\to 1}$ and $\| C_{\gamma}^{(p)}\|_{1\to 2}$.
We proceed to compute tight bounds on the coefficients of $B_{\gamma}^{(p)}$ and $C_{\gamma}^{(p)}$, from which we derive bounds on the corresponding matrix norms, and consequently the error (\Cref{cor:rmse}).
Lastly, we compute the error for different choices of $\gamma \in (0, 1)$, which ultimately yields \Cref{thm:bifr-multi-participation}.

Throughout $O(\cdot)$ only hides absolute constants independent of $n, p$ and $\gamma$.
We also note that $\log(\cdot)$ always refers to the natural logarithm with base $e$.

\subsection{Bounding the Sensitivity via Operator Norms}

Before proving \Cref{lem:sens-toeplitz-bound}, we briefly recount the operator norms $\|\cdot \|_{1\to 1}$ and $\| \cdot \|_{1\to 2}$ that feature in it.
For any real-valued matrix $M$, $\| M \|_{1 \to 1}$ and $\|M\|_{1\to 2}$ correspond to the largest $\ell_1$ respectively $\ell_2$ norm of any column in $M$.
In the particular case of lower-triangular Toeplitz matrices, the largest such norms are always achieved for the first column.

\begin{proof}[Proof of \Cref{lem:sens-toeplitz-bound}]
Throughout the proof we zero-index the columns of $C$, and for $j\in\{0, \dots, n-1\}$ let $C_j := C_{[\cdot,\, j]}$ denote the $j$-th column of $C$.
By \Cref{thm:b-sensitivity},
\begin{align}
    \mathrm{sens}_{k,b}(C)^2
    = \left\|\sum_{j=0}^{k-1}C_{jb}\right\|_2^2
    = \sum_{i=0}^{k-1}\sum_{j=0}^{k-1}\langle C_{ib},\,C_{jb}\rangle
    \leq k \langle C_0, C_0 \rangle + 2\sum_{0 \leq i < j \leq k-1} \langle C_{ib}, C_{jb} \rangle.
\end{align}
where the inequality uses that $\langle C_j, C_j \rangle = \sum_{r=0}^{n-1-j} c_r^2 \leq \sum_{r=0}^{n-1} c_r^2 = \langle C_0, C_0 \rangle$.
The remainder of the proof deals with the cross-term sum.

Note that for any of the terms $\langle C_{ib}, C_{jb} \rangle$ (where $i < j$), we have that
\begin{align}
    \langle C_{ib}, C_{jb} \rangle
    &= \sum_{\ell = jb}^{n-1} c_{\ell - ib} c_{\ell - jb}
    = \sum_{r=0}^{n - 1 - jb}  c_{r + (j-i) b} c_{r}\\
    &\leq \sum_{r=0}^{n - 1 - (j-i)b}  c_{r + (j-i) b} c_{r}
    = \sum_{\ell = (j-i)b}^{n-1} c_{\ell - ib} c_{\ell - jb}
    = \langle C_0, C_{(j-i)b} \rangle,
\end{align}
where the inequality uses that expanding the summation range of non-negative summands can only increase the value of the sum.
Hence, we can upper bound the cross-term sum by as
\begin{equation}\label{eq:cross-term-sens}
    2\sum_{0 \leq i < j \leq k-1} \langle C_{ib}, C_{jb} \rangle
    \leq 2 \sum_{m=1}^{k-1} (k-m) \langle C_0, C_{mb} \rangle
    \leq 2k \sum_{m=1}^{k-1} \langle C_0, C_{mb} \rangle.
\end{equation}

To upper bound $\langle C_0, C_{mb}\rangle$, we use an averaging argument.
First, note that the map $q \mapsto \langle C_0, C_{q} \rangle = \sum_{r=0}^{n-1-q} c_{r} c_{r+q}$ is \emph{decreasing} in $q$, as $0 \leq c_{r+q+1} \leq c_{r+q}$ (each summand decreases in value), and the summation range decreases with $q$.
In particular, this implies that for each $m \geq 1$, $\langle C_0, C_{mb}\rangle$ is the minimum taken over the window $h\in[(m-1)b + 1, mb]$, and so is bounded by the average:
\begin{equation}\label{eq:average-trick-sens}
   \langle C_0, C_{mb} \rangle \leq \frac{1}{b} \sum_{h=(m-1)b + 1}^{mb} \langle C_0, C_{h} \rangle.
\end{equation}
Continuing from \eqref{eq:cross-term-sens} and applying \eqref{eq:average-trick-sens}:
\begin{align}
    2k \sum_{m=1}^{k-1} \langle C_0, C_{mb} \rangle
    \leq \frac{2k}{b} \sum_{m=1}^{k-1} \sum_{h=(m-1)b + 1}^{mb} \langle C_0, C_{h} \rangle
    = \frac{2k}{b} \sum_{h=1}^{(k-1)b} \langle C_0, C_{h} \rangle
    \leq \frac{2k}{b} \sum_{h=1}^{n-1} \langle C_0, C_{h} \rangle
\end{align}
where the first equality uses that the windows $\{ [(m-1)b + 1, mb] : 1\leq m\leq k-1\}$ exactly tile $[1, (k-1)b]$, and the last inequality that $(k-1)b \leq n-1$.

To finish the proof, we will argue that $\sum_{h=1}^{n-1} \langle C_0, C_h \rangle = \frac{1}{2}(\|C_0\|_1^2 - \| C_0 \|_2^2)$.
Indeed,
\begin{align}
    \sum_{h=0}^{n-1} \langle C_0, C_h \rangle
    = \sum_{h=0}^{n-1}\sum_{r=0}^{n-1-h} c_{r} c_{r+h}
    = \sum_{0 \leq r \leq r' \leq n - 1} c_{r} c_{r'}
    = \frac{1}{2}\left(\|C_0\|_{1}^2 + \| C_0\|_{2}^2\right),
\end{align}
where the last step uses that every term $c_r c_{r'}$ appears twice in $\| C_0 \|_1^2$, except for $c_r^2$ which only appears once, which we correct by adding $\| C_0 \|_2^2 = \sum_r c_r^2$, after which dividing by $2$ yields the equality.
Hence,
\begin{align}
   \sum_{h=1}^{n-1} \langle C_0, C_h \rangle
   = -\langle C_0, C_0\rangle + \sum_{h=0}^{n-1} \langle C_0, C_h \rangle
   = \frac{1}{2}(\|C_0\|_1^2 - \| C_0 \|_2^2).
\end{align}
Putting everything together,
\begin{align}
    \mathrm{sens}_{k, b}(C)^2
    &\leq k \| C_0 \|_{2}^2 + \frac{k}{b} \left( \|C_0\|_{1}^2 - \| C_0 \|_{2}^2\right)\\
    &= k\left(1-\frac{1}{b}\right) \| C_0 \|_{2}^2 + \frac{k}{b}\|C_0\|_{1}^2\\
    &\leq k \| C_0 \|_{2}^2 + \frac{k}{b}\|C_0\|_{1}^2,
\end{align}
where the final statement follows from identifying $\| C_0 \|_{2}^2 = \| C \|_{1\to 2}^2$ and $\|C_0\|_{1}^2 = \|C\|_{1\to 1}^2$, as each of the norms correspond to largest column norm measured in either $\ell_2$ or $\ell_1$.
\end{proof}

\subsection{Coefficient Bounds}

In this section we give bounds on coefficient sequences closely related to the \method factorization.
In particular, we recall from \Cref{def:gamma_bifr} that $C_{\gamma}^{(p)}$ is defined through its inverse where
\begin{equation*}  
\bigl(C_\gamma^{(p)}\bigr)^{-1} := 
\begin{pmatrix} 
1 & 0 & \cdots & 0 & \cdots &0\\ 
(\tilde c_{\gamma})_1 & 1 & \cdots & 0 & \cdots &0 \\ 
\vdots & \vdots & \ddots & \vdots & \ddots & \vdots\\ 
(\tilde c_{\gamma})_{p - 1} & (\tilde c_{\gamma})_{p-2} & \cdots & 1 & \cdots &0\\
0 & (\tilde c_{\gamma})_{p - 1} & \cdots & (\tilde c_{\gamma})_1 &\cdots & 0 \\ 
\vdots & \vdots & \ddots & \vdots & \ddots & \vdots \\ 
0 & 0 & \cdots & (\tilde c_{\gamma})_{p - 1}&\cdots & 1 
\end{pmatrix}, 
\qquad \text{for} \quad (\tilde c_{\gamma})_i := (-1)^i \binom{\gamma}{i}, 
\end{equation*}
and $\binom{\gamma}{i} = \prod_{j = 1}^{i} \frac{\gamma + 1 - j}{j}$ is a generalized binomial coefficient.
For the corresponding subdiagonals of the Toeplitz matrix $C_{\gamma}^{(p)}$ we use the notation $((c_{\gamma}^{(p)})_j)_{j\geq 0}$, which we bound in \Cref{lem:calpha-p-geometric}.
Additionally, observe that $B_{\gamma}^{(p)} C_{\gamma}^{(p)} = E$, and so $B_{\gamma}^{(p)}$ equals $E (C_{\gamma}^{(p)})^{-1}$.
Since $E$ is the prefix-sum matrix, the corresponding entries of $B_{\gamma}^{(p)}$ are prefix sums on $((\tilde{c}_\gamma)_j)_{j\geq 0}$, and we bound these in \Cref{cor:prefix-sums-ctilde}.

In what follows \Cref{lem:bounds-calpha}~and~\ref{lem:bounds-ctilde} produce bounds on the sequences $((c_\gamma)_j)_{j\geq 0}$ and $((\tilde{c}_\gamma)_j)_{j\geq 0}$; \Cref{cor:prefix-sums-ctilde} bounds prefix sums on $((\tilde{c}_\gamma)_j)_{j\geq 0}$ and \Cref{lem:calpha-p-monotone}~and~\ref{lem:calpha-p-geometric} bound $((c_{\gamma}^{(p)})_j)_{j\geq 0}$.

\begin{lemma}[Bounds for $(c_\gamma)_j$]
\label{lem:bounds-calpha}
Let $0<\gamma<1$ and define $c_\gamma=\bigl((c_\gamma)_j\bigr)_{j\ge 0}$ by
\begin{equation}
\label{eq:calpha-def}
(1-x)^{-\gamma}=\sum_{j=0}^{\infty} (c_\gamma)_j\,x^j,\qquad |x|<1.
\end{equation}
Then, for every $j\ge 0$,
\begin{equation}
\label{eq:calpha-gamma-form}
(c_\gamma)_j=\binom{\gamma+j-1}{j}
=\frac{\Gamma(\gamma+j)}{\Gamma(\gamma)\,\Gamma(j+1)}.
\end{equation}
Moreover, for every integer $j\ge 1$,
\begin{equation}
\label{eq:calpha-bounds}
\frac{1}{\Gamma(\gamma)\,(j+1)^{1-\gamma}}
\;<\;
(c_\gamma)_j
\;<\;
\frac{1}{\Gamma(\gamma)\,j^{1-\gamma}}.
\end{equation}
\end{lemma}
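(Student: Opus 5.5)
The plan has two parts: identify the coefficients through the binomial series, then bound the Gamma ratio using log-convexity of $\Gamma$ (Gautschi-type inequalities).

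\emph{Identification.} By the generalized binomial theorem, $(1-x)^{-\gamma}=\sum_{j\ge0}\binom{-\gamma}{j}(-x)^j$ for $|x|<1$. Next, observe that
\[
(-1)^j\binom{-\gamma}{j}=\prod_{i=1}^{j}\frac{\gamma+i-1}{i}=\binom{\gamma+j-1}{j}.
\]
Finally, the product $\prod_{i=1}^{j}(\gamma+i-1)=\Gamma(\gamma+j)/\Gamma(\gamma)$ follows from iterating $\Gamma(z+1)=z\Gamma(z)$. Together with $j!=\Gamma(j+1)$, this gives \eqref{eq:calpha-gamma-form}. This part is routine.

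\emph{Bounds.} Multiplying through by $\Gamma(\gamma)$, the claim \eqref{eq:calpha-bounds} becomes
\[
(j+1)^{\gamma-1}<\frac{\Gamma(j+\gamma)}{\Gamma(j+1)}<j^{\gamma-1}, \qquad j\ge1.
\]
This is exactly Gautschi's inequality $x^{1-s}<\Gamma(x+1)/\Gamma(x+s)<(x+1)^{1-s}$ with $x=j$ and $s=\gamma\in(0,1)$. I would prove it directly from the strict log-convexity of $\Gamma$ on $(0,\infty)$, which I take as standard (Bohr--Mollerup).

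For the \emph{lower bound}, write $j+\gamma=\gamma(j+1)+(1-\gamma)j$. Strict log-convexity then gives
\[
\Gamma(j+\gamma)<\Gamma(j+1)^{\gamma}\,\Gamma(j)^{1-\gamma}=\Gamma(j+1)\,j^{\gamma-1},
\]
where the equality uses $\Gamma(j)=\Gamma(j+1)/j$. This is the upper bound on $(c_\gamma)_j$.

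For the \emph{other direction}, write $j+1=\gamma(j+\gamma)+(1-\gamma)(j+\gamma+1)$. Log-convexity gives
\[
\Gamma(j+1)<\Gamma(j+\gamma)^{\gamma}\,\Gamma(j+\gamma+1)^{1-\gamma}=\Gamma(j+\gamma)\,(j+\gamma)^{1-\gamma}.
\]
Hence $\Gamma(j+\gamma)/\Gamma(j+1)>(j+\gamma)^{\gamma-1}\ge(j+1)^{\gamma-1}$, since $j+\gamma<j+1$ and the exponent $\gamma-1$ is negative. This is the lower bound on $(c_\gamma)_j$.

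\emph{Main obstacle.} The only delicate step is choosing the convex combinations so that the extra Gamma factor telescopes through $\Gamma(z+1)=z\Gamma(z)$, and checking the direction of each inequality given $\gamma-1<0$. Strictness comes from strict log-convexity together with $0<\gamma<1$, so both combinations are nontrivial. An alternative route, if preferred, is induction on $j$ from the ratio $(c_\gamma)_{j+1}/(c_\gamma)_j=(j+\gamma)/(j+1)$. However, that requires comparing this ratio with $(j/(j+1))^{1-\gamma}$ via Bernoulli's inequality and fixing the base case separately, so I would use the log-convexity argument.
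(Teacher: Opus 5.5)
Your proposal is correct and follows the paper's proof: identify the coefficients via the binomial series, then apply Gautschi's inequality with $x=j$, $s=\gamma$. The only difference is that you derive Gautschi's inequality from strict log-convexity of $\Gamma$ (which even gives the slightly sharper lower bound $(j+\gamma)^{\gamma-1}$), whereas the paper simply cites it.
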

\begin{proof}
The binomial series gives
\begin{equation}
(c_\gamma)_j=(-1)^j\binom{-\gamma}{j}=\binom{\gamma+j-1}{j}=\frac{\Gamma(\gamma+j)}{\Gamma(\gamma)\,\Gamma(j+1)}.
\end{equation}
For the bounds, rewrite
\begin{equation}
(c_\gamma)_j
=\frac{1}{\Gamma(\gamma)}\,\frac{\Gamma(j+\gamma)}{\Gamma(j+1)}.
\end{equation}
By Gautschi's inequality (applied with $x=j$ and $s=\gamma\in(0,1)$),
\begin{equation}
(j+1)^{\gamma-1}
<\frac{\Gamma(j+\gamma)}{\Gamma(j+1)}
<j^{\gamma-1}.
\end{equation}
Multiplying by $1/\Gamma(\gamma)$ yields \Cref{eq:calpha-bounds}.
\end{proof}

\begin{lemma}[Bounds for $(\tilde c_\gamma)_j$]
\label{lem:bounds-ctilde}
Let $0<\gamma<1$ and write
\begin{equation}
(1-x)^\gamma=\sum_{j=0}^\infty (\tilde c_\gamma)_j\,x^j,\qquad |x|<1.
\end{equation}
Then $(\tilde c_\gamma)_0=1$ and, for every $j\ge 1$,
\begin{equation}
(\tilde c_\gamma)_j = (-1)^j\binom{\gamma}{j}
=-\,\frac{\gamma\,\Gamma(j-\gamma)}{\Gamma(1-\gamma)\,\Gamma(j+1)}.
\end{equation}
Moreover, for every $j\ge2$,
\begin{equation}
\frac{\gamma}{\Gamma(1-\gamma)}\,\frac{1}{j^{\gamma+1}}
\;<\;
|(\tilde c_\gamma)_j|
\;<\;
\frac{\gamma}{\Gamma(1-\gamma)}\,\frac{1}{j\,(j-1)^{\gamma}}.
\end{equation}
\end{lemma}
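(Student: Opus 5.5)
The plan has two parts: first the closed form via the generalized binomial series, then the bounds via Gautschi's inequality, mirroring the proof of Lemma~\ref{lem:bounds-calpha}.

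\textbf{Closed form.} By the binomial series,
\[
(1-x)^\gamma=\sum_{j\ge0}\binom{\gamma}{j}(-x)^j ,
\]
so $(\tilde c_\gamma)_0=1$ and $(\tilde c_\gamma)_j=(-1)^j\binom{\gamma}{j}$, which matches Definition~\ref{def:gamma_bifr}. To get the Gamma form, expand
\[
\binom{\gamma}{j}=\frac{\gamma(\gamma-1)\cdots(\gamma-j+1)}{j!}.
\]
Pull out the first factor $\gamma$. Each of the remaining $j-1$ factors $(\gamma-i)$, for $i=1,\dots,j-1$, equals $-(i-\gamma)$; this contributes a sign $(-1)^{j-1}$ and the product $\prod_{i=1}^{j-1}(i-\gamma)$. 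Using $\Gamma(z+1)=z\Gamma(z)$ repeatedly from $z=1-\gamma$,
\[
\prod_{i=1}^{j-1}(i-\gamma)=\frac{\Gamma(j-\gamma)}{\Gamma(1-\gamma)}.
\]
Combining the signs, $(-1)^j(-1)^{j-1}=-1$, which gives
\[
(\tilde c_\gamma)_j=-\frac{\gamma\,\Gamma(j-\gamma)}{\Gamma(1-\gamma)\,\Gamma(j+1)}.
\]
In particular $|(\tilde c_\gamma)_j|=\frac{\gamma}{\Gamma(1-\gamma)}\cdot\frac{\Gamma(j-\gamma)}{\Gamma(j+1)}$.

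\textbf{Reduction to a Gamma ratio.} Write
\[
\frac{\Gamma(j-\gamma)}{\Gamma(j+1)}=\frac{1}{j}\cdot\frac{\Gamma(j-\gamma)}{\Gamma(j)},
\]
and set $s=1-\gamma\in(0,1)$ and $x=j-1\ge1$, which is valid since $j\ge2$. Then $\Gamma(j-\gamma)=\Gamma(x+s)$ and $\Gamma(j)=\Gamma(x+1)$. Gautschi's inequality, exactly as used in Lemma~\ref{lem:bounds-calpha}, gives
\[
(x+1)^{s-1}<\frac{\Gamma(x+s)}{\Gamma(x+1)}<x^{s-1},
\]
that is,
\[
j^{-\gamma}<\frac{\Gamma(j-\gamma)}{\Gamma(j)}<(j-1)^{-\gamma}.
\]

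\textbf{Bounds.} Multiplying through by $\frac{\gamma}{\Gamma(1-\gamma)\,j}$ yields
\[
\frac{\gamma}{\Gamma(1-\gamma)}\,\frac{1}{j^{\gamma+1}}<|(\tilde c_\gamma)_j|<\frac{\gamma}{\Gamma(1-\gamma)}\,\frac{1}{j\,(j-1)^{\gamma}},
\]
which is the claim.

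The main point needing care is applying Gautschi's inequality with the correct shift. It requires $x>0$, and choosing $x=j-1$ is what forces the restriction $j\ge2$. The sign bookkeeping in the closed form is the only other delicate step.
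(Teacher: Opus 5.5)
Your proposal is correct and follows essentially the same route as the paper: the binomial series gives the coefficients, you factor out $-\gamma$ to reach the Gamma-ratio form, and you apply Gautschi's inequality with $x=j-1$ and $s=1-\gamma$. Your sign bookkeeping and your observation that $x=j-1>0$ is what forces $j\ge 2$ are both accurate.
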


\begin{proof}
The binomial series for $(1-x)^\gamma$ gives
\begin{equation}
(1-x)^\gamma=\sum_{j=0}^{\infty}(-1)^j\binom{\gamma}{j}x^j,\qquad |x|<1,
\end{equation}
so $(\tilde c_\gamma)_0=1$ and $(\tilde c_\gamma)_j=(-1)^j\binom{\gamma}{j}$ for $j\ge 1$.

For $j\ge 1$, start from the binomial-coefficient definition
\begin{equation}
\binom{\gamma}{j}=\frac{\gamma(\gamma-1)\cdots(\gamma-j+1)}{j!}.
\end{equation}
Factoring out $-\gamma$ and rewriting the remaining product via Gamma functions gives
\begin{equation}
(-1)^j\binom{\gamma}{j}
=-\,\frac{\gamma(1-\gamma)(2-\gamma)\cdots(j-1-\gamma)}{j!}
=-\,\frac{\gamma}{\Gamma(j+1)}\,\frac{\Gamma(j-\gamma)}{\Gamma(1-\gamma)}
=-\,\frac{\gamma\,\Gamma(j-\gamma)}{\Gamma(1-\gamma)\,\Gamma(j+1)}.
\end{equation}
This yields the claimed closed form for $(\tilde c_\gamma)_j$.

For $j\ge 2$, rewrite
\begin{equation}
|(\tilde c_\gamma)_j|
=\frac{\gamma}{\Gamma(1-\gamma)}\,\frac{\Gamma(j-\gamma)}{\Gamma(j+1)}
=\frac{\gamma}{\Gamma(1-\gamma)}\,\frac{1}{j}\,\frac{\Gamma(j-\gamma)}{\Gamma(j)}.
\end{equation}
Apply Gautschi's inequality to $\Gamma(j-\gamma)/\Gamma(j)$ with $x=j-1$ and $s=1-\gamma\in(0,1)$:
\begin{equation}
\frac{\gamma}{\Gamma(1-\gamma)}\,\frac{1}{j^{\gamma+1}}
\;<\;
|(\tilde c_\gamma)_j|
\;<\;
\frac{\gamma}{\Gamma(1-\gamma)}\,\frac{1}{j\,(j-1)^{\gamma}},
\end{equation}
which is exactly the stated two-sided bound.
\end{proof}

\begin{corollary}[Prefix sums of $\tilde c_\gamma$]
\label{cor:prefix-sums-ctilde}
Let $0<\gamma<1$ and let $\tilde c_\gamma=\bigl((\tilde c_\gamma)_j\bigr)_{j\ge 0}$ be defined by
$(1-x)^\gamma=\sum_{j=0}^\infty (\tilde c_\gamma)_j x^j$ for $|x|<1$.
Define the prefix sums
\begin{equation}
s_{\gamma,j}:=\sum_{i=0}^{j}(\tilde c_\gamma)_i\qquad (j\ge 0).
\end{equation}
Then $s_{\gamma,0}=1$ and for every $j\ge 0$,
\begin{equation}
\label{eq:prefix-sum-ctilde-is-c}
s_{\gamma,j}=(c_{1-\gamma})_j.
\end{equation}
In particular, for every integer $j\ge 1$,
\begin{equation}
\label{eq:prefix-sum-ctilde-bounds}
\frac{1}{\Gamma(1-\gamma)\,(j+1)^{\gamma}}
\;<\;
\sum_{i=0}^{j}(\tilde c_\gamma)_i
\;<\;
\frac{1}{\Gamma(1-\gamma)\,j^{\gamma}}.
\end{equation}
\end{corollary}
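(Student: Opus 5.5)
The plan is to identify the prefix sums with coefficients of a power series and then apply \Cref{lem:bounds-calpha} with exponent $1-\gamma$. Summing the coefficients of a power series up to index $j$ is the same as multiplying the series by $(1-x)^{-1}=\sum_{j\ge0}x^j$. Since $(1-x)^\gamma=\sum_j(\tilde c_\gamma)_j x^j$, we have, for $|x|<1$,
\begin{equation*}
\sum_{j=0}^{\infty}s_{\gamma,j}\,x^j=\frac{1}{1-x}\sum_{j=0}^{\infty}(\tilde c_\gamma)_j\,x^j=(1-x)^{\gamma-1}=(1-x)^{-(1-\gamma)}.
\end{equation*}
This is a Cauchy product of two absolutely convergent series on $|x|<1$. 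The right-hand side is exactly the generating function in \eqref{eq:calpha-def}, with $\gamma$ replaced by $1-\gamma$. Because $0<\gamma<1$ implies $1-\gamma\in(0,1)$, \Cref{lem:bounds-calpha} applies to this exponent. Uniqueness of power-series coefficients then gives $s_{\gamma,j}=(c_{1-\gamma})_j$ for all $j\ge0$, which is \eqref{eq:prefix-sum-ctilde-is-c}. The case $j=0$ gives $s_{\gamma,0}=(\tilde c_\gamma)_0=1=(c_{1-\gamma})_0$.

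For the bounds I substitute $1-\gamma$ for $\gamma$ in \eqref{eq:calpha-bounds}. This gives, for $j\ge1$,
\begin{equation*}
\frac{1}{\Gamma(1-\gamma)\,(j+1)^{\gamma}}<(c_{1-\gamma})_j<\frac{1}{\Gamma(1-\gamma)\,j^{\gamma}},
\end{equation*}
because $1-(1-\gamma)=\gamma$. This is \eqref{eq:prefix-sum-ctilde-bounds}.

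There is no real obstacle; the only care needed is to check that the lemma's hypothesis holds for the exponent $1-\gamma$. Alternatively, one can avoid generating functions. Using the recurrence $(\tilde c_\gamma)_j=(\tilde c_\gamma)_{j-1}\frac{j-1-\gamma}{j}$, an induction on $j$ shows $s_{\gamma,j}=\prod_{i=1}^{j}\frac{i-\gamma}{i}=\binom{j-\gamma}{j}$, which equals the coefficient $(c_{1-\gamma})_j$ in \eqref{eq:calpha-gamma-form}.
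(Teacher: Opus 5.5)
Your proposal is correct and follows the paper's proof. Like the paper, you identify the generating function of the prefix sums as $(1-x)^{-1}(1-x)^{\gamma}=(1-x)^{-(1-\gamma)}$, match coefficients to get $s_{\gamma,j}=(c_{1-\gamma})_j$, and then apply \Cref{lem:bounds-calpha} with $\gamma$ replaced by $1-\gamma$; the inductive product-formula alternative you mention at the end is also valid but not needed.
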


\begin{proof}
Let $S_\gamma(x):=\sum_{j=0}^{\infty} s_{\gamma,j}x^j$ be the generating function of the prefix sums.
By definition of $s_{\gamma,j}$,
\begin{equation}
S_\gamma(x)
=\sum_{j=0}^{\infty}\left(\sum_{i=0}^{j}(\tilde c_\gamma)_i\right)x^j
=\left(\sum_{i=0}^{\infty}(\tilde c_\gamma)_i x^i\right)\left(\sum_{k=0}^{\infty}x^k\right),
\end{equation}
so for $|x|<1$,
\begin{equation}
S_\gamma(x)
=(1-x)^\gamma\cdot (1-x)^{-1}=(1-x)^{-1+\gamma}=(1-x)^{-(1-\gamma)}.
\end{equation}
Thus $s_{\gamma,j}$ is exactly the $j$-th coefficient in the expansion of $(1-x)^{-(1-\gamma)}$, i.e.\
$s_{\gamma,j}=(c_{1-\gamma})_j$, proving \Cref{eq:prefix-sum-ctilde-is-c}.
The bounds \Cref{eq:prefix-sum-ctilde-bounds} follow from \Cref{lem:bounds-calpha} applied with $\gamma$ replaced by $1-\gamma$.
\end{proof}

Bounding $((c_{\gamma}^{(p)})_j)_{j\geq 0}$ is more complicated.
As a first step, we prove that the entries are monotonically decreasing.
\begin{lemma}[Monotonicity of $c_{\gamma}^{(p)}$]
\label{lem:calpha-p-monotone}
Let $0<\gamma<1$ and let the coefficients $c_{\gamma}^{(p)}=\bigl((c_{\gamma}^{(p)})_j\bigr)_{j\ge 0}$ denote the diagonal values of $C^{(p)}_{\gamma}$ from \Cref{def:gamma_bifr}.
Then
\begin{equation}
    0 \le (c_{\gamma}^{(p)})_{j+1}\le (c_{\gamma}^{(p)})_{j}\qquad (j\ge 0).
\end{equation}
\end{lemma}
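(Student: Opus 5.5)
We prove nonnegativity and monotonicity together by strong induction on $j$, using the recursion of Algorithm~\ref{alg:method}. Write $c_j := (c_{\gamma}^{(p)})_j$ and $q_i := -(\tilde c_\gamma)_i = |(\tilde c_\gamma)_i|$ for $1 \le i \le p-1$. By \Cref{lem:bounds-ctilde}, $(\tilde c_\gamma)_i<0$ for $i \ge 1$, so $q_i>0$. Then $(C_\gamma^{(p)})^{-1}C_\gamma^{(p)}=I$ becomes
\begin{equation*}
c_0=1,\qquad c_j=\sum_{i=1}^{\min(j,p-1)} q_i\, c_{j-i}\quad (j\ge 1).
\end{equation*}
Nonnegativity is immediate by induction, since every $c_j$ is a nonnegative combination of earlier $c$'s. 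We also need that $q_i$ is nonincreasing in $i$. This follows from the recurrence $(\tilde c_\gamma)_{i+1}=(\tilde c_\gamma)_i\cdot\frac{i-\gamma}{i+1}$, which gives $q_{i+1}/q_i=(i-\gamma)/(i+1)<1$.

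\emph{Case $j+1\le p-1$.} Here the truncation has not yet taken effect: the recursion for $c_0,\dots,c_{p-1}$ coincides with the one for the coefficients of $1/(1-x)^\gamma$. Hence $c_j=(c_\gamma)_j$ for $j\le p-1$, by \Cref{lem:bounds-ctilde} and \Cref{lem:bounds-calpha}. Monotonicity then follows from the closed form in \Cref{lem:bounds-calpha}: $(c_\gamma)_{j+1}/(c_\gamma)_j=(\gamma+j)/(j+1)<1$. We treat these indices directly rather than through the induction below, because zero-padding negative indices would produce a boundary term $q_{j+1}c_0>0$ with the wrong sign.

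\emph{Case $j\ge p-1$.} Assume $c_0\ge c_1\ge\dots\ge c_j$. Both $c_{j+1}$ and $c_j$ use the full range $i=1,\dots,p-1$, so we reindex $c_j=\sum_{i=2}^{p} q_{i-1}c_{j+1-i}$ and subtract (Abel summation):
\begin{equation*}
c_{j+1}-c_j = q_1 c_j - \Bigl[\sum_{i=2}^{p-1}(q_{i-1}-q_i)\,c_{j+1-i} + q_{p-1}\,c_{j+1-p}\Bigr].
\end{equation*}
Every weight in the bracket is nonnegative, since $q$ is nonincreasing and $q_{p-1}>0$. The weights telescope to $q_1$. Every index $j+1-i$ appearing in the bracket is at most $j-1$ and at least $j+1-p\ge 0$. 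By the induction hypothesis each such $c_{j+1-i}$ is at least $c_j$, so the bracket is at least $q_1c_j$, and therefore $c_{j+1}\le c_j$.

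The main obstacle is finding this rearrangement. A naive term-by-term comparison of the two recursions fails, and the Abel summation, which turns the decreasing kernel $q$ into nonnegative weights summing to exactly $q_1$, is what makes the comparison work. The only other delicate point is the boundary $j<p-1$, which the explicit identification $c_j=(c_\gamma)_j$ handles.
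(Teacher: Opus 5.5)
Your proof is correct and has the same skeleton as the paper's: identify $c_j=(c_\gamma)_j$ for $j\le p-1$, then induct beyond the truncation point. You argue both sub-steps differently, though.

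On the initial segment you use the exact ratio $(\gamma+j)/(j+1)<1$. The paper instead places $(c_\gamma)_{j+1}$ and $(c_\gamma)_{j+2}$ on opposite sides of $1/(\Gamma(\gamma)(j+2)^{1-\gamma})$ via the Gautschi bounds. Your version is the more elementary of the two.

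On the tail, the paper works with $\Delta_j:=c_j-c_{j+1}$ and simply subtracts the two recursions. For $j\ge p-1$ both $c_j$ and $c_{j+1}$ use the full window $i=1,\dots,p-1$, so
\[
c_j-c_{j+1}=\sum_{i=1}^{p-1} q_i\,\bigl(c_{j-i}-c_{j+1-i}\bigr)\ \ge\ 0
\]
follows directly from the induction hypothesis and $q_i\ge 0$. Your remark that a naive term-by-term comparison fails is therefore not accurate in this regime. It fails only for $j+1\le p-1$, because of the unmatched boundary term, and both proofs handle that range via the closed form. Your Abel-summation rearrangement is valid, but it needs an extra ingredient (that $q_i$ is nonincreasing) which the direct comparison does not.

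One small omission: the definition allows $p=1$, where your Abel formula refers to the undefined $q_1$ and $q_{p-1}$. In that case $c_j=0$ for all $j\ge 1$ and the claim is immediate; the paper notes this case separately.
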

\begin{proof}
The coefficients satisfy the recursion
\begin{equation}\label{eq:recursion-calpha-p}
(c_{\gamma}^{(p)})_j
= -\sum_{k=1}^{\min\{p-1,j\}} (\tilde c_\gamma)_k \,(c_{\gamma}^{(p)})_{j-k}.
\end{equation}
Since $(\tilde c_\gamma)_k\le 0$ for $k\ge1$, the right-hand side is a nonnegative linear combination of
earlier coefficients. With $(c_{\gamma}^{(p)})_0=1$, it follows by induction that
$(c_{\gamma}^{(p)})_j\ge0$ for all $j\ge0$.

For monotonicity, define
\begin{equation}
\Delta_j := (c_{\gamma}^{(p)})_{j}-(c_{\gamma}^{(p)})_{j+1}\qquad (j\ge 0).
\end{equation}
If $p=1$, then the sum in \eqref{eq:recursion-calpha-p} is empty and $(c_{\gamma}^{(p)})_j=0$ for $j\ge1$,
so monotonicity is immediate. Assume $p\ge2$. Using \eqref{eq:recursion-calpha-p} with $j=1$ gives
$(c_{\gamma}^{(p)})_1=-(\tilde c_\gamma)_1(c_{\gamma}^{(p)})_0=\gamma$, hence $\Delta_0=1-\gamma>0$.

Now apply \eqref{eq:recursion-calpha-p} at indices $j+1$ and $j+2$ and subtract:
\begin{equation}
\Delta_{j+1}
=\sum_{k=1}^{\min\{p-1,j+1\}} \bigl(-(\tilde c_\gamma)_k\bigr)\,\Delta_{j+1-k}
\;+\;
\begin{cases}
\;(\tilde c_\gamma)_{j+2}, & j+2\le p-1,\\[2pt]
0, & j+2>p-1.
\end{cases}
\end{equation}
Since $-(\tilde c_\gamma)_k\ge0$ for all $k\ge1$, the right-hand side is a sum of nonnegative terms whenever
$\Delta_0,\ldots,\Delta_j\ge0$ \emph{and} $j+2 > p-1$.
When $j+2 \leq p-1$, the right-hand side is however not obviously positive.
For this case, we instead write
\begin{align}
   \Delta_{j+1} &= (c_\gamma^{(p)})_{j+1} - (c_\gamma^{(p)})_{j+2} 
   = (c_\gamma)_{j+1} - (c_\gamma)_{j+2} \\
   &> \frac{1}{\Gamma(\gamma) (j+2)^{1-\gamma}} - \frac{1}{\Gamma(\gamma) (j+2)^{1-\gamma}}
   = 0,
\end{align}
where we have used $(c_{\gamma}^{(p)})_\ell$ equals $(c_\gamma)_\ell$ for $\ell \leq p-1$, and the last step bounds on $(c_\gamma)_\ell$ are from \Cref{eq:calpha-bounds}.

Therefore, by induction, $\Delta_j\ge0$ for all $j\ge0$, i.e.,
$(c_{\gamma}^{(p)})_{j+1}\le (c_{\gamma}^{(p)})_{j}$.
\end{proof}

Our bound on $((c_{\gamma}^{(p)})_j)_{j\geq 0}$ is based on showing a geometric decay, with rate parametrized in $p$ and $\gamma$, for coefficients with index greater than the bandwidth $p$.
It is stated and proved next.
\begin{lemma}[Geometric tail bound for $c_\gamma^{(p)}$]
\label{lem:calpha-p-geometric}
Let $0<\gamma<1$ and $p\ge 2$, and define
\begin{equation}
\label{eq:beta-def}
\beta_{\gamma,p}
\;:=\;\min\!\left\{
\frac{1-\gamma}{2p}\log\Bigl(\frac{2p}{p+1}\Bigr),\;
\frac{1-\gamma}{8\,p^{\gamma}\,(p-1)^{1-\gamma}}
\right\}.
\end{equation}
Then, for every $j\ge p$,
\begin{equation}
\label{eq:c-decay-sharp}
(c_\gamma^{(p)})_j \leq (c_\gamma)_p\,\bigl(1-\beta_{\gamma,p}\bigr)^{j-p}.
\end{equation}
Moreover, $\beta_{\gamma,p} = \Theta( (1-\gamma)/p)$ and in particular satisfies the two-sided bound
\begin{equation}
\label{eq:beta-two-sided}
    \frac{1-\gamma}{8p} \leq \beta_{\gamma,p}\leq \frac{1-\gamma}{4p}.
\end{equation}
\end{lemma}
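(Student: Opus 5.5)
The plan is a renewal-type argument for the recursion \eqref{eq:recursion-calpha-p}. Write $a_k := -(\tilde c_\gamma)_k \ge 0$ for $1\le k\le p-1$. For $j\ge p$ every index $1,\dots,p-1$ appears in the sum, so
\begin{equation*}
(c_\gamma^{(p)})_j=\sum_{k=1}^{p-1}a_k\,(c_\gamma^{(p)})_{j-k}.
\end{equation*}
By \Cref{cor:prefix-sums-ctilde}, the total weight is $\sum_{k=1}^{p-1}a_k = 1-s_{\gamma,p-1} = 1-(c_{1-\gamma})_{p-1}$. Hence there is a deficiency $d:=(c_{1-\gamma})_{p-1} > 1/(\Gamma(1-\gamma)p^{\gamma})$. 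The goal is to turn this deficiency into a per-step contraction factor $1-\beta_{\gamma,p}$ with $\beta_{\gamma,p}=\Theta((1-\gamma)/p)$.

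\textbf{Step 1 (a weighted sum below one).} Put $r:=(1-\beta_{\gamma,p})^{-1}$. I will show $\sum_{k=1}^{p-1}a_k r^k\le 1$. Use $r^k\le 1+k(r-1)r^{p}$. The first term of the minimum in \eqref{eq:beta-def} gives $r^p\le e^{2p\beta}\le 2p/(p+1)\le 2$ (after checking $-\log(1-\beta)\le 2\beta$ for small $\beta$), so it suffices that
\begin{equation*}
2(r-1)\sum_{k=1}^{p-1}k\,a_k\le d .
\end{equation*}
The first moment can be computed from prefix sums: $\sum_{k} k a_k=\sum_{m=0}^{p-2}(s_{\gamma,m}-s_{\gamma,p-1})\le \sum_{m=0}^{p-2}(c_{1-\gamma})_m$. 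By \eqref{eq:calpha-bounds} this is at most $1+\frac{1}{\Gamma(1-\gamma)}\sum_{m=1}^{p-2}m^{-\gamma}\lesssim \frac{(p-1)^{1-\gamma}}{(1-\gamma)\Gamma(1-\gamma)}$. Comparing with the lower bound on $d$ shows that $r-1\approx\beta\le \frac{1-\gamma}{8p^{\gamma}(p-1)^{1-\gamma}}$ suffices; this is exactly the second term of \eqref{eq:beta-def}. Constants will need some care, including the additive $1$ from $m=0$.

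\textbf{Step 2 (induction).} Let $u_j:=(c_\gamma^{(p)})_j\,r^{j-p}$. For $j>p$, split the recursion into two groups. For indices $j-k\ge p$, use the induction hypothesis $u_{j-k}\le (c_\gamma)_p$. For indices $j-k<p$, use $(c_\gamma^{(p)})_{j-k}=(c_\gamma)_{j-k}$. Step 1 controls the first group as a sub-convex combination. The base case $j=p$ is an equality, since $(c^{(p)}_\gamma)_p\le (c_\gamma)_p$ follows from truncating the nonnegative recursion for $c_\gamma=E^\gamma$, where $(c_\gamma)_p=\sum_{k=1}^{p}a_k(c_\gamma)_{p-k}$.

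\textbf{Main obstacle.} The hard part is the second group: terms from indices $i=j-k<p$, where $(c_\gamma)_i$ can be as large as $(p/i)^{1-\gamma}(c_\gamma)_p$. This is far more than $r^{p-i}(c_\gamma)_p$, so the naive window-maximum argument fails. My first attempt is to compare with the untruncated sequence. For $j\ge p$, write
\begin{equation*}
(c_\gamma^{(p)})_j=(c_\gamma)_j-\sum_{k\ge p}a_k(\cdot),
\end{equation*}
that is, exploit that $c^{(p)}_\gamma$ is dominated by $c_\gamma$, which is itself decreasing. The aim is to bound the boundary contributions $\sum_{k>j-p}a_k(c_\gamma)_{j-k}$ by $(c_\gamma)_p$ times the missing weight. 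This uses the identity $(c_\gamma)_p=\sum_{k=1}^{p}a_k(c_\gamma)_{p-k}$ together with monotonicity (\Cref{lem:calpha-p-monotone}), shifted appropriately.

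If that comparison fails, the fallback is to prove decay from index $2p$ onward using $(c^{(p)}_\gamma)_j\le(c_\gamma)_p$ on $[p,2p)$, where $r^{p}\le 2$ costs only a constant. The two-sided bound \eqref{eq:beta-two-sided} then follows by elementary estimates of the two terms in \eqref{eq:beta-def}: $\log\frac{2p}{p+1}\in[\tfrac13\log\tfrac43,\log 2]$ for $p\ge2$, and $p^{\gamma}(p-1)^{1-\gamma}\in[p/2,p]$.
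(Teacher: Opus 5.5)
Your Step~1 is correct and is essentially the paper's closing step. The deficiency $(c_{1-\gamma})_{p-1}$ of the weights $a_k$, plus a first-moment bound, gives $\sum_{k=1}^{p-1}a_k(1-\beta_{\gamma,p})^{-k}\le 1$. You also rightly see that the second branch of the minimum in \eqref{eq:beta-def} is tailored to this.

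The genuine gap is the obstacle you name and then leave open: the indices $p<j<2p-1$, where the recursion reaches below $p$. Your first attempt does not get off the ground. Every boundary term $a_k(c_\gamma)_{j-k}$ with $j-k<p$ is at least $a_k(c_\gamma)_p$, so nothing of the form ``$(c_\gamma)_p$ times a weight'' controls them. Knowing only that $c_\gamma$ is decreasing gives no rate. Your fallback gives $u_j\le r^{p}(c_\gamma)_p$ on $[p,2p)$. Hence it only proves $(c_\gamma^{(p)})_j\le K\,(c_\gamma)_p(1-\beta_{\gamma,p})^{j-p}$ with a constant $K$ up to $2$. That constant is not allowed in \eqref{eq:c-decay-sharp}, so the lemma as stated is not proved.

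The missing idea is to bypass the recursion on the whole window $[p,2p]$.
\begin{enumerate}
\item Extend your base-case comparison to all indices. With $a_k=-(\tilde c_\gamma)_k\ge 0$ for all $k\ge1$, induction gives, for every $j\ge p$,
$(c_\gamma^{(p)})_j=\sum_{k=1}^{p-1}a_k(c_\gamma^{(p)})_{j-k}\le\sum_{k=1}^{j}a_k(c_\gamma)_{j-k}=(c_\gamma)_j$.
\item Show that the untruncated coefficients themselves decay at rate $1-\beta_{\gamma,p}$ on $[p,2p]$. The paper uses log-convexity of $(c_\gamma)_j$ (the ratio $(j+\gamma)/(j+1)$ is increasing) together with the anchor $(c_\gamma)_{2p}\le(c_\gamma)_p(1-\beta_{\gamma,p})^p$, obtained from Gautschi's bounds. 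This anchor is exactly what the first branch of the minimum is for; you use that branch only to get $r^p\le 2$. A shorter route is $(c_\gamma)_{i+1}/(c_\gamma)_i=1-\frac{1-\gamma}{i+1}\le 1-\frac{1-\gamma}{2p}\le 1-\beta_{\gamma,p}$ for $i<2p$.
\item With the bound in hand on $[p,2p]$, every $j\ge 2p+1$ has all its recursion indices $j-k$, $1\le k\le p-1$, in $[p,j-1]$. Your second group is then empty, and Step~1 closes the induction.
\end{enumerate}

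Separately, for \eqref{eq:beta-two-sided} your lower estimate $\log\frac{2p}{p+1}\ge\frac13\log\frac43$ is too weak. You need $\log\frac{2p}{p+1}\ge\log\frac43>\frac14$ to get the first term of the minimum to be at least $\frac{1-\gamma}{8p}$.
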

\begin{proof}
We first establish the two-sided bound \eqref{eq:beta-two-sided} on $\beta_{\gamma,p}$.
Denote the two terms in \eqref{eq:beta-def} by
\begin{equation}
    T_1 := \frac{1-\gamma}{2p}\log \Bigl(\frac{2p}{p+1}\Bigr), \qquad T_2 := \frac{1-\gamma}{8 p^{\gamma} (p-1)^{1-\gamma}}.
\end{equation}
For $T_1$: since $p\mapsto 2p/(p+1) = 2-2/(p+1)$ is increasing and equals $4/3$ at $p=2$, $\log(2p/(p+1))\geq\log(4/3) > 1/4$ for $p\ge 2$, hence $T_1 \ge (1-\gamma)/(8p)$.
For $T_2$: rewrite
\begin{equation}
    T_2 = \frac{1-\gamma}{8p (1-1/p)^{1-\gamma}}.
\end{equation}
Since the base $(1-1/p)\in(0,1)$ and the exponent $(1-\gamma)\in(0,1)$, we have the elementary bounds
\begin{equation}
    \frac{p-1}{p} = (1-1/p)^{1} \leq (1-1/p)^{1-\gamma} \leq 1.
\end{equation}
The upper bound on $(1-1/p)^{1-\gamma}$ gives $T_2 \geq (1-\gamma)/(8p)$, and the lower bound gives $T_2 \leq (1-\gamma)/(8(p-1)) \leq (1-\gamma)/(4p)$ for $p\ge 2$. Combining,
\begin{equation}
    \frac{1-\gamma}{8p} \leq \min\{T_1,T_2\}=\beta_{\gamma,p}
    \leq T_2 \leq \frac{1-\gamma}{4p},
\end{equation}
proving \eqref{eq:beta-two-sided}.
In particular, $\beta_{\gamma,p}\le 1/(4p)\le 1/8$.

We proceed to prove the main statement.
Recall the recursion from \eqref{eq:recursion-calpha-p}, restated next:
\begin{equation}
(c_{\gamma}^{(p)})_j
= -\sum_{k=1}^{\min\{p-1,j\}} (\tilde c_\gamma)_k \,(c_{\gamma}^{(p)})_{j-k}
\qquad (j\ge0).
\end{equation}
Since $(\tilde c_\gamma)_k\le0$ for $k\ge1$ and $(c_{\gamma}^{(p)})_0=1$, \eqref{eq:recursion-calpha-p}
implies $(c_{\gamma}^{(p)})_j\ge0$ for all $j$ by induction.

\medskip
\noindent\emph{Step 1.} $(c_{\gamma}^{(p)})_j\le (c_\gamma)_j$ for all $j\ge0$.
The exact coefficients satisfy, for $j\ge1$,
\begin{equation}
(c_\gamma)_j=-\sum_{k=1}^{j}(\tilde c_\gamma)_k\,(c_\gamma)_{j-k}.
\end{equation}
For $0\le j\le p-1$ the two recursions coincide, hence $(c_{\gamma}^{(p)})_j=(c_\gamma)_j$.
Fix $j\ge p$ and assume $(c_{\gamma}^{(p)})_m\le (c_\gamma)_m$ for all $m<j$. Then, using
$-(\tilde c_\gamma)_k\ge0$ and $(c_\gamma)_m\ge0$,
\begin{equation}
(c_{\gamma}^{(p)})_j
=\sum_{k=1}^{p-1}\bigl(-(\tilde c_\gamma)_k\bigr)(c_{\gamma}^{(p)})_{j-k}
\le \sum_{k=1}^{p-1}\bigl(-(\tilde c_\gamma)_k\bigr)(c_\gamma)_{j-k}
\le \sum_{k=1}^{j}\bigl(-(\tilde c_\gamma)_k\bigr)(c_\gamma)_{j-k}
=(c_\gamma)_j.
\end{equation}

\medskip
\noindent\emph{Step 2.} Base range $p\le j\le 2p$.
The ratio
\begin{equation}
\frac{(c_\gamma)_{j+1}}{(c_\gamma)_j}=\frac{j+\gamma}{j+1}
\end{equation}
is increasing in $j$, hence $(c_\gamma)_j$ is log-convex. Therefore, for $r=0,1,\dots,p$,
\begin{equation}\label{eq:logconvex-interp}
(c_\gamma)_{p+r}\le (c_\gamma)_p^{\,1-r/p}\,(c_\gamma)_{2p}^{\,r/p}.
\end{equation}
Thus it suffices to prove the anchor inequality
\begin{equation}\label{eq:anchor-2p}
(c_\gamma)_{2p}\le (c_\gamma)_p\,(1-\beta_{\gamma,p})^p,
\end{equation}
because inserting \eqref{eq:anchor-2p} into \eqref{eq:logconvex-interp} yields
\begin{equation}
(c_\gamma)_{p+r}
\le (c_\gamma)_p^{\,1-r/p}\Bigl((c_\gamma)_p(1-\beta_{\gamma,p})^p\Bigr)^{r/p}
= (c_\gamma)_p(1-\beta_{\gamma,p})^{r},
\end{equation}
which is exactly the desired bound for $p\le p+r\le 2p$.

To prove \eqref{eq:anchor-2p}, note from \Cref{lem:bounds-calpha} that
\[
(c_\gamma)_{2p}<\frac{1}{\Gamma(\gamma)\,(2p)^{1-\gamma}},
\qquad
(c_\gamma)_p>\frac{1}{\Gamma(\gamma)\,(p+1)^{1-\gamma}},
\]
hence
\begin{equation}\label{eq:ratio-calpha}
    \frac{(c_\gamma)_{2p}}{(c_\gamma)_p} < \left(\frac{p+1}{2p}\right)^{1-\gamma}.
\end{equation}
Next, since $\beta_{\gamma,p}\le \frac12$, the inequality $\log(1-x)\ge -2x$ for $x\in[0,\frac12]$
implies
\begin{equation}
(1-\beta_{\gamma,p})^p
=\exp\big(p\log(1-\beta_{\gamma,p})\big)
\ge \exp(-2p\beta_{\gamma,p}).
\end{equation}
Because $\beta_{\gamma,p}\le \frac{1-\gamma}{2p}\log \big(\frac{2p}{p+1}\big)$, we obtain
\begin{equation}
\exp(-2p\beta_{\gamma,p})
\geq \exp \left(-(1-\gamma)\log \Bigl(\frac{2p}{p+1}\Bigr)\right)
=\left(\frac{p+1}{2p}\right)^{1-\gamma}.
\end{equation}
Combining with \eqref{eq:ratio-calpha} yields
\begin{equation}
(1-\beta_{\gamma,p})^p \ge \frac{(c_\gamma)_{2p}}{(c_\gamma)_p},
\end{equation}
which is exactly \eqref{eq:anchor-2p}. Consequently, for $r=0,1,\dots,p$,
\begin{equation}
(c_\gamma)_{p+r}\le (c_\gamma)_p(1-\beta_{\gamma,p})^{r}.
\end{equation}
Together with Step~1, this gives for all $p\le j\le 2p$,
\begin{equation}
(c_{\gamma}^{(p)})_j \le (c_\gamma)_j \le (c_\gamma)_p(1-\beta_{\gamma,p})^{j-p}.
\end{equation}

\medskip
\noindent\emph{Step 3.} Induction for all $j\ge 2p+1$.
Assume that for some $j\ge2p+1$,
\begin{equation}
(c_{\gamma}^{(p)})_m \le (c_\gamma)_p(1-\beta_{\gamma,p})^{\,m-p}
\qquad \text{for all } m=p,p+1,\dots,j-1.
\end{equation}
Then $j-k\ge p$ for $k=1,\dots,p-1$, and \eqref{eq:recursion-calpha-p} gives
\begin{equation}
(c_{\gamma}^{(p)})_j
=\sum_{k=1}^{p-1}\bigl(-(\tilde c_\gamma)_k\bigr)(c_{\gamma}^{(p)})_{j-k}
\le (c_\gamma)_p(1-\beta_{\gamma,p})^{\,j-p}
\sum_{k=1}^{p-1}\bigl(-(\tilde c_\gamma)_k\bigr)(1-\beta_{\gamma,p})^{-k}.
\end{equation}
To finish the inductive step, and thereby the full proof, we will show that
\begin{equation}
    \sum_{k=1}^{p-1}\bigl(-(\tilde c_\gamma)_k\bigr)(1-\beta_{\gamma,p})^{-k}
    \leq 1.\label{eq:sharp-step3-finisher}
\end{equation}
To this end, write $(1-\beta_{\gamma, p})^{-k} = \exp( - k\log(1 - \beta_{\gamma, p}))$2
Firstly, $-\log(1-x) \leq 2x$ on $[0, \frac{1}{2}]$, and as $\beta_{\gamma,p} \leq \frac{1}{4p} \leq \frac{1}{2}$, we note that
\begin{equation}
   -k\log(1-\beta_{\gamma, p})
   \leq -(p-1)\log(1-\beta_{\gamma, p})
   \leq 2(p-1)\beta_{\gamma, p}
   \leq \frac{2(p-1)}{4p}
   \leq \frac{1}{2}.
\end{equation}
where $k\leq p-1$ was used at the first step.
Secondly, since for $x\in[0,1]$ we have the standard inequality $e^x \leq 1 + 2x$, we write
\begin{equation}
    \exp(-k\log(1-\beta_{\gamma, p}))
    \leq 1 + 2(-k\log(1-\beta_{\gamma,p})
    \leq 1+4k\beta_{\gamma, p}.
\end{equation}
Substituting this bound into \Cref{eq:sharp-step3-finisher}:
\begin{equation}
    \sum_{k=1}^{p-1}\bigl(-(\tilde c_\gamma)_k\bigr)(1-\beta_{\gamma,p})^{-k}
    \leq \underbrace{\sum_{k=1}^{p-1}\bigl(-(\tilde c_\gamma)_k\bigr)}_{1 - (c_{1-\gamma})_{p-1}}
    + 4\beta_{\gamma, p}\sum_{k=1}^{p-1}k \bigl(-(\tilde c_\gamma)_k\bigr).\label{eq:geom-pf-step1}
\end{equation}
We proceed to bound the second sum using \Cref{lem:bounds-ctilde}: $-(\tilde{c}_{\gamma})_1 = \gamma$, and for $k\geq 2$,
\begin{equation}
    k(-\tilde{c}_\gamma)_k < \frac{\gamma}{\Gamma(1-\gamma)(k-1)^{\gamma}}.
\end{equation}
By \Cref{lem:harm-sum-bound},
\begin{equation}
\sum_{k=2}^{p-1} k (-(\tilde{c}_\gamma)_k) < \frac{\gamma}{\Gamma(1-\gamma)}\sum_{j=1}^{p-2} j^{-\gamma} \leq \frac{\gamma (p-1)^{1-\gamma}}{(1-\gamma)\Gamma(1-\gamma)},
\end{equation}
hence
\begin{equation}
    \sum_{k=1}^{p-1} k (-(\tilde{c}_{\gamma})_k)
    \leq \gamma + \frac{\gamma (p-1)^{1-\gamma}}{(1-\gamma)\Gamma(1-\gamma)}
    \leq \frac{2 \gamma (p-1)^{1-\gamma}}{(1-\gamma)\Gamma(1-\gamma)},\label{eq:geom-pf-step2}
\end{equation}
where the last step absorbs the $\gamma$ term: by \Cref{lem:gamma-facts}, $(1-\gamma)\Gamma(1-\gamma) = \Gamma(2-\gamma) \leq 1$ for $\gamma \in (0,1)$, so $\gamma \leq \gamma(p-1)^{1-\gamma}/((1-\gamma)\Gamma(1-\gamma))$ for $p\geq 2$.

Plugging \eqref{eq:geom-pf-step2} into \eqref{eq:geom-pf-step1}
\begin{equation}
    \sum_{k=1}^{p-1} (-(\tilde{c}_{\gamma})_k) (1-\beta_{\gamma, p})^{-k}
    \leq 1 - (c_{1-\gamma})_{p-1} + \frac{8\gamma\beta_{\gamma, p} (p-1)^{1-\gamma}}{(1-\gamma)\Gamma(1-\gamma)}.\label{eq:geom-pf-step3}
\end{equation}
By \Cref{lem:bounds-calpha}, $(c_{1-\gamma})_{p-1} \geq 1 / (\Gamma(1-\gamma) p^{\gamma})$.
Taking the second branch of the minimum in \eqref{eq:beta-def}, we get that
\begin{equation}
    \frac{8\beta_{\gamma, p} \gamma(p-1)^{1-\gamma}}{(1-\gamma)\Gamma(1-\gamma)}
    \leq \frac{\gamma}{\Gamma(1-\gamma)p^{\gamma}}
    \leq \frac{1}{\Gamma(1-\gamma)p^{\gamma}}
    \leq (c_{1-\gamma})_{p-1}.
\end{equation}
It follows that the right-hand side of \eqref{eq:geom-pf-step3} is at most 1, establishing \eqref{eq:sharp-step3-finisher} and hence the inductive step.
The lemma follows.
\end{proof}

\subsection{Matrix Norm Bounds}
In this section we derive matrix norm bounds for the factors in the $\gamma$-BIFR factorization $B_{\gamma}^{(p)} C_{\gamma}^{(p)} = E$.
We first state two brief helper lemmas.
\begin{lemma}\label{lem:harm-sum-bound}
    Let $s > 0$, $m \geq 1$ an integer and define $H_{m}^{(s)} := \sum_{r=1}^{m} r^{-s}$.
    Then
    \begin{equation*}
        H_{m}^{(s)} \leq \begin{cases}
            m^{1-s}/(1-s) & \text{ if } s < 1\\
            1 + \log(m) & \text{ if } s=1\\
            1 + 1/(s-1) & \text{ if } s > 1
        \end{cases}
    \end{equation*}
\end{lemma}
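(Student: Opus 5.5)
The plan is to compare each sum with the integral of the monotone function $x \mapsto x^{-s}$. Because $x^{-s}$ is strictly decreasing on $(0,\infty)$ for $s>0$, we have $r^{-s} \le \int_{r-1}^{r} x^{-s}\,dx$ for every integer $r \ge 2$. For $r=1$ the same inequality also holds when $s<1$, since then the integral over $[0,1]$ is finite. So we split into the three cases in the statement, and in each case decide whether the first term $r=1$ goes into the integral or is kept separately.

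\emph{Case $s<1$.} Here $x^{-s}$ is integrable at $0$, so we bound every term, including $r=1$, by its integral:
\[
H_m^{(s)} \le \sum_{r=1}^m \int_{r-1}^{r} x^{-s}\,dx = \int_0^m x^{-s}\,dx = \frac{m^{1-s}}{1-s}.
\]

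\emph{Case $s=1$.} The integral diverges at $0$, so we keep the first term $1^{-1}=1$ aside and compare only the terms $r \ge 2$:
\[
H_m^{(1)} \le 1 + \int_1^m \frac{dx}{x} = 1 + \log m .
\]
When $m=1$ the sum is empty beyond the first term and the bound reads $1 \le 1$.

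\emph{Case $s>1$.} Again we keep the term $r=1$ aside, and this time we extend the integral to infinity:
\[
H_m^{(s)} \le 1 + \int_1^m x^{-s}\,dx \le 1 + \int_1^\infty x^{-s}\,dx = 1 + \frac{1}{s-1}.
\]

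There is no real obstacle here. The only point that needs care is handling the endpoint: the term $r=1$ can be absorbed into the integral only when $s<1$, and must be separated otherwise.
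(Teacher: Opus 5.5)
Your proof is correct and uses the same integral-comparison argument as the paper. The only difference is cosmetic, in the case $s<1$: you absorb the $r=1$ term into $\int_0^1 x^{-s}\,dx$, whereas the paper keeps it aside to get $1 + \frac{m^{1-s}-1}{1-s} = \frac{m^{1-s}-s}{1-s}$ and then drops the $-s/(1-s)$ term; both give $m^{1-s}/(1-s)$.
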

\begin{proof}
    The bounds follow from integral approximation on non-increasing summands.
    For $s=1$,
    \begin{equation}
        H_m^{(1)} = \sum_{r=1}^{m} r^{-1} \leq 1 + \int_{1}^{m} x^{-1}\,dx = 1 + \log(m).
    \end{equation}
    The case of $s\neq 1$ proceeds similarly.
    \begin{equation}
        H_m^{(s)} = \sum_{r=1}^{m} r^{-s} \leq 1 + \int_{1}^{m} x^{-s}\,dx = 1 + \frac{m^{1-s} - 1}{1-s}.
    \end{equation}
    The stated bounds for $s<1$ and $s>1$ follow from simplifying and suppressing negative terms.
\end{proof}
\begin{lemma}[Gamma facts]\label{lem:gamma-facts}
    Let $z> 0$ and $\Gamma$ denote the Gamma function.
    The following facts hold:
    \begin{enumerate}
        \item $\Gamma(z+1) = z\Gamma(z)$.
        \item If $z\in(0, 2)$, then $\Gamma(z) \geq 1/2$.
        \item If $z\in(1, 2)$, then $\Gamma(z) \leq 1$.
    \end{enumerate}
\end{lemma}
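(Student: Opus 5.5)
The three Gamma facts follow from the functional equation together with convexity of $\log\Gamma$. We use the two standard values $\Gamma(1)=\Gamma(2)=1$.

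\emph{Fact 1.} This is the standard functional equation. Integrate by parts in $\Gamma(z+1)=\int_0^\infty t^{z}e^{-t}\,dt$, with $u=t^z$ and $dv=e^{-t}dt$. For $z>0$ the boundary term $[-t^ze^{-t}]_0^\infty$ vanishes, which leaves $z\int_0^\infty t^{z-1}e^{-t}\,dt=z\Gamma(z)$.

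\emph{Fact 3.} The function $\Gamma$ is log-convex on $(0,\infty)$. This follows from H\"older's inequality applied to the integral representation, or from the fact that $(\log\Gamma)''=\sum_{m\ge0}(z+m)^{-2}>0$. Hence $\Gamma$ itself is convex. For $z\in(1,2)$, write $z=(2-z)\cdot 1+(z-1)\cdot 2$. Convexity then gives
\[
\Gamma(z)\le (2-z)\Gamma(1)+(z-1)\Gamma(2)=1 .
\]

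\emph{Fact 2.} First take $z\in[1,2)$. The function $\Gamma$ attains its minimum on $(0,\infty)$ near $1.4616$, with value about $0.8856>1/2$. To avoid citing this numerical constant, we bound $\Gamma$ directly from the integral:
\[
\Gamma(z)\ge\int_0^1 t^{z-1}e^{-t}\,dt+\int_1^\infty t^{z-1}e^{-t}\,dt\ge e^{-1}\int_0^1 t^{z-1}\,dt+\int_1^\infty e^{-t}\,dt=\frac{1}{ez}+\frac1e .
\]
Here the second integral used $t^{z-1}\ge1$ for $t\ge1$. For $z<2$ the right-hand side exceeds $\frac{1}{2e}+\frac1e=\frac{3}{2e}\approx0.55>1/2$.

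For $z\in(0,1)$, apply Fact 1 to get $\Gamma(z)=\Gamma(z+1)/z$. Since $z+1\in(1,2)$, the bound above gives $\Gamma(z+1)\ge 3/(2e)$. As $0<z<1$, this yields $\Gamma(z)>\Gamma(z+1)\ge 3/(2e)>1/2$.

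The only step needing care is Fact 2, where an explicit lower bound must be produced without quoting the numerical minimum of $\Gamma$. The elementary integral split above handles it uniformly on $(1,2)$, and the extension to $(0,1)$ comes from Fact 1.
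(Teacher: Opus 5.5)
Your proof is correct, and it differs from the paper only in that the paper gives no proof at all. The paper states these three facts as standard properties of $\Gamma$ and applies them directly in three places: $\Gamma(2-\gamma)\le 1$ in the geometric tail bound, $\Gamma(2-\gamma)\ge 1/2$ in the Frobenius-norm bound, and $\Gamma(\gamma)\ge 1/2$ in the $1\to 2$ and $1\to 1$ norm bounds. Your argument therefore supplies a self-contained justification, and it covers each of these uses.

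The individual steps are sound.
\begin{enumerate}
\item Integration by parts, with the boundary term vanishing for $z>0$, gives the functional equation.
\item For Fact~3, interpolating between $\Gamma(1)=\Gamma(2)=1$ by log-convexity is the natural route. Log-convexity alone already gives $\Gamma(z)\le\Gamma(1)^{2-z}\Gamma(2)^{z-1}=1$, so passing to ordinary convexity is harmless but unnecessary.
\item For Fact~2, your split of the integral yields $\Gamma(z)\ge \frac{1}{ez}+\frac{1}{e}>\frac{3}{2e}$ on $[1,2)$. You correctly restrict to $z\ge 1$ there, so that $t^{z-1}\ge 1$ on $[1,\infty)$. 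The transfer to $(0,1)$ via $\Gamma(z)=\Gamma(z+1)/z>\Gamma(z+1)$ is also correct.
\end{enumerate}

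Compared with quoting the minimum value $\Gamma(x_0)\approx 0.8856$, your approach gives an explicit, checkable constant without relying on a numerical fact. It loses some sharpness, but $3/(2e)>1/2$ is all the lemma requires.
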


We are now ready to give a bound on the Frobenius norm of $B_{\gamma}^{(p)}$.
\begin{lemma}[Frobenius norm bound for $B_{\gamma}^{(p)}$]
\label{lem:frob-Bap-bound}
Let $0<\gamma<1$ and $n\ge p\ge2$, and let $B_{\gamma}^{(p)}$ be as in \Cref{def:gamma_bifr}. Then
\begin{equation}\label{eq:frob-bound-exact}
\|B_{\gamma}^{(p)}\|_F^2
\;\le\;
n
+\frac{n}{\Gamma(1-\gamma)^2}\,H_{p-1}^{(2\gamma)}
+\frac{(n-p)(n-p+1)}{2\,\Gamma(1-\gamma)^2}\cdot\frac{1}{(p-1)^{2\gamma}},
\end{equation}
where $H_{m}^{(s)}:=\sum_{r=1}^{m} r^{-s}$.
In particular, the following asymptotic bounds hold:
\begin{equation}\label{eq:frob-bound-asympt}
    \frac{1}{\sqrt{n}}\|B_{\gamma}^{(p)}\|_F = \begin{cases}
        O\!\left(p^{-\gamma}\bigl( \sqrt{\frac{p}{\frac{1}{2} - \gamma}} + \sqrt{n}\bigr)\right) &\text{for}\quad 0 < \gamma < \frac{1}{2} \\
        O\!\left(\sqrt{\log p} + \sqrt{\frac{n}{p}}\right) &\text{for}\quad \gamma = \frac{1}{2} \\
        O\!\left(\frac{1}{\sqrt{\gamma - \frac{1}{2}}} + (1-\gamma) \sqrt{n} p^{-\gamma}\right) &\text{for}\quad \frac{1}{2} < \gamma < 1 \\
    \end{cases}
\end{equation}
\end{lemma}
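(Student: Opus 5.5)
Since $B_{\gamma}^{(p)} = E\,(C_{\gamma}^{(p)})^{-1}$ and both factors are lower-triangular Toeplitz, $B_{\gamma}^{(p)}$ is itself lower-triangular Toeplitz. Its $j$-th subdiagonal entry is the truncated prefix sum $b_j := \sum_{i=0}^{\min(j,p-1)} (\tilde c_\gamma)_i$. Hence
\[
\|B_{\gamma}^{(p)}\|_F^2 = \sum_{j=0}^{n-1} (n-j)\, b_j^2 .
\]
I would split this sum into three ranges and bound each separately.

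\textbf{Range $j=0$.} Here $b_0=1$, which contributes $n$.

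\textbf{Range $1\le j\le p-1$.} The truncation is inactive, so $b_j = s_{\gamma,j} = (c_{1-\gamma})_j$. By Corollary~\ref{cor:prefix-sums-ctilde}, $0 < b_j < 1/(\Gamma(1-\gamma) j^{\gamma})$. Bounding the weight $n-j$ by $n$ gives a contribution of at most
\[
\frac{n}{\Gamma(1-\gamma)^2} \sum_{j=1}^{p-1} j^{-2\gamma} = \frac{n}{\Gamma(1-\gamma)^2}\,H_{p-1}^{(2\gamma)} .
\]

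\textbf{Range $j\ge p$.} Now $b_j = s_{\gamma,p-1}$ is constant, and by Corollary~\ref{cor:prefix-sums-ctilde} it satisfies $b_j \le 1/(\Gamma(1-\gamma)(p-1)^{\gamma})$. The weights sum to
\[
\sum_{j=p}^{n-1}(n-j) = \frac{(n-p)(n-p+1)}{2}.
\]
This range contributes exactly the third term of \eqref{eq:frob-bound-exact}, which proves the exact bound.

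\textbf{Asymptotic form.} Divide by $n$ and take square roots using $\sqrt{a+b+c}\le\sqrt a+\sqrt b+\sqrt c$. By Lemma~\ref{lem:gamma-facts}, $\Gamma(1-\gamma)\ge 1/2$, so $1/\Gamma(1-\gamma)^2 = O(1)$. I then apply Lemma~\ref{lem:harm-sum-bound} with $s=2\gamma$:
\begin{itemize}
\item For $\gamma<1/2$: $H_{p-1}^{(2\gamma)} \le p^{1-2\gamma}/(1-2\gamma)$, so its square root is $O\bigl(p^{-\gamma}\sqrt{p/(\tfrac12-\gamma)}\bigr)$. The third term gives $O(\sqrt n\, p^{-\gamma})$ since $(p-1)^{-\gamma}\le 2p^{-\gamma}$. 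The constant $1$ is absorbed, because $p^{1-2\gamma}/(1-2\gamma)\ge 1$.
\item For $\gamma=1/2$: the harmonic bound gives $O(\log p)$, and the third term gives $O(n/p)$.
\item For $\gamma>1/2$: $H^{(2\gamma)} \le 1+1/(2\gamma-1) = O(1/(\gamma-\tfrac12))$, which yields the first summand. The constant $1$ is absorbed, since $1/(\gamma-\frac12)\ge 2$.
\end{itemize}

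\textbf{Main obstacle: the factor $(1-\gamma)$ for $\gamma>1/2$.} The third term gives $\sqrt n\, p^{-\gamma}/\Gamma(1-\gamma)$, but the claim has $(1-\gamma)\sqrt n\,p^{-\gamma}$. Using $\Gamma(1-\gamma) = \Gamma(2-\gamma)/(1-\gamma)$ gives
\[
\frac{1}{\Gamma(1-\gamma)} = \frac{1-\gamma}{\Gamma(2-\gamma)} \le 2(1-\gamma),
\]
where the last step uses $\Gamma(2-\gamma)\ge 1/2$ from Lemma~\ref{lem:gamma-facts}, valid because $2-\gamma\in(1,2)$. This recovers the factor $(1-\gamma)$. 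The same identity also shows $1/\Gamma(1-\gamma)\le 2$ uniformly in $\gamma$, so no other range is affected. The remaining care is only in keeping all constants absolute, independent of $\gamma$.
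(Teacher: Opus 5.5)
Your proposal is correct and follows the paper's proof essentially step by step. You use the same three-range split of the Toeplitz Frobenius sum, the same identification of the prefix sums with $(c_{1-\gamma})_j$ with the weights $n-j$ bounded by $n$, and the same identity $\Gamma(1-\gamma)=\Gamma(2-\gamma)/(1-\gamma)$ with $\Gamma(2-\gamma)\ge 1/2$ to extract the $(1-\gamma)$ factor. The only cosmetic difference is that you take square roots term by term before the case split, whereas the paper first assembles the coarse bound $n + 4(1-\gamma)^2 n H_{p-1}^{(2\gamma)} + 16(1-\gamma)^2 n^2 p^{-2\gamma}$ and then splits into cases.
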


\begin{proof}
By construction, $B_{\gamma}^{(p)}=E(C_{\gamma}^{(p)})^{-1}$ is a lower-triangular Toeplitz matrix.
Writing $r=i-j\in\{0,1,\dots,n-1\}$, the Toeplitz coefficient on the $r$-th subdiagonal equals the partial sum
\begin{equation}
\sum_{k=0}^{\min\{p-1,r\}}(\tilde c_\gamma)_k,
\end{equation}
and this subdiagonal contains $n-r$ entries. Therefore,
\begin{equation}
\|B_{\gamma}^{(p)}\|_F^2
=\sum_{r=0}^{n-1}(n-r)\left(\sum_{k=0}^{\min\{p-1,r\}}(\tilde c_\gamma)_k\right)^2.
\end{equation}
For $r=0$ the inner sum equals $(\tilde c_\gamma)_0=1$, giving the contribution $n$.
For $1\le r\le p-1$, \Cref{cor:prefix-sums-ctilde} yields
\begin{equation}
\sum_{k=0}^{r}(\tilde c_\gamma)_k=(c_{1-\gamma})_r,
\end{equation}
and by \Cref{eq:prefix-sum-ctilde-bounds} we have
\begin{equation}
0<\sum_{k=0}^{r}(\tilde c_\gamma)_k< \frac{1}{\Gamma(1-\gamma)\,r^\gamma}.
\end{equation}
For $r\ge p$ the inner sum is constant and equals $\sum_{k=0}^{p-1}(\tilde c_\gamma)_k$, which is bounded by
$\bigl(\Gamma(1-\gamma)\,(p-1)^\gamma\bigr)^{-1}$ in the same way. Hence
\begin{equation}
\|B_{\gamma}^{(p)}\|_F^2
\le n
+\frac{1}{\Gamma(1-\gamma)^2}\sum_{r=1}^{p-1}\frac{n-r}{r^{2\gamma}}
+\frac{1}{\Gamma(1-\gamma)^2}\left(\sum_{r=p}^{n-1}(n-r)\right)\frac{1}{(p-1)^{2\gamma}}.
\end{equation}
Using $\sum_{r=p}^{n-1}(n-r)=\sum_{m=1}^{n-p}m=\frac{(n-p)(n-p+1)}{2}$ gives
\begin{equation}
\|B_{\gamma}^{(p)}\|_F^2
\le n
+\frac{1}{\Gamma(1-\gamma)^2}\sum_{r=1}^{p-1}\frac{n-r}{r^{2\gamma}}
+\frac{(n-p)(n-p+1)}{2\,\Gamma(1-\gamma)^2}\cdot\frac{1}{(p-1)^{2\gamma}}.
\end{equation}
Finally, since $n-r\le n$ for $1\le r\le p-1$,
\begin{equation}
\sum_{r=1}^{p-1}\frac{n-r}{r^{2\gamma}}
\le n\sum_{r=1}^{p-1}\frac{1}{r^{2\gamma}}
= n\,H_{p-1}^{(2\gamma)}.
\end{equation}
Substituting this bound completes the proof of \eqref{eq:frob-bound-exact}.

\noindent\emph{Asymptotic bounds.}
To derive the asymptotic bound, we first rewrite \eqref{eq:frob-bound-exact}.
By the definition of the Gamma function, $\Gamma(1-\gamma) = \Gamma(2-\gamma) / (1-\gamma)$, and that $\Gamma(z) \geq 1/2$ for $z\in(1, 2)$ (\Cref{lem:gamma-facts}), we have that $1/\Gamma(1-\gamma)^2 \leq 4(1-\gamma)^2$.
Secondly,
\begin{equation}
    \frac{(n-p)(n-p+1)}{(p-1)^{2\gamma}}
    \leq \frac{n^2}{(p/2)^{2\gamma}}
    \leq 4 n^2p^{-2\gamma}
\end{equation}
where the first step used $p-1 \geq p/2$ for $p\geq 2$ and the last step that $2^{2\gamma} \leq 2^2 = 4$.
Combining these observations, we arrive at the coarser bound:
\begin{equation}\label{eq:frob-bound-coarse}
       \| B_{\gamma}^{(p)}\|_F^2
       \leq n + 4(1-\gamma)^2 n H_{p-1}^{(2\gamma)} + 16(1-\gamma)^2 n^2 p^{-2\gamma}\,.
\end{equation}
To arrive at the final bounds, we consider each range of $\gamma$, beginning with $\gamma\in(0, 1/2)$.
Using that $1-\gamma \leq 1$ and that $H_{p-1}^{(2\gamma)} \leq p^{1-2\gamma}/(1-2\gamma)$ (\Cref{lem:harm-sum-bound}), we get
\begin{equation}
       \| B_{\gamma}^{(p)}\|_F^2
       \leq n + \frac{4n p^{1-2\gamma}}{1-2\gamma} + 16 n^2 p^{-2\gamma}\,.
\end{equation}
Since $1-2\gamma \in (0,1)$ the first term is dominated by the second term, hence
\begin{equation}
       \| B_{\gamma}^{(p)}\|_F^2
       = O\!\left(\frac{n p^{1-2\gamma}}{1-2\gamma} + n^2 p^{-2\gamma}\right)
       \leq O\!\left(np^{-2\gamma}\left( \frac{p}{\frac{1}{2}-\gamma} + n\right)\right).
\end{equation}
For the case where $\gamma = 1/2$, trivially $1-\gamma = 1/2$ and we have that $H_{p-1}^{(2\gamma)} \leq 1 + \log(p-1)$, and so, bounding the right-hand side of \eqref{eq:frob-bound-coarse}:
\begin{equation}
       \| B_{\gamma}^{(p)}\|_F^2 \leq n + n (1+\log(p-1)) + 4n^2 p^{-1}
       = O\!\left(\!n\!\left(\log p + \frac{n}{p} \right)\right)
\end{equation}
Finally, when $\gamma \in (1/2, 1)$, we have that $H_{p-1}^{(2\gamma)} \leq 1 + 1/(2\gamma-1)= 2\gamma/(2\gamma - 1) \leq 1/(\gamma - 1/2)$, and so, bounding the right-hand side of \eqref{eq:frob-bound-coarse}:
\begin{align}
       \| B_{\gamma}^{(p)}\|_F^2
       &\leq n + \frac{4(1-\gamma)^2 n}{\gamma - 1/2} + 16(1-\gamma)^2n^2 p^{-2\gamma}\\
       &= n\left(\frac{(\gamma-1/2) + 4(1-\gamma)^2}{\gamma - 1/2} + 16(1-\gamma)^2n p^{-2\gamma}\right)\\
       &\leq n\left(\frac{5}{\gamma - 1/2} + 16(1-\gamma)^2n p^{-2\gamma}\right)\\
       &= O\left(n\left(\frac{1}{\gamma - 1/2} + (1-\gamma)^2 n p^{-2\gamma}\right)\right).
\end{align}
To get the final statement, take the square root for each bound, use that $\sqrt{a+b} \leq \sqrt{a} + \sqrt{b}$ for positive $a, b$, and divide by $\sqrt{n}$.
\end{proof}

We proceed to use our geometric tail bound on $((c_\gamma^{(p)})_j)_{j\geq 0}$ to derive a bound on $\| C_{\gamma}^{(p)}\|_{1\to 2}$.
\begin{lemma}[Bound on $1\to 2$ norm of $C_{\gamma}^{(p)}$]
\label{lem:l2-sens}
Let $0<\gamma<1$ and $n\ge p\ge2$, and let $C_{\gamma}^{(p)}$ be as in \Cref{def:gamma_bifr}.
Additionally let $\beta_{\gamma, p}$ be as in \Cref{lem:calpha-p-geometric}.
Then
\begin{equation}\label{eq:l2-sens}
    \|C_{\gamma}^{(p)}\|_{1\to2}^2
    \;\le\;
    1+\frac{1}{\Gamma(\gamma)^2}\,H_{p-1}^{(2-2\gamma)}
    +\frac{1}{\Gamma(\gamma)^2\,p^{2-2\gamma}}\cdot\frac{1}{\beta_{\gamma,p}\,(2-\beta_{\gamma,p})},
\end{equation}
where $H_m^{(s)}:=\sum_{j=1}^{m}j^{-s}$.
In particular, the following asymptotic bounds hold: 
\begin{equation}\label{eq:l2-sens-asympt}
    \|C_{\gamma}^{(p)}\|_{1\to 2} = \begin{cases}
        O\bigl(1/ \sqrt{\frac{1}{2}-\gamma}\bigr) &\text{for}\quad 0 < \gamma < \frac{1}{2} \\
        O\bigl(\sqrt{\log p}\bigr) &\text{for}\quad \gamma = \frac{1}{2} \\
        O\bigl(p^{\gamma - \frac{1}{2}} / \sqrt{(1-\gamma)(\gamma - \frac{1}{2})}\bigr) &\text{for}\quad \frac{1}{2} < \gamma < 1 \\
    \end{cases}
\end{equation}
\end{lemma}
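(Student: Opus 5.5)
Since $C_{\gamma}^{(p)}$ is lower-triangular Toeplitz with nonnegative, nonincreasing diagonals (\Cref{lem:calpha-p-monotone}), its largest column $\ell_2$ norm is attained by the first column. Hence
\[
\|C_{\gamma}^{(p)}\|_{1\to2}^2=\sum_{j=0}^{n-1}(c_\gamma^{(p)})_j^2 .
\]
We split this sum into three ranges: the term $j=0$, the head $1\le j\le p-1$, and the tail $j\ge p$.

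\textbf{Head and first term.} The term $j=0$ contributes exactly $1$. For $1\le j\le p-1$ the truncated recursion coincides with the full one, as shown in Step~1 of \Cref{lem:calpha-p-geometric}, so $(c_\gamma^{(p)})_j=(c_\gamma)_j$. By \Cref{lem:bounds-calpha}, $(c_\gamma)_j<1/(\Gamma(\gamma)j^{1-\gamma})$. Summing the squares gives the head contribution $\Gamma(\gamma)^{-2}H_{p-1}^{(2-2\gamma)}$.

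\textbf{Tail.} For $j\ge p$, \Cref{lem:calpha-p-geometric} gives $(c_\gamma^{(p)})_j\le (c_\gamma)_p(1-\beta)^{j-p}$ with $\beta=\beta_{\gamma,p}$. We extend the sum to infinity and use the geometric series
\[
\sum_{m\ge0}(1-\beta)^{2m}=\frac{1}{1-(1-\beta)^2}=\frac{1}{\beta(2-\beta)} .
\]
Bounding $(c_\gamma)_p^2<\Gamma(\gamma)^{-2}p^{-(2-2\gamma)}$ via \Cref{lem:bounds-calpha} then yields the third term of \eqref{eq:l2-sens}. This completes the exact bound.

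\textbf{Asymptotics.} We use \eqref{eq:beta-two-sided}, i.e.\ $\beta\ge(1-\gamma)/(8p)$, together with $2-\beta\ge1$. Then the tail term is at most $8p^{2\gamma-1}/(\Gamma(\gamma)^2(1-\gamma))$. We also use $\Gamma(\gamma)=\Gamma(1+\gamma)/\gamma$ with $\Gamma(1+\gamma)\ge 1/2$ (\Cref{lem:gamma-facts}), so $1/\Gamma(\gamma)^2\le 4\gamma^2\le 4$. The head sum is bounded by \Cref{lem:harm-sum-bound} with $s=2-2\gamma$, and we treat three cases.
\begin{itemize}
\item For $\gamma<1/2$ we have $s>1$, so $H\le 1+1/(1-2\gamma)=O(1/(\tfrac12-\gamma))$. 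The tail is $O(p^{2\gamma-1}/(1-\gamma))=O(1)$, since $1-\gamma>1/2$ and $p^{2\gamma-1}\le1$.
\item For $\gamma=1/2$ we have $H\le1+\log p$, and the tail is $O(1)$.
\item For $\gamma>1/2$ we have $H\le p^{2\gamma-1}/(2\gamma-1)$. The tail is $O(p^{2\gamma-1}/(1-\gamma))$. Both terms are $O\bigl(p^{2\gamma-1}/((1-\gamma)(\gamma-\tfrac12))\bigr)$, because $1-\gamma\le1$ and $\gamma-\tfrac12\le\tfrac12$. The constant $1$ is absorbed in the same way.
\end{itemize}
Taking square roots gives \eqref{eq:l2-sens-asympt}.

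The only delicate point is the tail term, and it is already handled by the geometric bound of \Cref{lem:calpha-p-geometric}. What remains is careful bookkeeping of the constants $\Gamma(\gamma)$ and $\beta$ in each regime, in particular checking that $1/\Gamma(\gamma)^2$ is uniformly bounded and that the $1/(1-\gamma)$ factor appears only in the $\gamma>1/2$ case.
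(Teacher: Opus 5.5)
Your proposal is correct and follows the paper's proof essentially step for step. You reduce to the first column, split the sum at $p$, and bound the head via $(c_\gamma)_j<1/(\Gamma(\gamma)j^{1-\gamma})$ and the tail via the geometric decay $(c_\gamma^{(p)})_j\le (c_\gamma)_p(1-\beta_{\gamma,p})^{j-p}$; then $\beta_{\gamma,p}\ge(1-\gamma)/(8p)$ and the harmonic-sum bounds, applied case by case, give the same coarse bound $1+4H_{p-1}^{(2-2\gamma)}+32p^{2\gamma-1}/(1-\gamma)$. The differences are cosmetic: you cite monotonicity where the paper cites nonnegativity, although neither is needed because the $j$-th column's squared $\ell_2$ norm is the partial sum $\sum_{r=0}^{n-1-j}(c_\gamma^{(p)})_r^2$. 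You also obtain $1/\Gamma(\gamma)^2\le 4$ via $\Gamma(\gamma)=\Gamma(1+\gamma)/\gamma$ rather than directly from $\Gamma(\gamma)\ge 1/2$.
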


\begin{proof}
Since $C_{\gamma}^{(p)}$ is lower-triangular Toeplitz and $(c_{\gamma}^{(p)})_j\ge0$, the column $\ell_2$-norms
are nonincreasing from left to right; hence
\begin{equation}
\|C_{\gamma}^{(p)}\|_{1\to2}^2=\max_{1\le j\le n}\|C_{\gamma}^{(p)}e_j\|_2^2=\|C_{\gamma}^{(p)}e_1\|_2^2
=\sum_{r=0}^{n-1}(c_{\gamma}^{(p)})_r^2.
\end{equation}
Split the sum at $p$:
\begin{equation}
\sum_{r=0}^{n-1}(c_{\gamma}^{(p)})_r^2
=\sum_{r=0}^{p-1}(c_{\gamma}^{(p)})_r^2+\sum_{r=p}^{n-1}(c_{\gamma}^{(p)})_r^2.
\end{equation}
For $0\le r\le p-1$ we have $(c_{\gamma}^{(p)})_r=(c_\gamma)_r$, and by \Cref{lem:bounds-calpha},
$(c_\gamma)_0=1$ and $(c_\gamma)_r\le(\Gamma(\gamma)r^{1-\gamma})^{-1}$ for $r\ge1$, hence
\begin{equation}
\sum_{r=0}^{p-1}(c_{\gamma}^{(p)})_r^2
\le 1+\frac{1}{\Gamma(\gamma)^2}\sum_{r=1}^{p-1}\frac{1}{r^{2-2\gamma}}
=1+\frac{1}{\Gamma(\gamma)^2}\,H_{p-1}^{(2-2\gamma)}.
\end{equation}
For $r\ge p$, \Cref{lem:calpha-p-geometric} gives
\begin{equation}
(c_{\gamma}^{(p)})_r\le (c_\gamma)_p(1-\beta_{\gamma,p})^{\,r-p},
\end{equation}
so using $(c_\gamma)_p\le (\Gamma(\gamma)p^{1-\gamma})^{-1}$ from \Cref{lem:bounds-calpha},
\begin{equation}
\sum_{r=p}^{n-1}(c_{\gamma}^{(p)})_r^2
\le (c_\gamma)_p^{\,2}\sum_{m=0}^{\infty}(1-\beta_{\gamma,p})^{2m}
=\frac{(c_\gamma)_p^{\,2}}{1-(1-\beta_{\gamma,p})^2}
\le \frac{1}{\Gamma(\gamma)^2\,p^{2-2\gamma}}\cdot\frac{1}{\beta_{\gamma,p}(2-\beta_{\gamma,p})}.
\end{equation}
Combining the two parts yields the bound in \eqref{eq:l2-sens}.

\noindent\emph{Asymptotic bounds.}
For the asymptotic statement, note that for $\gamma\in(0, 1)$, $\Gamma(\gamma) \geq 1/2$.
Additionally using that $\frac{1-\gamma}{8p} \leq \beta_{\gamma, p} \leq \frac{1-\gamma}{4p} \leq 1$, we can arrive at a coarser version of \eqref{eq:l2-sens}
\begin{equation}\label{eq:l2-sens-coarse}
    \|C_{\gamma}^{(p)}\|_{1\to2}^2
    \leq 1 + 4 H_{p-1}^{(2-2\gamma)} + \frac{32 p^{2\gamma - 1}}{1-\gamma}.
\end{equation}
To arrive at \Cref{eq:l2-sens-asympt}, we consider each setting of $\gamma$, beginning with $\gamma \in (0, 1/2)$.
Since $2-2\gamma > 1$, we have that $H_{p-1}^{(2-2\gamma)} \leq 1 + \frac{1}{1-2\gamma} = \frac{1-\gamma}{1/2 - \gamma}$, hence
\begin{align}
    \|C_{\gamma}^{(p)}\|_{1\to2}^2
    &\leq 1 + \frac{4(1-\gamma)}{1/2 - \gamma} + \frac{32 p^{-(1-2\gamma)}}{1-\gamma}\\
    &\leq 1 + \frac{4}{1/2 - \gamma} + 64 p^{-(1-2\gamma)}
    = O\left(\frac{1}{1/2-\gamma}\right),\label{eq:l2-sens-small-gamma}
\end{align}
where the second step used that $1/2 \leq 1-\gamma \leq 1$, and the last step that the exponent of $p$ is negative, and so the third term is $O(1)$.

For $\gamma = 1/2$, we again have that $H_{p-1}^{(2-2\gamma)} \leq 1 + \log p$ by \Cref{lem:harm-sum-bound}, and so
\begin{equation}\label{eq:l2-sens-half-gamma}
    \|C_{\gamma}^{(p)}\|_{1\to2}^2
    \leq 1 + 4(1+\log p) + 64
    = 69 + \log p
    = O(\log p)
\end{equation}

Lastly, for the case where $\gamma\in(1/2, 1)$, we have that $H_{p-1}^{(2-2\gamma)} \leq \frac{p^{2\gamma - 1}}{2\gamma - 1}$ by \Cref{lem:harm-sum-bound}, hence
\begin{equation}\label{eq:l2-sens-big-gamma}
    \|C_{\gamma}^{(p)}\|_{1\to 2}^2
    \leq  1 + \frac{2 p^{2\gamma - 1}}{\gamma - 1/2} + \frac{32p^{2\gamma -1}}{1-\gamma}
    = O\left(\frac{p^{2\gamma-1}}{(\gamma - 1/2)(1-\gamma)}\right).
\end{equation}
Combining \eqref{eq:l2-sens-small-gamma}, \eqref{eq:l2-sens-half-gamma} and \eqref{eq:l2-sens-big-gamma}, then taking a square-root, finishes the proof of \eqref{eq:l2-sens-asympt}.
\end{proof}

The bound on $\|C_{\gamma}^{(p)}\|_{1\to 1}$ is derived analogously to \Cref{lem:l2-sens}, and is given next.
\begin{lemma}[Bound on $1\to 1$ norm of $C_{\gamma}^{(p)}$]
\label{lem:l1-sens}
Let $0<\gamma<1$ and $n\ge p\ge2$, and let $C_{\gamma}^{(p)}$ be as in \Cref{def:gamma_bifr}.
Additionally let $\beta_{\gamma, p}$ be as in \Cref{lem:calpha-p-geometric}.
Then
\begin{equation}\label{eq:l1-sens}
    \|C_{\gamma}^{(p)}\|_{1\to1}
    \;\le\;
    1+\frac{1}{\Gamma(\gamma)}\,H_{p-1}^{(1-\gamma)}
    +\frac{1}{\Gamma(\gamma)\,p^{1-\gamma} \beta_{\gamma, p}},
\end{equation}
where $H_m^{(s)}:=\sum_{j=1}^{m}j^{-s}$.
In particular, the following asymptotic bound holds:
\begin{equation}\label{eq:l1-sens-asympt}
    \|C_{\gamma}^{(p)}\|_{1\to 1} = O\!\left(\frac{p^{\gamma}}{\gamma(1-\gamma)}\right).
\end{equation}
\end{lemma}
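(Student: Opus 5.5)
We follow the proof of \Cref{lem:l2-sens}, replacing squares by absolute values. Since $C_{\gamma}^{(p)}$ is lower-triangular Toeplitz with nonnegative entries (\Cref{lem:calpha-p-monotone}), the largest column $\ell_1$ norm is attained by the first column. Hence
\begin{equation*}
\|C_{\gamma}^{(p)}\|_{1\to1}=\sum_{r=0}^{n-1}(c_{\gamma}^{(p)})_r .
\end{equation*}
We split this sum at $r=p$.

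For $0\le r\le p-1$ we have $(c_{\gamma}^{(p)})_r=(c_\gamma)_r$. By \Cref{lem:bounds-calpha}, $(c_\gamma)_0=1$ and $(c_\gamma)_r\le 1/(\Gamma(\gamma)r^{1-\gamma})$ for $r\ge1$. The head therefore contributes at most $1+H_{p-1}^{(1-\gamma)}/\Gamma(\gamma)$.

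For $r\ge p$, \Cref{lem:calpha-p-geometric} gives $(c_{\gamma}^{(p)})_r\le (c_\gamma)_p(1-\beta_{\gamma,p})^{r-p}$. Summing the geometric series up to infinity bounds the tail by $(c_\gamma)_p/\beta_{\gamma,p}$. Applying $(c_\gamma)_p\le 1/(\Gamma(\gamma)p^{1-\gamma})$ once more gives exactly the third term of \eqref{eq:l1-sens}.

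For the asymptotic bound, we use that $1-\gamma<1$, so \Cref{lem:harm-sum-bound} gives $H_{p-1}^{(1-\gamma)}\le p^{\gamma}/\gamma$. We also use $\beta_{\gamma,p}\ge (1-\gamma)/(8p)$ from \eqref{eq:beta-two-sided}, which bounds the tail term by $8p^{\gamma}/((1-\gamma)\Gamma(\gamma))$.

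The one point needing care is the factor $1/\Gamma(\gamma)$. The fact $\Gamma(z)\ge 1/2$ on $(0,2)$ from \Cref{lem:gamma-facts} gives $1/\Gamma(\gamma)\le 2$. Alternatively, $1/\Gamma(\gamma)=\gamma/\Gamma(1+\gamma)\le 2\gamma$ is even sharper, but it is not needed. The three terms are then at most
\begin{equation*}
1+\frac{2p^{\gamma}}{\gamma}+\frac{16p^{\gamma}}{1-\gamma}.
\end{equation*}
Since $\gamma(1-\gamma)\le 1/4$, and $1/\gamma,\,1/(1-\gamma)\le 1/(\gamma(1-\gamma))$, all three are $O\bigl(p^{\gamma}/(\gamma(1-\gamma))\bigr)$. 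This is the claimed bound \eqref{eq:l1-sens-asympt}.

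No real obstacle is expected. The only subtlety is the tail: bounding the finite sum by the infinite geometric series is valid because all terms are nonnegative.
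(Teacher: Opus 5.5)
Your proposal is correct and follows the paper's proof essentially step for step. Both reduce to the first column by nonnegativity and split the sum at $p$. The head is bounded via the coefficient bounds for $(c_\gamma)_r$, and the tail by summing the geometric bound to infinity, giving $(c_\gamma)_p/\beta_{\gamma,p}$. The asymptotic form then follows from $\Gamma(\gamma)\ge 1/2$, $\beta_{\gamma,p}\ge(1-\gamma)/(8p)$ and $H_{p-1}^{(1-\gamma)}\le p^{\gamma}/\gamma$, which yields the same intermediate bound $1+2p^{\gamma}/\gamma+16p^{\gamma}/(1-\gamma)$.
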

\begin{proof}
Since $C_{\gamma}^{(p)}$ is lower-triangular Toeplitz and $(c_{\gamma}^{(p)})_j\ge0$, the column $\ell_1$-norms
are nonincreasing from left to right; hence
\begin{equation}
\|C_{\gamma}^{(p)}\|_{1\to1}=\max_{1\le j\le n}\|C_{\gamma}^{(p)}e_j\|_1=\|C_{\gamma}^{(p)}e_1\|_1
=\sum_{r=0}^{n-1}(c_{\gamma}^{(p)})_r.
\end{equation}
Split the sum at $p$:
\begin{equation}
\sum_{r=0}^{n-1}(c_{\gamma}^{(p)})_r
=\sum_{r=0}^{p-1}(c_{\gamma}^{(p)})_r + \sum_{r=p}^{n-1}(c_{\gamma}^{(p)})_r.
\end{equation}
For $0\le r\le p-1$ we have $(c_{\gamma}^{(p)})_r=(c_\gamma)_r$, and by \Cref{lem:bounds-calpha},
$(c_\gamma)_0=1$ and $(c_\gamma)_r\le(\Gamma(\gamma)r^{1-\gamma})^{-1}$ for $r\ge1$, hence
\begin{equation}
\sum_{r=0}^{p-1}(c_{\gamma}^{(p)})_r
\le 1+\frac{1}{\Gamma(\gamma)}\sum_{r=1}^{p-1}\frac{1}{r^{1-\gamma}}
=1+\frac{1}{\Gamma(\gamma)}\,H_{p-1}^{(1-\gamma)}.
\end{equation}
For $r\ge p$, \Cref{lem:calpha-p-geometric} gives
\begin{equation}
(c_{\gamma}^{(p)})_r\le (c_\gamma)_p(1-\beta_{\gamma,p})^{\,r-p},
\end{equation}
so using $(c_\gamma)_p\le (\Gamma(\gamma)p^{1-\gamma})^{-1}$ from \Cref{lem:bounds-calpha},
\begin{equation}
\sum_{r=p}^{n-1}(c_{\gamma}^{(p)})_r
\le (c_\gamma)_p\sum_{m=0}^{\infty}(1-\beta_{\gamma,p})^{m}
=\frac{(c_\gamma)_p}{1-(1-\beta_{\gamma,p})}
\le \frac{1}{\Gamma(\gamma)\,p^{1-\gamma}\,\beta_{\gamma,p}}.
\end{equation}
Combining the two parts yields the bound in \eqref{eq:l1-sens}.

\noindent\emph{Asymptotic bounds.}
For the asymptotic statement, note that for $\gamma\in(0, 1)$, $\Gamma(\gamma) \geq 1/2$.
Additionally using that $\beta_{\gamma, p} \geq \frac{1-\gamma}{8p}$, we can simplify \eqref{eq:l1-sens} further
\begin{equation}\label{eq:l1-sens-coarse}
    \|C_{\gamma}^{(p)}\|_{1\to 1}
    \leq 1 + 2 H_{p-1}^{(1-\gamma)} + \frac{16 p^{\gamma}}{1-\gamma}
    \leq 1 + \frac{2 p^{\gamma}}{\gamma} + \frac{16 p^{\gamma}}{1-\gamma}.
\end{equation}
where the second step uses \Cref{lem:harm-sum-bound} for $H_{p-1}^{(1-\gamma)} = p^{\gamma}/\gamma$ given that $1-\gamma \in (0,1)$.
The asymptotic statement follows from
\begin{equation}
    1 + \frac{2 p^{\gamma}}{\gamma} + \frac{16 p^{\gamma}}{1-\gamma}
    = \frac{\gamma(1-\gamma) + (2(1-\gamma) + 16\gamma) p^{\gamma}}{\gamma(1-\gamma)}
    \leq \frac{1 + 18 p^{\gamma}}{\gamma(1-\gamma)}
    = O\left(\frac{p^{\gamma}}{\gamma(1-\gamma)}\right),
\end{equation}
completing the proof.
\end{proof}

\subsection{Error Bounds}

Finally, having derived the requisite matrix norm bounds, we produce the final error bound.
As an intermediate step, we first state the bound on the sensitivity.
\begin{corollary}\label{cor:sens}
   Let $\gamma\in(0,1)$, $n \geq p \geq 2$ and $1 \leq k \leq \frac{n}{b}$.
   Then 
    \begin{equation}
    \mathrm{sens}_{k,b}(C_{\gamma}^{(p)}) = \begin{cases}
        O\!\left(\frac{\sqrt{k}}{\sqrt{\frac{1}{2} - \gamma}} + \sqrt{\frac{k}{b}}\cdot\frac{p^\gamma}{\gamma}\right) &\text{for}\quad 0 < \gamma < \frac{1}{2} \\
        O\!\left(\sqrt{k \log p} + \sqrt{\frac{kp}{b}}\right) &\text{for}\quad \gamma = \frac{1}{2}\\
        O\!\left(\frac{\sqrt{k} p^{\gamma - \frac{1}{2}}}{\sqrt{(1-\gamma)(\gamma - \frac{1}{2})}} + \sqrt{\frac{k}{b}}\cdot\frac{p^\gamma}{1-\gamma}\right) &\text{for}\quad \frac{1}{2} < \gamma < 1 \\
    \end{cases}
    \end{equation}
\end{corollary}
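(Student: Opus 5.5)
The corollary follows by inserting the operator-norm bounds of \Cref{lem:l2-sens} and \Cref{lem:l1-sens} into \Cref{lem:sens-toeplitz-bound}.

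\textbf{Applicability.} First I check that \Cref{lem:sens-toeplitz-bound} applies to $C_\gamma^{(p)}$. The matrix is lower-triangular Toeplitz by construction, and \Cref{lem:calpha-p-monotone} shows that its diagonals satisfy $1=(c_\gamma^{(p)})_0\ge (c_\gamma^{(p)})_1\ge\dots\ge 0$. Hence
\begin{equation*}
\mathrm{sens}_{k,b}(C_\gamma^{(p)})^2\le k\|C_\gamma^{(p)}\|_{1\to2}^2+\frac{k}{b}\|C_\gamma^{(p)}\|_{1\to1}^2 .
\end{equation*}
Using $\sqrt{a+b}\le\sqrt a+\sqrt b$, this gives
\begin{equation*}
\mathrm{sens}_{k,b}(C_\gamma^{(p)})\le\sqrt{k}\,\|C_\gamma^{(p)}\|_{1\to2}+\sqrt{k/b}\,\|C_\gamma^{(p)}\|_{1\to1}.
\end{equation*}

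\textbf{Case by case.} I then substitute the asymptotic forms \eqref{eq:l2-sens-asympt} and \eqref{eq:l1-sens-asympt} in each range of $\gamma$.

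\begin{itemize}
\item For $0<\gamma<\frac12$, the first term is $O(\sqrt{k}/\sqrt{1/2-\gamma})$. In the second term, $\frac{p^\gamma}{\gamma(1-\gamma)}\le \frac{2p^\gamma}{\gamma}$ since $1-\gamma\ge\frac12$, giving $\sqrt{k/b}\,p^\gamma/\gamma$.
\item For $\frac12<\gamma<1$, the first term is $\sqrt{k}\,p^{\gamma-1/2}/\sqrt{(1-\gamma)(\gamma-1/2)}$. In the second term, $\gamma\ge\frac12$ gives $\frac{p^\gamma}{\gamma(1-\gamma)}\le\frac{2p^\gamma}{1-\gamma}$.
\item For $\gamma=\frac12$, the first term is $O(\sqrt{k\log p})$. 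The second term is $O(\sqrt{k/b}\cdot p^{1/2}\cdot 4)=O(\sqrt{kp/b})$.
\end{itemize}

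The constants hidden in $O(\cdot)$ stay absolute and independent of $\gamma$. This holds because each lemma's asymptotic form was derived with explicit absolute constants from its coarse bound (\eqref{eq:l2-sens-coarse}, \eqref{eq:l1-sens-coarse}).

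\textbf{Main obstacle.} There is no real difficulty here. The one point needing care is that the $1\to1$ bound in \Cref{lem:l1-sens} carries both $1/\gamma$ and $1/(1-\gamma)$. In each half-range one of these factors is bounded by $2$, and that must be done explicitly so the stated case forms come out exactly.
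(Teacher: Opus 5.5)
Your proposal is correct and follows the paper's proof essentially step for step. The paper applies \Cref{lem:sens-toeplitz-bound}, splits the square root via $\sqrt{a+b}\le\sqrt{a}+\sqrt{b}$, inserts \Cref{lem:l2-sens} and \Cref{lem:l1-sens}, and absorbs $1/(1-\gamma)$ or $1/\gamma$ into the constant on the respective half-range, exactly as you do. The one addition is your explicit check, via \Cref{lem:calpha-p-monotone}, that $C_\gamma^{(p)}$ has nonnegative nonincreasing diagonals as \Cref{lem:sens-toeplitz-bound} requires; the paper leaves this implicit, so it is a worthwhile detail.
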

\begin{proof}
    Use \Cref{lem:sens-toeplitz-bound} after taking a square root (using $\sqrt{a+b}\leq \sqrt{a}+\sqrt{b}$ for positive $a,b$):
    \begin{equation}
        \mathrm{sens}_{k,b}(C_{\gamma}^{(p)})
        \leq \sqrt{k\,\|C_{\gamma}^{(p)}\|_{1\to 2}^2
        + \frac{k}{b}\, \|C_{\gamma}^{(p)}\|_{1\to 1}^2}
        \leq \sqrt{k}\,\|C_{\gamma}^{(p)}\|_{1\to 2}
        + \sqrt{\frac{k}{b}}\, \|C_{\gamma}^{(p)}\|_{1\to 1}
    \end{equation}
    and plug in \Cref{lem:l2-sens} and \Cref{lem:l1-sens}.
    The statement follows from additionally noting that $1/\gamma$ and $1/(1-\gamma)$ are $O(1)$ when $\gamma\in(\frac{1}{2}, 1)$ and $\gamma\in(0, \frac{1}{2})$ respectively.
\end{proof}

\begin{corollary}\label{cor:rmse}
   Let $\gamma\in(0,1)$, $n \geq p \geq 2$ and $1 \leq k \leq \frac{n}{b}$.
   Then 
    \begin{equation}
    \mathcal{E}(B_{\gamma}^{(p)}, C_{\gamma}^{(p)}) = \begin{cases}
        O\!\left(\frac{\sqrt{k} p^{\frac{1}{2}- \gamma}}{\frac{1}{2} - \gamma} + \frac{1}{\gamma}\sqrt{\frac{nk}{b}} + \sqrt{\frac{kp}{\gamma(\frac{1}{2}-\gamma)b}} + p^{-\gamma}\sqrt{\frac{nk}{\frac{1}{2}-\gamma}} \right) &\text{for}\quad 0 < \gamma < \frac{1}{2} \\
        O\!\left(\sqrt{k}\log p + \sqrt{\frac{nk}{b}} +  \sqrt{\frac{kp\log p}{b}} + \sqrt{\frac{nk\log p}{p}}\right) &\text{for}\quad \gamma = \frac{1}{2}\\
        O\!\left(\frac{\sqrt{k}p^{\gamma-\frac{1}{2}}}{(\gamma-\frac{1}{2})\sqrt{1-\gamma}}  + \sqrt{\frac{nk}{b}} + \frac{\sqrt{k}p^{\gamma}}{(1-\gamma)\sqrt{(\gamma - \frac{1}{2})b}} + \sqrt{\frac{(1-\gamma)nk}{(\gamma-\frac{1}{2})p}}\right) &\text{for}\quad \frac{1}{2} < \gamma < 1 \\
    \end{cases}
    \end{equation}
\end{corollary}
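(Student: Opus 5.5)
The plan is to multiply the Frobenius bound of \Cref{lem:frob-Bap-bound} (divided by $\sqrt n$) with the sensitivity bound of \Cref{cor:sens}. By definition \eqref{eq:rmse_def}, $\mathcal{E}(B_\gamma^{(p)},C_\gamma^{(p)}) = \frac{1}{\sqrt n}\|B_\gamma^{(p)}\|_F\cdot \mathrm{sens}_{k,b}(C_\gamma^{(p)})$. In each of the three regimes of $\gamma$ we will expand the product of the two two-term sums into four cross terms. Each cross term is then simplified using $p\le n$, $k\le n/b$, and the elementary facts that $1/\gamma=O(1)$ for $\gamma>1/2$ and $1/(1-\gamma)=O(1)$ for $\gamma<1/2$.

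\emph{Case $\gamma=1/2$.} Multiply $(\sqrt{\log p}+\sqrt{n/p})$ by $(\sqrt{k\log p}+\sqrt{kp/b})$. The four products are exactly $\sqrt{k}\log p$, $\sqrt{kp\log p/b}$, $\sqrt{nk\log p/p}$, and $\sqrt{nk/b}$, which is the claimed bound.

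\emph{Case $\gamma<1/2$.} Multiply $p^{-\gamma}\bigl(\sqrt{p/(\frac12-\gamma)}+\sqrt n\bigr)$ by $\frac{\sqrt k}{\sqrt{1/2-\gamma}}+\sqrt{k/b}\,\frac{p^\gamma}{\gamma}$. The first pairing gives $\sqrt k\,p^{1/2-\gamma}/(\frac12-\gamma)$. The second gives $\sqrt{kp/((\frac12-\gamma)b)}/\gamma$; this has $1/\gamma$ rather than the stated $1/\sqrt\gamma$, so we will recheck whether a $\sqrt\gamma$ can be recovered from $H^{(1-\gamma)}$. If it cannot, the stated form is still what we aim for up to this constant. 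The third gives $p^{-\gamma}\sqrt{nk/(\frac12-\gamma)}$, and the fourth gives $\frac1\gamma\sqrt{nk/b}$.

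\emph{Case $\gamma>1/2$.} Multiply $\frac{1}{\sqrt{\gamma-1/2}}+(1-\gamma)\sqrt n\,p^{-\gamma}$ by $\frac{\sqrt k\,p^{\gamma-1/2}}{\sqrt{(1-\gamma)(\gamma-1/2)}}+\sqrt{k/b}\,\frac{p^\gamma}{1-\gamma}$.
\begin{itemize}
\item $\sqrt k\,p^{\gamma-1/2}/((\gamma-\frac12)\sqrt{1-\gamma})$;
\item $\sqrt k\,p^\gamma/((1-\gamma)\sqrt{(\gamma-\frac12)b})$;
\item $\sqrt{(1-\gamma)nk/((\gamma-\frac12)p)}$, where the powers of $p$ combine as $p^{-\gamma}p^{\gamma-1/2}$;
\item $\sqrt{nk/b}$, where the $(1-\gamma)$ factors cancel exactly.
\end{itemize}

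The only delicate point is this bookkeeping of $\gamma$-dependent constants, in particular the $1/\gamma$ versus $1/\sqrt\gamma$ factor in the small-$\gamma$ cross term. Everything else is mechanical use of $\sqrt{a+b}\le\sqrt a+\sqrt b$.
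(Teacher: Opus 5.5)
Your route is the paper's own: its proof is the single line ``multiply \Cref{lem:frob-Bap-bound} by \Cref{cor:sens}.'' Your term-by-term bookkeeping for $\gamma=\frac12$ and $\gamma>\frac12$ is correct.

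\textbf{The gap is in the case $\gamma<\frac12$.} The third cross term you obtain is $X:=\frac1\gamma\sqrt{kp/((\frac12-\gamma)b)}$. The statement instead has $\sqrt{kp/(\gamma(\frac12-\gamma)b)}$. Your fallback, ``the stated form is still what we aim for up to this constant,'' does not hold. The ratio between the two is $1/\sqrt\gamma$, which is unbounded as $\gamma\to0$, and the paper's $O(\cdot)$ hides only constants independent of $\gamma$. Your proposed repair through $H^{(1-\gamma)}_{p-1}$ also fails in general. When $\gamma\log p\ge 1$, the integral lower bound gives $H^{(1-\gamma)}_{p-1}\ge (p^\gamma-1)/\gamma\ge(1-e^{-1})p^\gamma/\gamma$, so no factor of $\gamma$ can be recovered there.

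\textbf{The fix is to absorb $X$ using $p\le n$.} You listed this fact but never used it. Split at $\gamma=\frac14$:
\begin{itemize}
\item If $\gamma\le\frac14$, then $\frac12-\gamma\ge\frac14$, so $X\le\frac2\gamma\sqrt{kp/b}\le\frac2\gamma\sqrt{nk/b}$. This is twice the second stated term.
\item If $\frac14<\gamma<\frac12$, then $\frac1\gamma<\frac2{\sqrt\gamma}$, so $X<2\sqrt{kp/(\gamma(\frac12-\gamma)b)}$. This is twice the third stated term.
\end{itemize}
Hence $X=O\bigl(\frac1\gamma\sqrt{nk/b}+\sqrt{kp/(\gamma(\frac12-\gamma)b)}\bigr)$, and the stated bound follows. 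The paper's one-line proof silently skips this same point; you were right to flag it, but you needed to close it.

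\textbf{An alternative fix.} The removable $1/\gamma$ comes from the crude estimate $\Gamma(\gamma)\ge\frac12$ inside \Cref{lem:l1-sens}. Using \Cref{lem:gamma-facts}, $1/\Gamma(\gamma)=\gamma/\Gamma(1+\gamma)\le2\gamma$. This gives $\|C_\gamma^{(p)}\|_{1\to1}=O(p^\gamma/(1-\gamma))$ with no $1/\gamma$ at all, and hence a strictly stronger bound for $\gamma<\frac12$. The cost is that you must reprove the sensitivity corollary rather than simply invoke it.
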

\begin{proof}
    Statement follows from using that $\mathcal{E}(B_{\gamma}^{(p)}, C_{\gamma}^{(p)}) = \frac{1}{\sqrt{n}}\|B_\gamma^{(p)}\|_F\cdot\mathrm{sens}_{k,b}(C_{\gamma}^{(p)})$, then invoking \Cref{lem:frob-Bap-bound} and \Cref{cor:sens} for each of the factors.
\end{proof}
\Cref{cor:rmse} is the central result for the error.
For the particular choice of $\gamma = 1/2$, it recovers the error of BISR (\Cref{thm:multi_epoch_error_upper_bound}), as expected.

\begin{proof}[Proof of \Cref{thm:bifr-multi-participation}]
   The full proof relies on invoking \Cref{cor:rmse} for different choices of $\gamma$.
   
   \noindent\emph{Case $\gamma \approx 1/2$.}
   Define $\eta := \frac{q}{\log p}$ for a constant $0 < q \leq \frac{1}{4}$, such that $\eta \leq \frac{q}{\log 2} \leq \frac{2}{5}$.
   First, consider $\gamma = \frac{1}{2} - \eta \geq \frac{1}{10}$.
   Plugging this value of $\gamma$ into \Cref{cor:rmse},
   \begin{align}
        \mathcal{E}(B_{\gamma}^{(p)}, C_{\gamma}^{(p)})
        &= O\!\left(\frac{\sqrt{k} p^{\frac{1}{2}- \gamma}}{\frac{1}{2} - \gamma} + \frac{1}{\gamma}\sqrt{\frac{nk}{b}} + \sqrt{\frac{kp}{\gamma(\frac{1}{2}-\gamma)b}} + p^{-\gamma}\sqrt{\frac{nk}{\frac{1}{2}-\gamma}} \right)\\
        &= O\!\left(\frac{\sqrt{k} p^{\eta}}{\eta} + \sqrt{\frac{nk}{b}} + \sqrt{\frac{kp}{\eta b}} + p^{\eta}\sqrt{\frac{nk}{\eta p}} \right)
   \end{align}
   where we have used that $1/\gamma \leq 10 = O(1)$.
   Noting that $p^{\eta} = e^{(q/\log p)\log p} = e^q = O(1)$, the expression simplifies further to
   \begin{align}
        \mathcal{E}(B_{\gamma}^{(p)}, C_{\gamma}^{(p)})
        &= O\!\left(\frac{\sqrt{k}\log p}{q} + \sqrt{\frac{nk}{b}} + \sqrt{\frac{kp\log p}{q b}} + \sqrt{\frac{nk\log p}{q p}} \right)\\
        &= O\!\left(\sqrt{k}\log p + \sqrt{\frac{nk}{b}} + \sqrt{\frac{kp\log p}{b}} + \sqrt{\frac{nk\log p}{p}}\right)
   \end{align}
   Secondly, consider $\gamma = \frac{1}{2} + \eta$, and again invoke \Cref{cor:rmse}:
   \begin{align}
        \mathcal{E}(B_{\gamma}^{(p)}, C_{\gamma}^{(p)})
        &= O\!\left(\frac{\sqrt{k}p^{\gamma-\frac{1}{2}}}{(\gamma-\frac{1}{2})\sqrt{1-\gamma}}  + \sqrt{\frac{nk}{b}} + \frac{\sqrt{k}p^{\gamma}}{(1-\gamma)\sqrt{(\gamma - \frac{1}{2})b}} + \sqrt{\frac{(1-\gamma)nk}{(\gamma-\frac{1}{2})p}}\right)\\
        &= O\!\left(\frac{\sqrt{k}p^{\eta}}{\eta} + \sqrt{\frac{nk}{b}} + p^{\eta}\sqrt{\frac{kp}{\eta b}} + \sqrt{\frac{nk}{p\eta}}\right)\\
        &= O\!\left(\sqrt{k}\log p + \sqrt{\frac{nk}{b}} + \sqrt{\frac{kp\log p}{b}} + \sqrt{\frac{nk\log p}{p}}\right),
   \end{align}
   where the first step uses that $1-\gamma = \frac{1}{2} - \eta \geq \frac{1}{10}$, and the remaining steps that $p^{\eta} = O(1)$ and $1/\eta = \Theta(\log p)$.
   Finally, noting that the same error bound applies for every choice of $0 < q \leq 1/4$, and matches the error bound for $\gamma = 1/2$ in \Cref{cor:rmse}, it follows that this bound holds for all $\gamma\in [\frac{1}{2} - \frac{1}{4\log p}, \frac{1}{2} + \frac{1}{4\log p}]$.
   
   \noindent\emph{Case $\gamma \approx 1$.}
   Finally, we consider the case of $\gamma = 1 - \eta$ where $0 < \eta \leq \frac{1}{4}$, and hence $\gamma \in (\frac{3}{4}, 1)$.
   Invoking \Cref{cor:rmse},
   \begin{align}
        \mathcal{E}(B_{\gamma}^{(p)}, C_{\gamma}^{(p)})
        &= O\!\left(\frac{\sqrt{k}p^{\gamma-\frac{1}{2}}}{(\gamma-\frac{1}{2})\sqrt{1-\gamma}}  + \sqrt{\frac{nk}{b}} + \frac{\sqrt{k}p^{\gamma}}{(1-\gamma)\sqrt{(\gamma - \frac{1}{2})b}} + \sqrt{\frac{(1-\gamma)nk}{(\gamma-\frac{1}{2})p}}\right)\\
        &= O\!\left(p^{-\eta}\sqrt{\frac{kp}{\eta}} + \sqrt{\frac{nk}{b}} + \frac{p^{1-\eta}}{\eta}\cdot\sqrt{\frac{k}{b}} + \sqrt{\frac{\eta n k}{p}}\right).
    \end{align}
    where the last step used that $1/(\gamma - \frac{1}{2}) \leq 4 = O(1)$.
    For the final step, suppose that $p \leq \frac{\sqrt{n}}{4}$ and let $\eta = \frac{p}{\sqrt{n}} \leq \frac{1}{4}$.
    Proceeding with the derivation,
    \begin{align}
        \mathcal{E}(B_{\gamma}^{(p)}, C_{\gamma}^{(p)})
        &= O\!\left(p^{-\eta}\sqrt{\frac{kp}{\eta}} + \sqrt{\frac{nk}{b}} + \frac{p^{1-\eta}}{\eta}\cdot\sqrt{\frac{k}{b}} + \sqrt{\frac{\eta n k}{p}}\right)\\
        &= O\!\left(p^{-\eta}\sqrt{k}n^{\frac{1}{4}} + \sqrt{\frac{nk}{b}} + p^{-\eta}\sqrt{\frac{nk}{b}} + \sqrt{k} n^{\frac{1}{4}}\right)\\
        &= O\!\left(\sqrt{k} n^{\frac{1}{4}} + \sqrt{\frac{nk}{b}}\right).
    \end{align}
    where the last step used that $p^{-\eta} = O(1)$.
    This proves the second stated error bound.
    
    The only statement left to prove is the claim that the second error bound is smaller than the first error bound whenever applicable.
    Formally, given
    \begin{equation}
        P = O\!\left(\sqrt{k}n^{1/4} + \sqrt{\frac{nk}{b}}\right), 
        \quad Q = O\!\left(\sqrt{k}\log p + \sqrt{\frac{nk}{b}} + \sqrt{\frac{nk\log p}{p}} + \sqrt{\frac{kp\log p}{b}}\right),
    \end{equation}
    $P = o(Q)$ whenever $p \leq \sqrt{n}/4$.
    We prove this next.
    To conclude that $P = o(Q)$, it suffices to argue that one of the terms in $Q$ dominates $P$.
    Specifically, we will argue that $\sqrt{k} n^{1/4} = o(\sqrt{nk\log(p)/p})$, which, since both expressions share the term $\sqrt{nk/b}$, suffices to show $P=o(Q)$.
    Written explicitly,
    \begin{equation}\label{eq:dominance-suff}
        \sqrt{k} n^{1/4} = o\!\left(\sqrt{\frac{nk\log p}{p}}\right) \qquad \Leftrightarrow \qquad p = o(\sqrt{n}\log p)
    \end{equation}
    where the right-hand-side expression follows from $p=O(\sqrt{n})$.
    If $p=\omega(1)$ then $\frac{p}{\sqrt{n}\log p} \leq \frac{1}{4\log p} \to 0$; if $p=O(1)$ then $p=o(\sqrt{n})$ hence $ = o(\sqrt{n}\log p)$ trivially.
    This completes the proof.
\end{proof}

\end{document}